\title{An optimal regret  algorithm  for  bandit convex optimization}
\author{Elad Hazan \thanks{Princeton University, Email: ehazan@cs.princeton.edu } \and Yuanzhi Li \thanks{Princeton University, Email: yuanzhil@cs.princeton.edu}}
\date{\today}
\newtheorem{thm}{Theorem}[section]
\newtheorem{lem}{Lemma}[section]
\newtheorem{claim}{Claim}[section]
\newtheorem{defn}{Definition}[section]
\newtheorem{cor}{Corollary}[section]
\newcommand{\set}[1]{\mathcal{#1}}
\newtheorem{theorem}{Theorem}[section]
\newtheorem{lemma}[theorem]{Lemma}
\def\reals{{\mathbb R}}
\def\R{{\mathcal R}}
\DeclareMathOperator{\grid}{grid}
\newcommand{\ellipsoid}{{\mathcal E}}
\newcommand{\conv}{\mathop{\mbox{\rm conv}}}
\newcommand{\poly}{\mathop{\mbox{\rm poly}}}
\newcommand{\ignore}[1]{}
\def\reals{{\mathbb R}}
\newcommand\ball{\mathbb{B}}
\def\bold0{\mathbf{0}}
\newcommand\mycases[4] {{
\left\{
\begin{array}{ll}
    {#1} & {#2} \\\\
    {#3} & {#4}
\end{array}
\right. }}
\def\vol{\mbox{\rm vol}}
\newcommand{\K}{\ensuremath{\mathcal K}}
\def\regret{\ensuremath{\mbox{Regret}}}
\def\epsilon{\varepsilon}
\def\R{\ensuremath{\mathcal R}}
\newcommand{\distance}{\textsf{dist}}
\newcommand{\Real}{\mathbb{R}}
\newcommand{\DM}{\text{DecideMove}}
\newcommand{\distanceRatio}{\gamma}
\newcommand{\tO}{\tilde{O}}
\newcommand{\tf}{\tilde{f}}
\newcommand{\tF}{\tilde{F}}
\newcommand{\cube}{\mathcal{Q}}
\newcommand{\xint}{x_{\textsf{m}}}
\newcommand{\lvalue}{\left(2^{d^4} (\log T)^{2d} \log \frac{1}{\delta} \right)\sqrt{T}}
\newcommand{\LRA}{\set{A}}
\newcommand{\vLRA}{v}
\newcommand{\varLRA}{\sigma}
\newcommand{\epo}{\tau}
\newcommand{\Epo}{\Gamma}
\newcommand{\LCE}{\textsf{LCE}}
\newcommand{\SLCE}{\textsf{SLCE}}
\newcommand{\FLCE}{F_{\LCE}}
\newcommand{\FSLCE}{F_{\SLCE}}
\newcommand{\blowUp}{\eta}
\newcommand{\centerShrink}{\beta}
\newcommand{\levelSetValue}{\ell}
\newcommand{\convexSet}{\mathcal{K}}
\newcommand{\hconvexSet}{{\convexSet}}
\newcommand{\gridScale}{\alpha}
\newcommand{\centerPoint}{x}
\newcommand{\radiusSearch}{r}
\newcommand{\extendRatio}{\gamma}
\newcommand{\twoExtendRatio}{ \gamma}
\newcommand{\numEpoch}{8d^2 \log T}
\newcommand{\valObtain}{1024 d^3 \log T}
\newcommand{\textendnumEpoch}{4096d^4 \log T}
\newcommand{\extendnumEpoch}{2048d^4 \log T}
\newcommand{\boundingBox}{\set{C}}
\begin{document}

\maketitle

\begin{abstract}
We consider the problem of online convex optimization against an arbitrary adversary with bandit feedback, known as bandit convex optimization. We give the first $\tilde{O}(\sqrt{T})$-regret algorithm for this setting  based on a novel application of the ellipsoid method to  online learning. This bound is known to be  tight up to logarithmic factors.   Our analysis introduces new tools in  discrete convex geometry. 
\end{abstract}

\newcommand{\MVEE}{\textsf{MVEE}}


\section{Introduction}

In the setting of Bandit Convex Optimization (BCO), a learner repeatedly chooses a point in a convex decision set.  The learner then observes a loss which is equal to the value of an adversarially chosen  convex loss function.  The only feedback available to the learner is the loss --- a single real number. Her goal is to minimize the regret, defined to be  the difference between the sum of losses incurred  and the loss of the best fixed decision (point in the decision set) in hindsight. 

This fundamental decision making setting is extremely general, and has been used to efficiently model online prediction problems with limited feedback such as online routing, online ranking and ad placement,   and many others (see \cite{BubeckSurvey} and \cite{HazanBook} chapter 6 for applications and a detailed survey of BCO).  This generality and importance is accompanied by significant difficulties: BCO allows for an adversarially chosen cost functions, and extremely limited information is available to the leaner in the form of a single scalar per iteration.  The extreme exploration-exploitation tradeoff common in bandit problems  is accompanied by the additional challenge of polynomial time convex optimization to make this problem one of the most difficult encountered in learning theory.  

As such, the setting of BCO has been extremely well studied in recent years and the state-of-the-art significantly advanced. For example, in case the adversarial cost functions are linear, efficient algorithms are known that guarantee near-optimal regret bounds \cite{Abernethy08,BubeckCK12,HazanK14}.  A host of techniques have been developed to tackle the difficulties of partial information, exploration-exploitation and efficient convex optimization. Indeed, most known optimization and algorithmic techniques have been applied, including interior point methods \cite{Abernethy08}, random walk optimization \cite{NarayananR10}, continuous multiplicative updates \cite{Dani07}, random perturbation  \cite{AwerbuchK08}, iterative optimization methods \cite{Flaxman} and many more.

Despite this impressive and the long lasting effort and progress, the main question of BCO remains unresolved: construct an efficient and optimal regret algorithm for the full setting of BCO. Even the optimal regret attainable is yet unresolved in the full adversarial setting. 

A significant breakthrough was recently made by  \cite{BubeckDKP15}, who show that in the oblivious setting and in the special case of 1-dimensional BCO, $O(\sqrt{T})$ regret is attainable. Their result is existential in nature, showing that the minimax regret for the oblivious BCO setting (in which the adversary decides upon a distribution over cost functions independently of the learners' actions) behaves as $\tilde{\Theta}(\sqrt{T})$. This result was very recently extended to any dimension by \cite{BubeckE15}, still with an existential bound rather than an explicit algorithm and in the oblivious setting. 

\medskip

In this paper we advance the state of the art in bandit convex optimization and show the following results:
\begin{enumerate}
\item
We show that  minimax regret for the full {\bf adversarial} BCO setting is $\tilde{\Theta}(\sqrt{T})$.  
\item
We give an explicit algorithm attaining this regret bound. Such an explicit algorithm was unknown previously even for the oblivious setting. 
\item
The algorithm guarantees $\tilde{\Theta}(\sqrt{T})$ regret with high probability and exponentially decaying tails. Specifically, the algorithm guarantees regret of $\tilde{\Theta}(\sqrt{T} \log \frac{1}{\delta})$ with probability at least $1-\delta$. 
\end{enumerate}

It is known that any algorithm for BCO must suffer regret $\Omega(\sqrt{T})$ in the worst case, even for oblivious adversaries and linear cost functions. Thus, up to logarithmic factors, our results close the gap of the attainable regret  in terms of the number of iterations. 

To obtain these results we introduce some new techniques into online learning, namely a novel online variant of the ellipsoid algorithm, and define some new notions in discrete convex geometry. 

\paragraph{What remains open?} 
Our algorithms depend exponentially on the dimensionality of the decision set,  both in terms of regret bounds as well as in  computational complexity. As of the time of writing, we do not know whether this dependencies are tight or can be improved to be polynomial in terms of the dimension, and we leave it as an open problem to  resolve this question\footnote{In the oblivious setting \cite{BubeckE15} show that the regret behaves polynomially in the dimension. It is not clear if this result can be extended to the adversarial setting.}.

\subsection{Prior work} 

The best known upper bound in the regret attainable for adversarial BCO with general convex loss functions is $\tilde{O}(T^{5/6})$ due to  \cite{Flaxman} and  \cite{Klien04} \footnote{although not specified precisely to the adversarial setting, this result is implicit in these works.}.  A lower bound of   $\Omega(\sqrt{T})$  is folklore, even the easier full-information setting of online convex optimization, see e.g. \cite{HazanBook}.

The special case of bandit linear optimization (BCO in case where the adversary is limited to using linear losses) is significantly simper. Informally, this is since the average of the function value on a sphere around a center point equals the value of the function in the center, regardless of how large is the sphere. This allows for very efficient exploration, and was first used by \cite{Dani07} to devise the  Geometric Hedge algorithm that achieves an optimal regret  rate of $\tO(\sqrt{T})$. An efficient algorithm inspired by interior point methods was later given by \cite{Abernethy08}  with the same optimal regret bound. Further improvements in terms of the dimension and other constants were subsequently given in \cite{BubeckCK12,HazanK14}.

The first gradient-descent-based method for BCO was given by \cite{Flaxman}. Their regret bound was subsequently improved for various special cases of loss functions using ideas from \cite{Abernethy08}.  For  convex and smooth losses, \cite{Saha11} attained an upper bound on the regret of of $\tO(T^{2/3})$. This was recently improved to  by \cite{DekelEK15} to $\tilde{O}(T^{5/8})$.  \cite{Ofer10} obtained a regret bound of $\tO(T^{2/3})$ for  strongly-convex losses.  
 For the special case of   strongly-convex and smooth losses, \cite{Ofer10}  obtained a regret of $\tO(\sqrt{T})$ in the unconstrained case, and \cite{HazanL14} obtain the same rate even in the constrained cased.  \cite{Shamir} gives a  lower bound of $\Omega(\sqrt{T})$  for the setting of strongly-convex and smooth BCO. 
 
A comprehensive survey by Bubeck and Cesa-Bianchi \cite{BubeckSurvey}, provides a review of the bandit optimization literature in both stochastic and online setting.

Another very relevant line of work is that on zero-order convex optimization. This is the  setting of convex optimization in which the only information available to the optimizer is a valuation oracle that given $x \in \K$ for some convex set $\K \subseteq \reals^d$,  returns $f(x)$ for some convex function $f: \K \mapsto \reals$ (or a noisy estimate of this number). This is considered one of the hardest areas in convex optimization (although strictly a special case of BCO), and a significant body of work has culminated in a polynomial time algorithm, see \cite{conn2009introduction}. Recently, \cite{Agarwal13}  give a polynomial time algorithm for regret minimization in the stochastic setting of zero-order optimization, greatly improving upon the known running times.

\subsection{Paper structure}

In the next section we give some basic definitions and constructs that will be of use. In section \ref{sec:regression} we survey a natural approach, motivated by zero-order optimization, and explain why completely new tools are necessary to apply it. We proceed to give the new mathematical constructions for discrete convex geometry in section \ref{sec:geometry}. This is followed by our main technical lemma, the discretization lemma, in section \ref{sec:discretization}. We proceed to give the new algorithm  and the main result statement in section \ref{sec:algorithm}.

\section{Preliminaries} \label{sec:preliminaries}

The setting of bandit convex optimization (BCO) is a repeated game between an online learner and an adversary (see e.g.  \cite{HazanBook} chapter 6). Iteratively, the learner makes a decision which is a point in a convex decision set, which is a subset of Euclidean space $x_t \in \K \subseteq \reals^d$. Meanwhile, the adversary responds with an arbitrary Lipschitz convex loss function $f_t: \K \mapsto \reals$. The only feedback available to the learner is the loss, $f_t(x_t) \in \reals$, and her goal is to minimize regret, defined as
$$ \R_T = \sum_t f_t(x_t) - \min_{x^* \in \K} \sum_t f_t(x^*) $$

Let $\K \subseteq \reals^d$ be a convex compact and closed subset in Euclidean space. 
We denote by $\ellipsoid_\K$ the minimal volume enclosing ellipsoid (\MVEE) in $\K$, also known as the John ellipsoid \cite{John48,Ball97}. For simplicity, assume that $\ellipsoid_\K$ is centered at zero. 

Given an ellipsoid $\ellipsoid = \{\sum_i \alpha_i v_i \,:\, \sum_i \alpha_i^2 \leq 1\}$, we shall use the notation $\|x\|_{\ellipsoid} \equiv \sqrt{x^\top (VV^\top)^{-1} x}$ to denote the (Minkowski) semi-norm defined by the ellipsoid, where $V$ is the matrix with the vectors $v_i$'s as columns.

John's theorem says that if we shrink \MVEE \ of $\K$ by a factor of $1/d$, then it will be inside $\set{K}$. For connivence, we denote by $\|\cdot\|_\K$ the norm according to $\frac{1}{d}\ellipsoid_\K$, which is the matrix norm corresponding to the (shrinked by factor $1/d$) $\MVEE$ ellipsoid of $\K$ . To be specific, Let $\ellipsoid$ be the \MVEE \ of $\K$, 
$$\| x\|_{\K} = d \| x \|_{\ellipsoid} = \| x \|_{\frac{1}{d}\ellipsoid}  $$

We use $ d \| x \|_{\ellipsoid}$ inside of $ \| x \|_{\ellipsoid}$ merely to insure $\forall x \notin \set{K}$, $\| x\|_{\K}  \ge 1$, which simplifies our expression.

\paragraph{Enclosing  box.} Denote by $C_\K$ the bounding box of the ellipsoid $\ellipsoid_\K$, which is obtained by the box with axis parallel to the eigenpoles of $\ellipsoid_\K$. 
The containing  box $C_\K$ can be computed by first computing $\ellipsoid_\K$, then the diagonal  transformation of this ellipsoid into a ball, computing the minimal enclosing cube of this ball, and performing the inverse diagonal transformation into a box.

\begin{defn}[Minkowski Distance of a convex set]
Given a convex set $\set{K} \subset \reals^d$ and $x \in \reals^d$, the Minkowski distance $\distanceRatio(x, \set{K})$ is defined as
$$\distanceRatio(x, \set{K}) = || x  - x_0 ||_{\K - x_0}$$
Where $x_0$ is the center of the $\MVEE$ of $\K$. $\K - x_0$ denotes shifting $\K$ by $-x_0$ (so its \MVEE \ is centered at zero)
\end{defn}

\begin{defn}[Scaled set]
For $\beta > 0 $, define $\centerShrink \set{K}$ as the scaled set \footnote{According to our definition of $\distanceRatio$, $1 \K \subseteq \K \subseteq d\K$}
$$ \centerShrink \set{K} = \{ y \mid \distanceRatio(y, \set{K})  \le \centerShrink \} $$ 
\end{defn}

Henceforth we will require a discrete representation of convex sets, which we call grids, as  constructed in Algorithm \ref{alg:grid}. 

\begin{algorithm}[h!]
\caption{ construct grid \label{alg:grid} } 
\begin{algorithmic}[1]
\STATE Input: convex set ${\K} \in \reals^d$, resolution $\gridScale$. 
\STATE Compute the \MVEE \ $\ellipsoid'$ of ${\convexSet}$. Let $\ellipsoid = \frac{1}{d} \ellipsoid'$
\STATE Let $A$ be the (unique) linear transformation such that $A(\ellipsoid) = \ball_{\gridScale}(0)$ (unit ball of radius $ \gridScale$ centered at 0). 
\STATE Let $\set{Z}^d = \{ (x_1,...,x_d)  , \ x_i \in Z \}$ be $d$-dimensional integer lattice.
\STATE Output: $\grid = A^{-1}(\set{Z}) \cap { \convexSet}  $.
\end{algorithmic}
\end{algorithm}

\begin{claim} For every $\K \in \reals^d$, $\grid = \grid(\K, \gridScale)$ contains at most $(2d \gridScale)^d$ many  points 
\end{claim}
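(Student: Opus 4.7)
The plan is to leverage John's theorem together with the invertibility of the transformation $A$ constructed in Algorithm \ref{alg:grid}, reducing the count to a straightforward lattice-point estimate inside a ball.

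First I would use John's theorem to locate $\K$ inside its \MVEE: since $\ellipsoid' = d\ellipsoid$ is the \MVEE\ of $\K$, we have $\K \subseteq \ellipsoid' = d\ellipsoid$. The map $A$ is linear and was chosen so that $A(\ellipsoid) = \ball_{\gridScale}(0)$, hence
\[
A(\K) \;\subseteq\; A(d\ellipsoid) \;=\; d\cdot A(\ellipsoid) \;=\; \ball_{d\gridScale}(0).
\]
In words, once we apply $A$, the convex body $\K$ is trapped inside the Euclidean ball of radius $d\gridScale$ centered at the origin.

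Second I would translate the grid count into a lattice-point count. By construction, $\grid = A^{-1}(\set{Z}^d)\cap \K$, and since $A$ is invertible, the map $A$ restricted to $\grid$ is a bijection onto $A(\K)\cap \set{Z}^d$. Combined with the preceding inclusion,
\[
|\grid| \;=\; |A(\K)\cap \set{Z}^d| \;\le\; |\,\ball_{d\gridScale}(0)\cap \set{Z}^d\,|.
\]

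Finally I would bound the number of integer lattice points inside $\ball_{d\gridScale}(0)$ by the simplest possible volumetric/box argument: the ball is contained in the axis-aligned hypercube $[-d\gridScale,d\gridScale]^d$, and the number of integers in each coordinate interval $[-d\gridScale,d\gridScale]$ is at most $2d\gridScale$ (up to a small rounding that is absorbed by the regime $d\gridScale\ge 1$, which is the only regime of interest). Taking the $d$-fold product yields $|\grid|\le (2d\gridScale)^d$.

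The argument is essentially routine; the only subtlety is the mild $+1$ rounding in the one-dimensional lattice-point count, which is harmless given the natural scale of $\gridScale$ used in Algorithm \ref{alg:grid}. No serious technical obstacle is expected.
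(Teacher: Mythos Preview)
The paper states this claim without proof, treating it as immediate from the construction. Your argument is correct and is precisely the intended one: John's theorem gives $\K\subseteq d\ellipsoid$, the linear map $A$ sends this to $\ball_{d\gridScale}(0)$, and the lattice-point count in the enclosing cube $[-d\gridScale,d\gridScale]^d$ gives the bound. Your remark about the $+1$ rounding is apt; the claim as written is a crude estimate meant for the regime $\gridScale \gg 1$ used later (cf.\ the parameter choice $\gridScale = 2^{3d^2}\log^3 T$), so the off-by-one is absorbed.
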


\begin{lem}[Property of the grid] \label{property:grid}

Let $\hconvexSet' \subseteq \convexSet \subseteq \reals^d$ \footnote{We will apply the lemma to $\hconvexSet' $ being our working Ellipsoid and $\K$ being the original input convex set} be two convex sets. For every $\centerShrink ,\extendRatio$ such that  $\centerShrink > \extendRatio > 1$, $\centerShrink > d$, for every $\gridScale \ge  2( \extendRatio+1)    \centerShrink^2 \sqrt{d}$ such that the following holds.  Let $\grid = \grid(\centerShrink \hconvexSet' \cap \convexSet, \gridScale)$, then we have:
\begin{enumerate}
\item \label{property1}
For every $x \in \hconvexSet'$: $\exists x_g \in \grid $ such that $x_g + \extendRatio (x_g - x) \in  \frac{1}{2 \centerShrink}{\hconvexSet'}$
\item
For every $x \notin \hconvexSet'$, $x \in \hconvexSet$: $\exists x_g \in \grid $ such that $x_g +  \frac{\extendRatio}{\distanceRatio(x,  \hconvexSet')} (x_g - x) \in \frac{1}{2 \centerShrink}{\hconvexSet'}$
\end{enumerate}
\end{lem}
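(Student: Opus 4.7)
The plan is to reduce the lemma, in both cases, to approximating a carefully chosen target point by a grid point. Let $c_1$ be the center of the MVEE $E_1$ of $\hconvexSet'$, and let $E_2$ (with center $c_2$) be the MVEE of $\centerShrink\hconvexSet' \cap \convexSet$. Given $x$, set $\lambda = \extendRatio$ in case~\ref{property1} and $\lambda = \extendRatio/\distanceRatio(x,\hconvexSet')$ in case~2, and define $x_g^{*} := \tfrac{1}{1+\lambda}c_1 + \tfrac{\lambda}{1+\lambda}x$, chosen so that $x_g^{*} + \lambda(x_g^{*} - x) = c_1$. For any grid candidate $x_g$ we then have the identity $x_g + \lambda(x_g - x) - c_1 = (1+\lambda)(x_g - x_g^{*})$, and since $\distanceRatio(y,\hconvexSet') = d\,\|y - c_1\|_{E_1 - c_1}$, the desired inclusion $\distanceRatio(x_g + \lambda(x_g - x),\hconvexSet') \le \tfrac{1}{2\centerShrink}$ reduces to $\|x_g - x_g^{*}\|_{E_1 - c_1} \le \tfrac{1}{2d\centerShrink(1+\lambda)}$. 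A convexity check using $\distanceRatio(c_1,\hconvexSet') = 0$ gives $\distanceRatio(x_g^{*}, \hconvexSet') < \extendRatio < \centerShrink$ in both cases, and $x_g^{*} \in \convexSet$ because it is a convex combination of $\convexSet$-points, so $x_g^{*}$ sits inside $\centerShrink\hconvexSet' \cap \convexSet$ with comfortable margin.

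The key geometric ingredient is the ellipsoid inclusion $E_2 - c_2 \subseteq \centerShrink\,(E_1 - c_1)$. To prove it, John's theorem applied to the outer set gives $\tfrac{1}{d}(E_2 - c_2) + c_2 \subseteq \centerShrink\hconvexSet' \cap \convexSet \subseteq \centerShrink\hconvexSet' = \tfrac{\centerShrink}{d}(E_1 - c_1) + c_1$, which rearranges to $(c_2 - c_1) + \tfrac{1}{d}(E_2 - c_2) \subseteq \tfrac{\centerShrink}{d}(E_1 - c_1)$. The central symmetry of $E_2 - c_2$ now lets us symmetrize: for every $v \in \tfrac{1}{d}(E_2 - c_2)$ both $v + (c_2 - c_1)$ and $-v + (c_2 - c_1)$ lie in $\tfrac{\centerShrink}{d}(E_1 - c_1)$, so their half-difference $v$ lies in $\tfrac{\centerShrink}{d}(E_1 - c_1)$. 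This yields $\|w\|_{E_1 - c_1} \le \centerShrink\,\|w\|_{E_2 - c_2}$ for every $w$. Combining with the grid guarantee that the closest lattice point satisfies $\|x_g - x_g^{*}\|_\ellipsoid \le \sqrt{d}/(2\gridScale)$, equivalently $\|x_g - x_g^{*}\|_{E_2 - c_2} \le 1/(2\sqrt{d}\,\gridScale)$ (since $\|\cdot\|_\ellipsoid = d\,\|\cdot\|_{E_2 - c_2}$), we get $\|x_g - x_g^{*}\|_{E_1 - c_1} \le \centerShrink/(2\sqrt{d}\,\gridScale)$, and therefore
\[
\distanceRatio(x_g + \lambda(x_g - x), \hconvexSet') \;=\; d(1+\lambda)\|x_g - x_g^{*}\|_{E_1 - c_1} \;\le\; \tfrac{\sqrt{d}\,(1+\lambda)\,\centerShrink}{2\gridScale},
\]
which is at most $\tfrac{1}{2\centerShrink}$ under the hypothesis $\gridScale \ge 2(\extendRatio+1)\centerShrink^2\sqrt{d}$, since $1+\lambda \le 1+\extendRatio$.

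The main obstacle I expect is the ellipsoid inclusion itself: MVEE is not monotone with respect to intersection, so one cannot compare $E_2$ to $E_1$ directly, and without a clean norm bound the whole approach collapses. The symmetrization step --- which works precisely because only the centered shape of $E_2$ enters the Minkowski semi-norm, not the location $c_2$ --- is the crucial move that converts the set-theoretic containment $\centerShrink\hconvexSet' \cap \convexSet \subseteq \centerShrink\hconvexSet'$ into the clean factor-$\centerShrink$ norm inequality. A lesser bookkeeping issue is verifying that the nearest lattice point to $x_g^{*}$ genuinely lies inside the input set of $\grid$; this is handled by observing that the grid spacing in $\ellipsoid$-norm is dwarfed by the $(\centerShrink-\extendRatio)/(d\centerShrink)$-scale margin of $x_g^{*}$ established above, and the $\convexSet$-boundary is treated similarly using $c_1 \in \hconvexSet' \subseteq \convexSet$.
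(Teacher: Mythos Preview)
Your proof is correct and follows the same overall scheme as the paper's --- choose a target point whose $\lambda$-extension from $x$ hits a center, approximate it by a grid point, and bound the displacement --- but with two differences worth noting. First, you pull $x$ toward $c_1$, the center of $E_1$, whereas the paper (after normalizing so that the MVEE of $\centerShrink\hconvexSet'\cap\convexSet$ is a ball at the origin) pulls toward $c_2$; your choice is the more natural one, since $c_1$ is precisely the center of the target set $\tfrac{1}{2\centerShrink}\hconvexSet'$, and this makes the error identity $x_g+\lambda(x_g-x)-c_1=(1+\lambda)(x_g-x_g^*)$ land exactly where it is needed. Second, your explicit symmetrization step proving $E_2-c_2\subseteq\centerShrink(E_1-c_1)$ is a genuine contribution: the paper's argument writes ``$\ball_\alpha(0)\subseteq\centerShrink\hconvexSet'$ implies $\ball_{\alpha/\centerShrink}(0)\subseteq\hconvexSet'$'' and ``$\tfrac{1}{2\centerShrink}\hconvexSet'\supseteq\tfrac{1}{2\centerShrink^2}\ball_\alpha(0)$'', both of which silently identify $c_1$ with $c_2$; your centrally-symmetric averaging trick is exactly what is needed to bridge the two centers rigorously.

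One small correction: in case~\ref{property1} your claim $\distanceRatio(x_g^*,\hconvexSet')<\extendRatio$ does not follow, since $x\in\hconvexSet'$ only gives $\distanceRatio(x,\hconvexSet')\le d$ and hence $\distanceRatio(x_g^*,\hconvexSet')\le\tfrac{\extendRatio d}{1+\extendRatio}<d$, not necessarily $<\extendRatio$. This does not affect the argument --- the relevant bound $\distanceRatio(x_g^*,\hconvexSet')<d<\centerShrink$ still holds --- but your stated margin $(\centerShrink-\extendRatio)/(d\centerShrink)$ should be $(\centerShrink-d)/(d\centerShrink)$ in that case. Since the lemma's hypothesis is only the strict inequality $\centerShrink>d$, the ``nearest lattice point stays inside'' step is formally delicate in both your proof and the paper's; in the paper's intended application $\centerShrink\gg d$ and the factor of $2$ in the bound on $\gridScale$ absorbs this comfortably.
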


\begin{center}
\begin{figure}
\begin{center}
\caption{The property of the grid}
\includegraphics[width=0.6\textwidth]{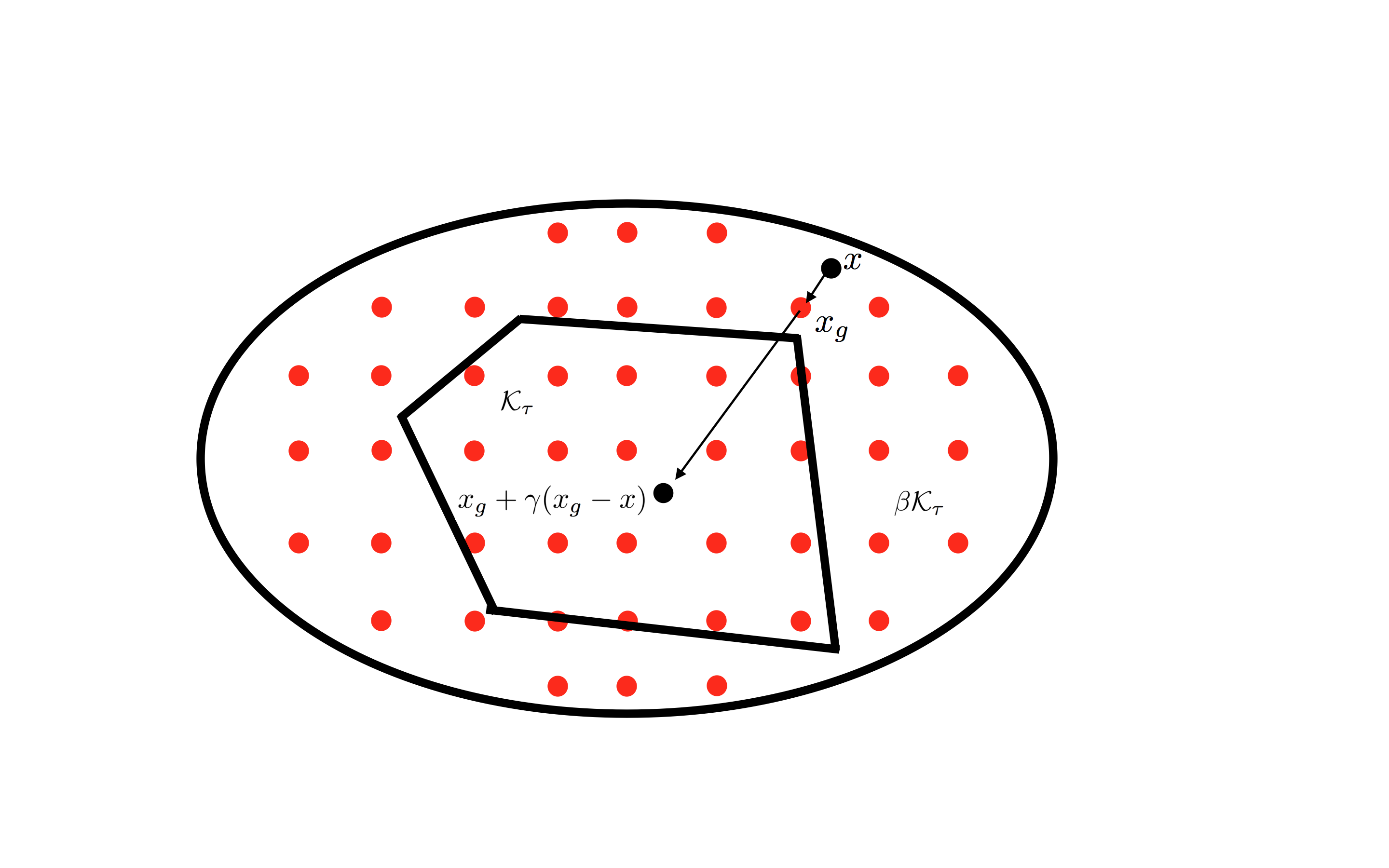}
\end{center}
\end{figure}
\end{center}

\begin{proof}[Proof of Lemma \ref{property:grid}]

Since $\beta > d$, by John's theorem, $\K' \subset \beta  \K'$. Moreover, since we only interested in the distance ratio, we can assume that the \MVEE \ $\ellipsoid'$ of $\centerShrink \hconvexSet' \cap \convexSet$ is the ball centered at $0$ of radius $d \gridScale$, and $\grid$ are all the integer points intersected with $\centerShrink \hconvexSet' \cap \convexSet$. Let $\ellipsoid = \frac{1}{d} \ellipsoid' = \ball_{\alpha}(0)$, by John's Theorem, we know that $\ellipsoid \subseteq \centerShrink \hconvexSet' \cap \convexSet \subseteq d \ellipsoid$.

(a). For every $x \in \hconvexSet'$, consider point $z = \frac{\extendRatio }{\extendRatio + 1} x$. Since $ \ellipsoid = \ball_{\alpha}(0) \subseteq  \centerShrink \hconvexSet' $, we know that $ \ball_{\frac{\alpha}{\beta}}(0) \subseteq \K'$. Therefore, $\ball_{\frac{\alpha}{\extendRatio \beta}}(z) \subseteq \K'$, which implies when $\alpha \ge \extendRatio \beta\sqrt{d}$, we can find $x_g \in \grid$ such that $\|x_g - z \|_2 \le \sqrt{d}$. Therefore, 
\begin{eqnarray*}
\|x_g + \extendRatio (x_g - x) \|_2 & =& \|\left[z+ \extendRatio (z - x) \right] + \left[ x_g - z+\extendRatio ( x_g - z) \right]\|_2 \\
& =&  ||x_g - z+\extendRatio (x_g - z)||_2 \quad \quad   \mbox{ since $z+ \extendRatio (z - x) = 0$}  \\
& = &(\gamma +1)  \| x_g -z \|_2 \leq  ( \extendRatio+1) \sqrt{d}    \quad \quad   \mbox{ by $\|x_g - z\| \leq \sqrt{d}$}
\end{eqnarray*}
Moreover, $\frac{1}{2\beta} \hconvexSet \supseteq \frac{1}{2\beta^2}  \ellipsoid = \frac{1}{2\beta^2}  \ball_\alpha (0)$ contains all points with norm $\frac{\alpha}{2\beta^2}$, and in particular it contains $x_g + \extendRatio (x_g -x) $ when $\gridScale \ge 2( \extendRatio+1)  \centerShrink^2 \sqrt{d}$.

(b). For every $x \notin \hconvexSet' $ but $x \in \K$, take $z =  \frac{\extendRatio}{\distanceRatio(x,  \hconvexSet') + \extendRatio} x$. When $\beta > 2\gamma$, we know that $z \in \frac{1}{2}\beta \K'$. With same idea as (a), we can also conclude that $$\ball_{\frac{\distanceRatio(x,  \hconvexSet') }{\distanceRatio(x,  \hconvexSet') + \extendRatio}\frac{\alpha}{\beta^2} } (z) \subseteq \centerShrink \hconvexSet' \cap \convexSet$$ 

Since $\distanceRatio(x,  \hconvexSet') \ge 1$ for $x \notin \K'$, we can find $x_g \in \grid$ be such that $\|x_g - z\|_2 \le \sqrt{d}$ when $\gridScale \ge (\extendRatio + 1)\centerShrink^2\sqrt{d}$.  Therefore, 

\begin{eqnarray*}
&& \left \| x_g + \frac{\extendRatio}{\distanceRatio(x,  \hconvexSet)}  (x_g - x)  \right\|_2 \\
& =& \left\|\left[z+ \frac{\extendRatio}{\distanceRatio(x,  \hconvexSet)}  (z - x) \right] + \left[ x_g - z+\frac{\extendRatio}{\distanceRatio(x,  \hconvexSet)}  (x_g - z) \right]  \right\|_2 \\
& =& \left\| x_g - z+\frac{\extendRatio}{\distanceRatio(x,  \hconvexSet)}  (x_g - z) \right \|_2  \quad \quad  \mbox{ since $z+ \frac{\extendRatio}{\distanceRatio(x,  \hconvexSet)}  (z - x) = 0$}  \\
& = & \left(1 + \frac{\extendRatio}{\distanceRatio(x,  \hconvexSet)} \right) \|x_g - z\|_2 \leq  \left( \frac{\extendRatio}{\distanceRatio(x,  \hconvexSet)} + 1\right)  \sqrt{d} \\
& \leq &  (1 + {\extendRatio})  \sqrt{d} \quad \quad  \mbox{ since $\distanceRatio(x,  \hconvexSet) \ge 1$} 
\end{eqnarray*}

As before, this  implies that when $\gridScale \ge  2 (\extendRatio  + 1)  \centerShrink^2 \sqrt{d}$, it holds that $x_g + \frac{\extendRatio}{\distanceRatio(x,  \hconvexSet)}  (x_g - x) \in \frac{1}{2 \centerShrink }{\hconvexSet}$.

\end{proof}

\subsection{Non-stochastic bandit algorithms}  \label{sec:lradefinitions}

Define the following
$$(p_t, \vLRA_t, \varLRA_t ) \leftarrow \LRA(S,  \{p_{t-1},f_{1:t-1} \} )$$

$p_t$: A probability distribution over the discrete set $S$ 

$\vLRA_t$: Estimation of the values of $F^{t} = \sum_{i =1}^t f_i$ on $S$.

$\varLRA_t$: Variance, such that for every $x \in S$, $\vLRA_t(x) - \varLRA_t(x) \le F_{t}(x) \le \vLRA_t(x) + \varLRA_t(x)$.

For $x_t$ picking according to distribution $p_t$, define the regret of $\LRA$ as: 
$$\R_T  = \sum_t f_t(x_t)  -  \min_x \left\{\sum_t f_t(x) \right\} $$

The following theorem was essentially established in \cite{Auer2003} (although the original version was stated for gains instead of losses, and had known horizon parameter), for the algorithm called EXP3.P, which is given in Appendix \ref{sec:exp3} for completeness:
\begin{theorem}[\cite{Auer2003}]
Algorithm EXP3.P over $N$ arms guarantees that with probability at least $1-\delta$, 
$$\R_T  = \sum_t f_t(x_t)  -  \min_x \left\{\sum_t f_t(x) \right\} \le  8\sqrt{ T N \log \frac{TN}{\delta} } $$ 
\end{theorem}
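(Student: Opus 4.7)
The plan is to follow the classical EXP3.P analysis of Auer, Cesa-Bianchi, Freund and Schapire, adapted to losses. EXP3.P maintains weights $w_t(i)$ over the $N$ arms and plays $I_t\sim p_t$, where $p_t(i)=(1-\gamma)\frac{w_t(i)}{\sum_j w_t(j)}+\gamma/N$ mixes the exponential-weights distribution with a $\gamma$-fraction of uniform exploration. Upon observing $f_t(I_t)\in[0,1]$, the learner forms the inverse-propensity estimate $\hat\ell_t(i)=\frac{f_t(I_t)}{p_t(i)}\mathbb{1}[I_t=i]$ and then updates $w_{t+1}(i)=w_t(i)\exp(-\eta\tilde\ell_t(i))$ where $\tilde\ell_t(i)=\hat\ell_t(i)-\beta/p_t(i)$ is a \emph{negatively biased} (i.e.\ optimistic) version of $\hat\ell_t$. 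The bias $\beta/p_t(i)$ is what allows a high-probability, rather than in-expectation, regret bound.

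First I would carry out the standard potential argument with $\Phi_t=\frac{1}{\eta}\ln\sum_i w_t(i)$. Using $e^{-x}\le 1-x+x^2$ valid when $x\le 1$, and the fact that $|\eta\tilde\ell_t(i)|\le \eta/p_t(i)\le \eta N/\gamma\le 1$ for appropriate parameter choice, one obtains the inequality
\[
\sum_t\sum_i p_t(i)\tilde\ell_t(i)\ \le\ \sum_t\tilde\ell_t(x^\star)+\frac{\ln N}{\eta}+\eta\sum_t\sum_i p_t(i)\tilde\ell_t(i)^2
\]
for every arm $x^\star\in S$. Because $p_t(i)\tilde\ell_t(i)^2\le \hat\ell_t(i)f_t(I_t)\mathbb{1}[I_t=i]/p_t(i)+2\beta^2/p_t(i)$, the quadratic term is controlled in terms of $\beta,\gamma,N$. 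Replacing $\tilde\ell_t$ on both sides by $\hat\ell_t-\beta/p_t$ converts this into a comparison between the (optimistically biased) estimated losses of the algorithm and of $x^\star$.

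The second stage converts estimated losses to true losses, which is where high probability is purchased. For each fixed arm $i$ the sequence $M_t(i)=\sum_{s\le t}\bigl(\hat\ell_s(i)-f_s(i)\bigr)$ is a zero-mean martingale with increments bounded by $1/p_s(i)$. The key observation is that, because of the $-\beta/p_t(i)$ bias, the sequence
\[
Z_t(i)\ =\ \exp\!\Bigl(\sum_{s\le t}\eta'\bigl(f_s(i)-\hat\ell_s(i)\bigr)-\eta'\beta\sum_{s\le t}\tfrac{1}{p_s(i)}\Bigr)
\]
is a supermartingale for $\eta'=\beta$ (using $\mathbb{E}[e^{-\eta'\hat\ell_s(i)}\mid\mathcal{F}_{s-1}]\le 1-\eta' f_s(i)+{\eta'}^2/p_s(i)\le e^{-\eta' f_s(i)+{\eta'}^2/p_s(i)}$). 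Applying Markov's inequality to $Z_T(i)$ and taking a union bound over the $N$ arms yields, with probability at least $1-\delta$, the uniform inequality $\sum_t f_t(i)\le \sum_t\hat\ell_t(i)+\frac{\ln(N/\delta)}{\beta}$ for every $i\in S$, provided $\beta$ is chosen so that $\eta'\beta\ge \frac{\ln(N/\delta)}{T}$ after accounting for the $1/p_s(i)$ factors. This is the step I expect to be the main obstacle, since one must carefully set up the supermartingale and verify the exponential moment bound under the mixed sampling distribution $p_t$.

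Finally, I would combine the two inequalities. The potential argument bounds $\sum_t\langle p_t,\hat\ell_t\rangle-\sum_t\hat\ell_t(x^\star)$, the supermartingale bound replaces $\hat\ell_t(x^\star)$ by $f_t(x^\star)$, and Azuma's inequality applied to $\sum_t(f_t(I_t)-\langle p_t,f_t\rangle)$ replaces $\langle p_t,\hat\ell_t\rangle$ by $f_t(I_t)$ (noting $\mathbb{E}_{I_t}[\hat\ell_t(i)]=f_t(i)$ and $\langle p_t, f_t\rangle$ differs from $\langle p_t,(1-\gamma)\tilde p_t+\gamma/N\,\mathbf 1\rangle\cdot f_t$ by at most $\gamma$). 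Tuning $\eta,\gamma,\beta$ of the order $\sqrt{\ln(TN/\delta)/(TN)}$ collapses all error terms to the same scale and delivers the stated $8\sqrt{TN\ln(TN/\delta)}$ bound. The only delicate accounting is keeping the numerical constant at $8$; for the purposes of this paper the particular constant is immaterial and any $O\bigl(\sqrt{TN\log(TN/\delta)}\bigr)$ suffices.
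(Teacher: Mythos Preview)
The paper does not prove this theorem; it is simply quoted from \cite{Auer2003} as a black box, with the EXP3.P pseudocode reproduced in the appendix for completeness. Your sketch is a faithful outline of the original Auer--Cesa-Bianchi--Freund--Schapire argument (potential/weight analysis plus a supermartingale concentration step for the biased loss estimates), so there is nothing to compare against and no gap to flag: for the purposes of this paper any $O\bigl(\sqrt{TN\log(TN/\delta)}\bigr)$ high-probability bound suffices, and the exact constant $8$ is immaterial.
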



\section{The insufficiency of convex regression}  \label{sec:regression}

Before proceeding to give the main technical contributions of this paper, we give some description of the technical difficulties that are encountered and intuition as to how they are resolved.

A natural approach for BCO, and generally for online learning, is to borrow ideas from the less general setting of stochastic zero-order optimization. Till recently, the only polynomial time algorithm for zero-order optimization was based on the ellipsoid method \cite{GLS}. Roughly speaking, the idea is to maintain a subset, usually an ellipsoid,  in space in which the minimum resides, and iteratively reduce the volume of this region till it is ultimately found.

In order to reduce the volume of the ellipsoid one has to find a hyperplane separating the minimum and a large constant fraction of the current ellipsoid in terms of volume. In the stochastic case, such a hyperplane can be found by sampling and estimating a sufficiently indicative region of space. A simple way to estimate the underlying convex function in the stochastic setting is called convex regression (although much more time and query-efficient methods are known, e.g. \cite{Agarwal13}).

Formally, given noisy observations from a convex function $f: \K \mapsto \reals^d$, denoted $\{v(x_1),...,v(x_n) \}$, such that $v(x_i)$ is a random variable whose expectation is $f(x_i)$, the problem of convex regression is to create an estimator of the value of $f$ over the entire space which is consistent, i.e. approaches its expectation as the number of observations increases $n \mapsto \infty$.  The methodology of convex regression  proceeds by solving a convex program to minimize the mean square error and ensuring convexity by adding gradient constraints, formally,
\begin{align*}
& \min \sum_{i=1}^n ( v(x_i ) - y_i ) ^2 \\
& y_j \geq y_i + \nabla_i^\top (x_j  - x_i)   
\end{align*}
In this convex program  $\{\nabla_i, y_i \}$ are variables, points $x_i$ are chosen by the algorithm designer to observe, and $v(x_i)$ the observed values from sampling. Intuitively, there are $nd + n$ degrees of freedom ($n$ scalars and $n$ vectors in $d$ dimensions) and $O(n^2)$ constraints, which ensures that this convex program has a unique solution and generates a consistent estimator for the values of $f$ w.h.p. (see \cite{convexregression} for more details). 

The natural approach of iteratively applying convex regression to find a separating hyperplane within an ellipsoid algorithm fails for BCO because of the following difficulties:
\begin{enumerate}
\item
The ellipsoid method was thus far not applied successfully in online learning, since the optimum is not fixed and can change in response to the algorithms' behavior.  Even within a particular ellipsoid, the optimal strategy is not stationary. 

\item
Estimation using convex regression over a fixed grid is insufficient, since arbitrarily deep ``valleys" can hide between the grid points. 

\end{enumerate}

Our algorithm and analysis below indeed follows the general ellipsoidal scheme, and overcomes these difficulties by:
\begin{enumerate}
\item
The ellipsoid method is applied with an optional ``restart button". If the algorithm finds that the optimum is not within the current ellipsoidal set, it restarts from scratch. We show that by the time this happens, the algorithm has accumulated so much negative regret that it only helps the player. Further, inside each ellipsoid we use the standard multiarmed bandit algorithm EXP3.P due to \cite{Auer2003}, to exploit and explore it. 

\item
A new estimation procedure is required to ensure that no valleys are missed. For this reason we develop some new machinery in convex geometry and convex regression that we call the lower convex envelope of a function.  This is a convex lower bound on the original function that ensures there are no valleys missed, and in addition needs only constant-precision grids for being consistent with the original function. 

This contribution is the most technical part of the paper, as culminates in the "discretization lemma", and can be skimmed at first read. 

\end{enumerate}

\section{Geometry of discrete convex function}  \label{sec:geometry}

\subsection{Lower convex envelopes of continuous and discrete convex functions}  

Bandit algorithms generate a discrete set of evaluations, which we have to turn into convex functions. The technical definitions that allow this are called lower convex envelopes (LCE), which we define below. First, for continuous but non-convex function $f$, we can define the LCE denoted $\FLCE(f)$ as the maximal convex function that bounds $f$ from below, or formally,

\begin{defn}[Simple Lower Convex Envelope]
Given a function $f: \set{K} \to \reals$ (not necessarily convex) where $\set{K} \subset \reals^d$, the simple lower convex envelope $\FSLCE = \SLCE(f) : \set{K} \to \reals$ is a convex function defined as:
$$\FSLCE(x) = \min_{} \left\{ \sum_{i = 1}^{s} \lambda_i f(y_i) \ \right| \ \left.  \exists s \in \mathbb{N}^*, y_1, ..., y_{s} \in \set{K}: \exists (\lambda_1, ..., \lambda_{s}) \in \Delta^{s}, x = \sum_i \lambda_i y_i \right\}$$
\end{defn}

It can be seen that $\FSLCE$ is always convex, by showing for every $x,y \in \K$ that $f(\frac{1}{2}x + \frac{1}{2} y) \leq \frac{1}{2} f(x) + \frac{1}{2} f(y) $, which follows from the definition. Further,  for a convex function, $\FSLCE(f)=f$, since for a convex function any convex combination of points satisfy $f(\sum_i \lambda_i y_i) \leq \sum_i \lambda_i f(y_i)$, and the minimum in the definition is realized at the point $x$ itself.

For a discrete function, the SLCE is defined to be the SLCE of the piecewise linear continuation. 

We will henceforth need a significant generalization of this notion, both for the setting above, and for the setting in which  the discrete function is given as a random variable - on each point in the grid we have a value estimation and variance estimate.  We first define the minimal extension, and then the SLCE of this minimal extension.

\begin{defn}[Random Discrete Function]\label{defn:RDF}
A Random Discrete Function (RDF), denoted $(X,v,\sigma)$, is a mapping $f: X\to \reals^2$ on a  discrete domain  $ X = \{ x_1, ..., x_k \} \subseteq \K \subseteq \reals^d$, and range of  values and variances denoted $\{v(x),\sigma(x) , x \in X\}$ such that $f(x_i) = (v(x_i), \sigma(x_i))$.
\end{defn}

\begin{defn}[Minimal Extension of a Random Discrete Function]\label{defn:min_extension}
Given a RDF $(X,v,\sigma)$, we define $\tf^i_{\min}(X,v,\sigma): \K \to \reals$ as $$\tf^i_{\min}(x) = \min_{h \in \reals^d: \forall x_j, \langle h, x_j - x_i \rangle \le  \vLRA(x_j) +  \varLRA(x_j)   - [\vLRA(x_i) -  \varLRA(x_i) ] }  \left\{\langle h, x - x_i \rangle +  [\vLRA(x_i) -  \varLRA(x_i) ] \right\}$$
The minimal extension $\tf_{\min}(X,v,\sigma) $ is now  defined as 
$$\tf_{\min} (x) = \max_{i \in [k]}\tf^i_{\min}(x) $$
\end{defn}

We can now define the LCE of a discrete random function
\begin{defn}[Lower convex envelope of a random discrete function] \label{defn:lce}
Given a RDF $(X,v,\sigma)$ over domain $X =  \grid  \subseteq \K \subseteq \reals^d$, for the grid for $\K$ as constructed in Algorithm \ref{alg:grid}, its lower convex envelope is defined to be 
$$ \FLCE(X,v,\sigma) =  \FSLCE(\tf_{\min}(X,v,\sigma)) $$
\end{defn}

\begin{center}
\begin{figure}
\begin{center}
\caption{The minimal extension and LCE of a discrete function}
\includegraphics[width=0.6\textwidth]{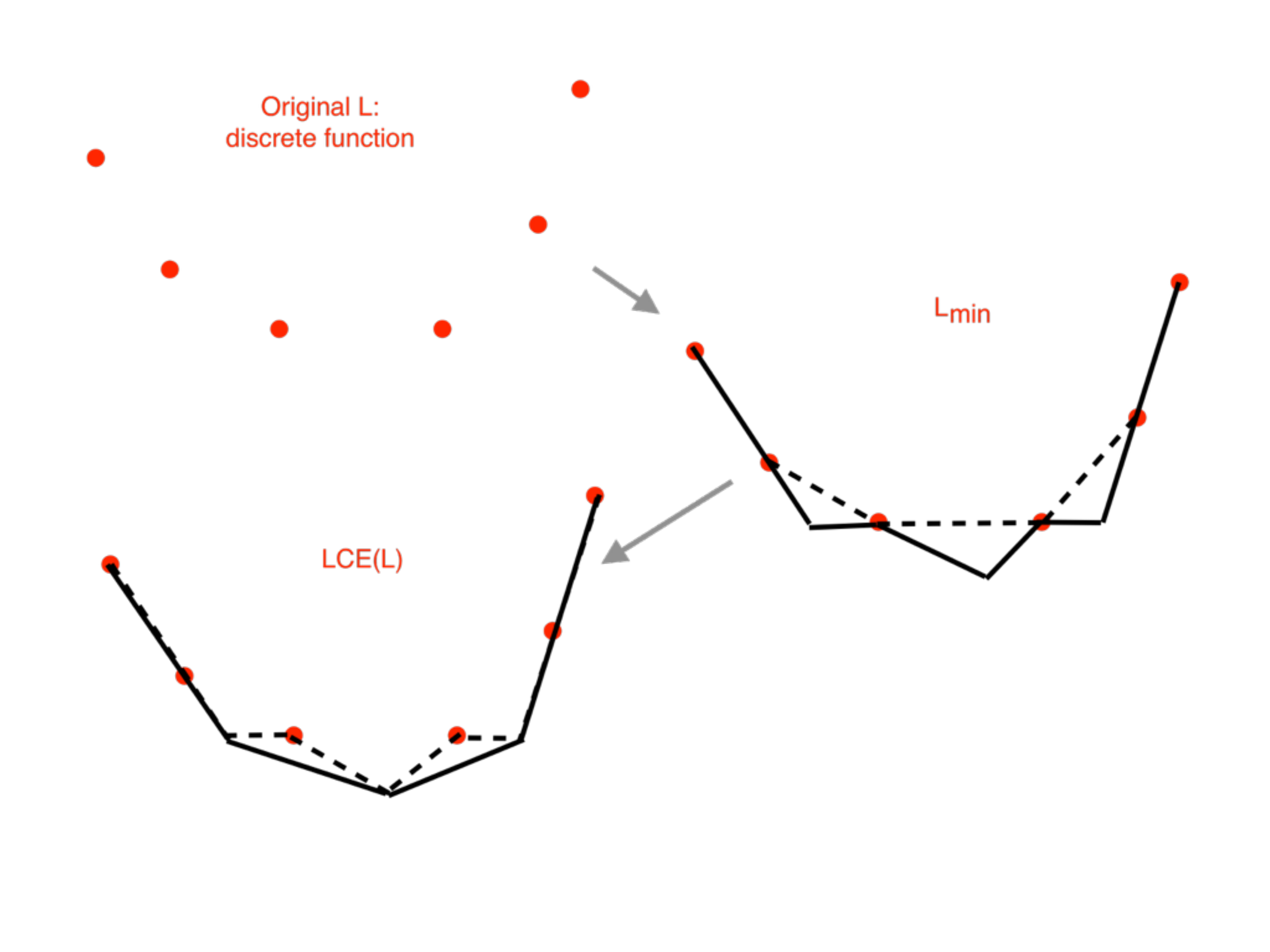}
\end{center}
\end{figure}
\end{center}

We now address the question of computation of an LCE of a discrete function, or how to provide oracle access to the LCE efficiently. The following theorem and algorithm establish the computational part of this section, whose proof is deferred to the appendix.

\begin{algorithm}[h!]
\caption{ Fit-LCE } 
\begin{algorithmic}[1]
\STATE Input: RDF $(X,v,\sigma)$, and a convex set $\hconvexSet$ where $X \subseteq \hconvexSet$.
\STATE (minimal extension): Compute the minimal extension  $\tf_{\min}(X,v,\sigma): \boundingBox_{\hconvexSet} \to \reals$ (see Section \ref{sec:preliminaries} for definition of the bounding box $\boundingBox$)
\STATE 
(LCE) Compute and return $\FLCE = \SLCE(\tf_{\min})$. 
\end{algorithmic}
\label{alg:fit_LCE}
\end{algorithm}

\begin{thm}[LCE computation] \label{thm:LCE} Given a discrete random function over $k$ points $\{x_1, ..., x_k\}$ in a polytope $\K \subseteq \reals^d$ defined by $N = \poly(d) $ halfspaces, with confidence intervals $[v(x_i) - \varLRA(x_i), \vLRA(x_i) + \varLRA(x_i)]$ for each point $x_i$, then for every $x \in \K$, the value $\FLCE(x)$  can be computed in time $O\left( k^{d^2}  \right)$
\end{thm}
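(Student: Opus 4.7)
The plan is to reduce the computation of $\FLCE(x) = \FSLCE(\tf_{\min})(x)$ to a linear program over a polynomially bounded set of canonical evaluation points in the bounding box $C_{\K}$, and then bound the size of this set by $O(k^{d^2})$ via arrangement complexity. First I unpack the structure of $\tf_{\min}$: for each $i$, the feasible region for $h$ in Definition~\ref{defn:min_extension} is a polytope $P_i \subseteq \reals^d$ cut out by $k$ halfspaces, so standard LP theory gives
\[
\tf^i_{\min}(x) \;=\; \vLRA(x_i) - \varLRA(x_i) + \min_{h \in V(P_i)} \langle h, x - x_i\rangle,
\]
where $V(P_i)$ is the vertex set of $P_i$, with $|V(P_i)| \le \binom{k}{d} = O(k^d)$. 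Enumerating $V(P_i)$ for each $i$ via any LP vertex-enumeration procedure presents $\tf^i_{\min}$ as a pointwise min of $O(k^d)$ affine functions (hence concave and piecewise linear), and therefore $\tf_{\min} = \max_i \tf^i_{\min}$ as a continuous piecewise linear function on $C_{\K}$; its polyhedral decomposition of $C_{\K}$ is the common refinement of the normal fans of the $P_i$'s, the tie-breaking arrangement $\{\tf^i_{\min} = \tf^j_{\min}\}_{i \neq j}$, and the $N$ facets of $\K$.

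Next I reduce $\FLCE(x)$ to an LP. By Legendre--Fenchel duality,
\[
\FSLCE(g)(x) \;=\; \sup\big\{\ell(x) : \ell \text{ affine},\ \ell \le g \text{ on } C_{\K}\big\},
\]
and since $\tf_{\min}$ is piecewise affine on $C_{\K}$, an affine function $\ell$ satisfies $\ell \le \tf_{\min}$ on $C_{\K}$ if and only if $\ell(v) \le \tf_{\min}(v)$ at every vertex $v$ of the polyhedral decomposition of $\tf_{\min}$ (on each full-dimensional cell both $\ell$ and $\tf_{\min}$ are affine, so the pointwise inequality reduces to its values at the extreme points of the closed cell). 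Letting $\V$ be this vertex set, $\FLCE(x)$ is the optimum of the LP
\[
\max_{(a,b)\in\reals^{d+1}}\ \langle a, x\rangle + b \quad \text{s.t.}\quad \langle a, v\rangle + b \le \tf_{\min}(v) \ \ \forall v \in \V,
\]
which has $d+1$ variables and $|\V|$ constraints; each value $\tf_{\min}(v)$ is obtained by at most $k$ LPs of size $\poly(k,d)$.

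The main obstacle is bounding $|\V|$. The decomposition above is the overlay of $O(k^{d+1})$ hyperplanes in $\reals^d$: each $P_i$ contributes $O(k^d)$ separating hyperplanes between normal cones of adjacent vertices, plus the $O(k^2)$ tie-breaking hyperplanes between distinct $\tf^i_{\min}$'s and the $N$ facets of $\K$. The classical bound that an arrangement of $H$ hyperplanes in $\reals^d$ has $O(H^d)$ vertices then yields $|\V| = O(k^{d(d+1)})$, which matches the stated $O(k^{d^2})$ bound after absorbing $\poly(d, N)$ factors into the big-$O$. Combined with the preprocessing cost of evaluating $\tf_{\min}$ at all of $\V$ and solving the final LP, the total runtime is $O(k^{d^2})$. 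The hardest step I expect is the careful, degeneracy-free overlay analysis needed to apply the $O(H^d)$ bound to the concrete arrangement here, which can be handled by a standard generic-position perturbation on the inputs $\vLRA, \varLRA$ and the grid points $\{x_i\}$.
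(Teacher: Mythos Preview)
Your overall plan matches the paper's: show that $\tf_{\min}$ is piecewise linear with $k^{O(d^2)}$ cells, enumerate the vertices $\V$ of that subdivision, and then solve a single LP to get $\FLCE(x)$. The only stylistic difference is that you set up the dual LP $\max\{\langle a,x\rangle+b:\ \langle a,v\rangle+b\le\tf_{\min}(v)\ \forall v\in\V\}$, whereas the paper uses the primal formulation $\FLCE(x)=\min\{\sum_i\lambda_i\tf_{\min}(v_i):\sum_i\lambda_i v_i=x,\ \lambda\in\Delta^n\}$; these are equivalent by LP duality, so either is fine.

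There is, however, one genuine slip in your hyperplane count. The locus $\{\tf^i_{\min}=\tf^j_{\min}\}$ is \emph{not} a single hyperplane: each $\tf^i_{\min}$ is itself piecewise linear (a min of up to $k^d$ affine functions), so the equality set is a piecewise-linear hypersurface, and the ``$O(k^2)$ tie-breaking hyperplanes'' claim is false. Concretely, across all $i$ there are at most $k\cdot k^d=k^{d+1}$ distinct affine pieces, and the walls of the subdivision of $\tf_{\min}$ lie on the pairwise-equality hyperplanes between these pieces together with the normal-fan walls and the $N$ facets of $\K$: at most $O(k^{2(d+1)})$ hyperplanes in total, not $O(k^{d+1})$. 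This matters for your argument because your reduction ``$\ell\le\tf_{\min}$ on $C_{\K}$ iff $\ell(v)\le\tf_{\min}(v)$ for all $v\in\V$'' requires $\tf_{\min}$ to be \emph{affine} on each cell; on cells of the coarser overlay (normal fans only) $\tf_{\min}$ is merely convex, and convexity gives the wrong direction of the vertex inequality. Once you correct the count to $O(k^{2(d+1)})$ hyperplanes the arrangement bound still yields $|\V|=k^{O(d^2)}$ vertices, so the final running time is unaffected; the paper handles this by first refining into cells where every $\tf^i_{\min}$ is affine and then noting that the $\max$ over $i$ contributes at most $k$ further sub-regions per cell.
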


To prove the running time of LCE computation, we need the following  Lemma:
\begin{lem}[LCE  properties] \label{lem:LCE} The lower convex envelope (\emph{LCE}) has the follow properties:

\begin{enumerate}
\item $\tf_{\min}$ is a piece-wise linear function with $k^{O(d^2)}$ different regions, each region is a polytope with $d+1$ vertices. We denote all the vertices of all regions as $v_1, ..., v_{n}$ where $n = k^{O(d^2)}$, where each $v_i$ and its value $\tf_{\min}(v_i)$ are computable in time $k^{O(d^2)}$.
\item $$\FLCE(x) = \min \left\{ \sum_{i \in [n]} \lambda_i \tf_{\min}(v_i) \right| \left. \sum_{i \in [n]} \lambda_i v_i = x, (\lambda_1, ..., \lambda_n) \in \Delta^n \right\}$$
\end{enumerate}
\end{lem}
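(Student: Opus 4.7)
For part 1, the plan is to unpack the definition of $\tf^i_{\min}$ as a linear program in the slope vector $h \in \reals^d$. The constraints $\langle h, x_j - x_i\rangle \le \vLRA(x_j)+\varLRA(x_j) - [\vLRA(x_i)-\varLRA(x_i)]$ define a polytope $P_i \subseteq \reals^d$ cut out by $k$ halfspaces, so $P_i$ has at most $\binom{k}{d} = O(k^d)$ vertices $h^{(i)}_1,\dots,h^{(i)}_{m_i}$ which can be enumerated by solving all $\binom{k}{d}$ square subsystems. For each $x$ in the bounding box $\boundingBox_{\hconvexSet}$, the LP optimum is attained at one of these vertices (or tends to $-\infty$ along a recession direction of $P_i$, which occurs only outside a polyhedral region and can be pruned once we take the max over $i$). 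Hence each $\tf^i_{\min}$ is a minimum of $O(k^d)$ affine functions of $x$, so it is concave and piecewise affine with $O(k^d)$ affine pieces, and each linear piece is explicitly $\langle h^{(i)}_\ell, x - x_i \rangle + [\vLRA(x_i)-\varLRA(x_i)]$.

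Next I would take the max, $\tf_{\min} = \max_i \tf^i_{\min}$. Altogether $\tf_{\min}$ is the pointwise maximum of $k \cdot O(k^d) = O(k^{d+1})$ affine functions, so on $\boundingBox_{\hconvexSet}$ it is piecewise linear, and its regions of linearity are the full-dimensional cells of the arrangement $\mathcal{A}$ of the $O(k^{d+1})$ hyperplanes obtained from equating pairs of these affine functions. A standard bound on hyperplane arrangements in $\reals^d$ gives $|\mathcal{A}| = O((k^{d+1})^d) = k^{O(d^2)}$ cells, each a polytope whose vertices are intersections of at most $d$ of the hyperplanes and therefore computable by solving linear systems. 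Finally, to meet the ``$d+1$ vertices per region'' requirement, I triangulate each cell (e.g.\ by a pulling/lifting triangulation from one of its vertices); a cell with $r$ vertices produces at most $O(r^d)$ simplices, and $r \le k^{O(d)}$, so the total number of simplices is still $k^{O(d^2)}$. Collecting all simplex vertices gives the $v_1,\dots,v_n$ with $n = k^{O(d^2)}$, each computable, along with $\tf_{\min}(v_i)$, in time $k^{O(d^2)}$.

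For part 2, I will use that on each simplex $S = \conv(v_{i_1},\dots,v_{i_{d+1}})$ the function $\tf_{\min}$ is \emph{affine}, so for any $y = \sum_\ell \mu_\ell v_{i_\ell} \in S$ with $(\mu_\ell) \in \Delta^{d+1}$ one has exactly $\tf_{\min}(y) = \sum_\ell \mu_\ell \tf_{\min}(v_{i_\ell})$. By definition of the simple lower convex envelope, $\FSLCE(\tf_{\min})(x)$ is the infimum of $\sum_j \lambda_j \tf_{\min}(y_j)$ over all finite convex combinations with $\sum_j \lambda_j y_j = x$ and $y_j$ in the domain. Given any such combination, replace each $y_j$ by its representation as a convex combination of the $d+1$ vertices of whatever simplex contains it; the affine identity above lets me substitute without changing $\sum_j \lambda_j \tf_{\min}(y_j)$, while the barycentric coordinates can be combined into a single convex combination of the $v_i$'s summing to $x$. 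This shows the infimum can be restricted to convex combinations over the finite set $\{v_1,\dots,v_n\}$, which is exactly the claimed formula; the ``$\ge$'' direction is immediate because any such restricted combination is a valid candidate in the SLCE definition.

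The main obstacle I expect is bookkeeping: carefully justifying the $k^{O(d^2)}$ count for the arrangement and triangulation, and handling the mildly annoying case where the polytope $P_i$ is unbounded, so that $\tf^i_{\min}$ may equal $-\infty$ on a polyhedral region. The cleanest way to dispatch the latter is to note that $\tf_{\min} = \max_i \tf^i_{\min}$ is finite wherever at least one $P_i$ is bounded in the relevant direction, and the region where $\tf_{\min} = -\infty$ is itself a polyhedron that can be removed from the domain (or handled by a single extra affine piece with value $-\infty$); this does not affect the SLCE formula in part 2 on the compact working domain, since only finite-valued pieces contribute to finite convex combinations of $\tf_{\min}$ at the $v_i$'s.
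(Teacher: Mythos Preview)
Your overall strategy matches the paper's: analyze each $\tf^i_{\min}$ as an LP over a polytope $P_i$ with $O(k^d)$ vertices, deduce piecewise linearity with a $k^{O(d^2)}$ cell count via a hyperplane arrangement, triangulate, and then for part~2 substitute each $y_j$ in the SLCE definition by the barycentric decomposition on its simplex. Part~2 is essentially identical to the paper's argument.

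There is, however, one genuine error in part~1. You write that ``$\tf_{\min}$ is the pointwise maximum of $k\cdot O(k^d)=O(k^{d+1})$ affine functions.'' This is false: $\tf_{\min}=\max_i \tf^i_{\min}$ where each $\tf^i_{\min}$ is a \emph{minimum} of $O(k^d)$ affine functions (hence concave), so $\tf_{\min}$ is a max of concave piecewise-linear functions, not a max of affine ones. If your claim were true, $\tf_{\min}$ would be convex and $\FLCE=\SLCE(\tf_{\min})=\tf_{\min}$, trivializing the whole construction. Fortunately your arrangement argument survives once stated correctly: collect the $O(k^{d+1})$ affine pieces $h^{(i)}_\ell$; the hyperplanes where any two of them coincide number $O(k^{2(d+1)})$ (not $O(k^{d+1})$ as you wrote), and on each full-dimensional cell of their arrangement the total order among all $h^{(i)}_\ell$ is fixed, so each $\tf^i_{\min}$ is a fixed affine function there and hence so is $\max_i\tf^i_{\min}$. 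The cell count is then $O\bigl(k^{2(d+1)}\bigr)^d=k^{O(d^2)}$, and the rest of your triangulation and vertex-enumeration plan goes through unchanged. The paper organizes the same count slightly differently---first taking the common refinement of the linearity regions of the $\tf^i_{\min}$ (together with the facets of the bounding polytope), then subdividing each cell by the $\max$ over $i$---but the substance is the same.
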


\begin{proof}
%
%

Recall the definition of $\tf^i_{\min}: \hconvexSet \to \reals$ as $$\tf^i_{\min}(x) = \min_{h \in \reals^d: \forall x_j, \langle h, x_j - x_i \rangle \le  \vLRA(x_j) +  \varLRA(x_j)   - [\vLRA(x_i) -  \varLRA(x_i) ] }  \left\{\langle h, x - x_i \rangle +  [\vLRA(x_i) -  \varLRA(x_i) ] \right\}$$
The vector $h$ in the above expression is the result of a linear program. Therefore, it belongs to the vertex set of the polyhedral set given by the inequalities 
$\langle h, x_j - x_i \rangle \le  \vLRA(x_j) +  \varLRA(x_j)   - [\vLRA(x_i) -  \varLRA(x_i) ] $, or the objective is unbounded, a case which we can ignore since $\tf_{\min}$ is finite. The number of vertices of a polyhedral set in $\reals^d$ defined by $k$ hyperplanes is bounded by  ${k \choose d} \leq k^d$.  

Thus, $\tf^i_{\min}$ is the minimal of a finite set of linear functions at any point in space. This implies that it is a piecewise linear  function with at most $k^d$ regions.  More generally,  the minimum of $s$ linear functions is a piece-wise linear function of at most $s$ regions, as we now prove:

\begin{lemma}
The minimum (or maximum) of $s$ linear functions is  a piecewise linear function with at most $s$ regions. 
\end{lemma}
\begin{proof}
Let $f(x) = \min_{i \in [s] } f_i( x)$ for linear functions $\{f_i\}$, the proof for $\max_{i \in [s]} f_i(x)$ is analoguous. Consider the sets  $S_i = \{ x \mid f(x) = f_i (x) \}$, inside which $f = f_i$ is linear. It suffices  to show that each $S_i$ is a convex set, and thus each $S_i$ is a polyhedral region with at most $s$ faces. Now suppose $x_1, x_2 \in S_i$, we want to argue that $x_3 = \frac{x_1 + x_2}{2} \in S_i$: Observe that for every $j$, $f_j(x_3) = \frac{f_j(x_1 ) + f_j(x_2)}{2}$ (this is because $f_j$ is linear). If there is a $j$ such that $f_j(x_3) < f_i(x_3)$, then either $f_j(x_1 ) < f_i(x_1)$ or $f_j(x_2 ) < f_i(x_2)$, contradict to the fact that $x_1, x_2 \in S_i$. 
\end{proof}

Next we consider 
$$\tf_{\min} (x) = \max_{i \in [k]}\tf^i_{\min}(x) $$

Recall that each $\tf_{\min}^i$ is piecewise linear with $s = k^d$ regions who are determined by at most $s$ hyperplanes.   Consider regions in which all these functions are jointly linear, we would like to bound the number of such regions.  These regions are created by the hyperplanes that create the regions  of the functions $\tf_{\min}^i$,  a total of at most $k s$ hyperplanes, plus $N$ hyperplanes of the bounding polytope $\K$.  The number of regions these hyperplanes create is at most $(N + ks)^2$ \cite{abebe}.  In each such region, the functions $\tf_{\min}^i$ are linear, and according to the previous lemma there are at most $k$ sub-regions, giving a total of $k \times (N + ks)^2 \leq k N^2 + k^{3d}   $ polyhedral regions within which the function $\tf_{\min}$ is linear. 

The vertices of these regions can be computed by taking all $d$ intersections of the $(N+ks)^2$ hyperplanes and solving a system of $d$ equations, in overall time $(N+ks)^{2d} = k^{O(d^2)}$.

2. By definition of $\FLCE$, there exists points $p_1, ..., p_m \in \hconvexSet$ and $(\lambda_1, ..., \lambda_m) \in \Delta^m$ such that 
\begin{eqnarray}\label{eq:LCE}\FLCE(x) = \sum_{i \in [m]} \lambda_i \tf_{\min}(p_i), \sum_{i \in [m]} \lambda_i p_i = x 
\end{eqnarray}

By part 1, $\tf_{\min}$ is a piece-wise linear function, we know that for every $i \in [m]$, there exists $d + 1$ vertices $v_{i_1}, ..., v_{i_{d+1}}$ such that there exists $(\lambda_{i_1}, ..., \lambda_{i_{d + 1}}) \in \Delta^{d + 1}$ with $\sum_{j \in [d + 1]} \lambda_{i_j} \tf_{\min} (v_{i_j}) = \tf_{\min}(p_i), \sum_{j \in [d + 1]} \lambda_{i_j}v_{i_j} = p_i$. Put it into Equation \ref{eq:LCE} we get the result. 
\end{proof}

Having Lemma \ref{lem:LCE}, we can calculate $\FLCE$ by first finding vertices $v_1, ..., v_n$ and then solve an LP on $\lambda_i$. The algorithm runs in time $k^{O(d^2)}$

\section{The discretization lemma}  \label{sec:discretization}

The tools for discrete convex geometry developed in the previous section, and in particular the lower convex envelope, are culminated in the discretization lemma that shows consistency of the LCE for discrete random functions which we prove in this section. 

Informally, the discretization lemma asserts that for any value of a given RDF, the LCE has a point with value at least as large not too far away.  Convexity is crucial for this lemma to be true at all, as demonstrated in Figure \ref{fig:nolce}.

\begin{center}
\begin{figure}[h!]
\begin{center}
\caption{The LCE cannot capture global behavior for non-convex functions. \label{fig:nolce}}
\includegraphics[width=0.5\textwidth]{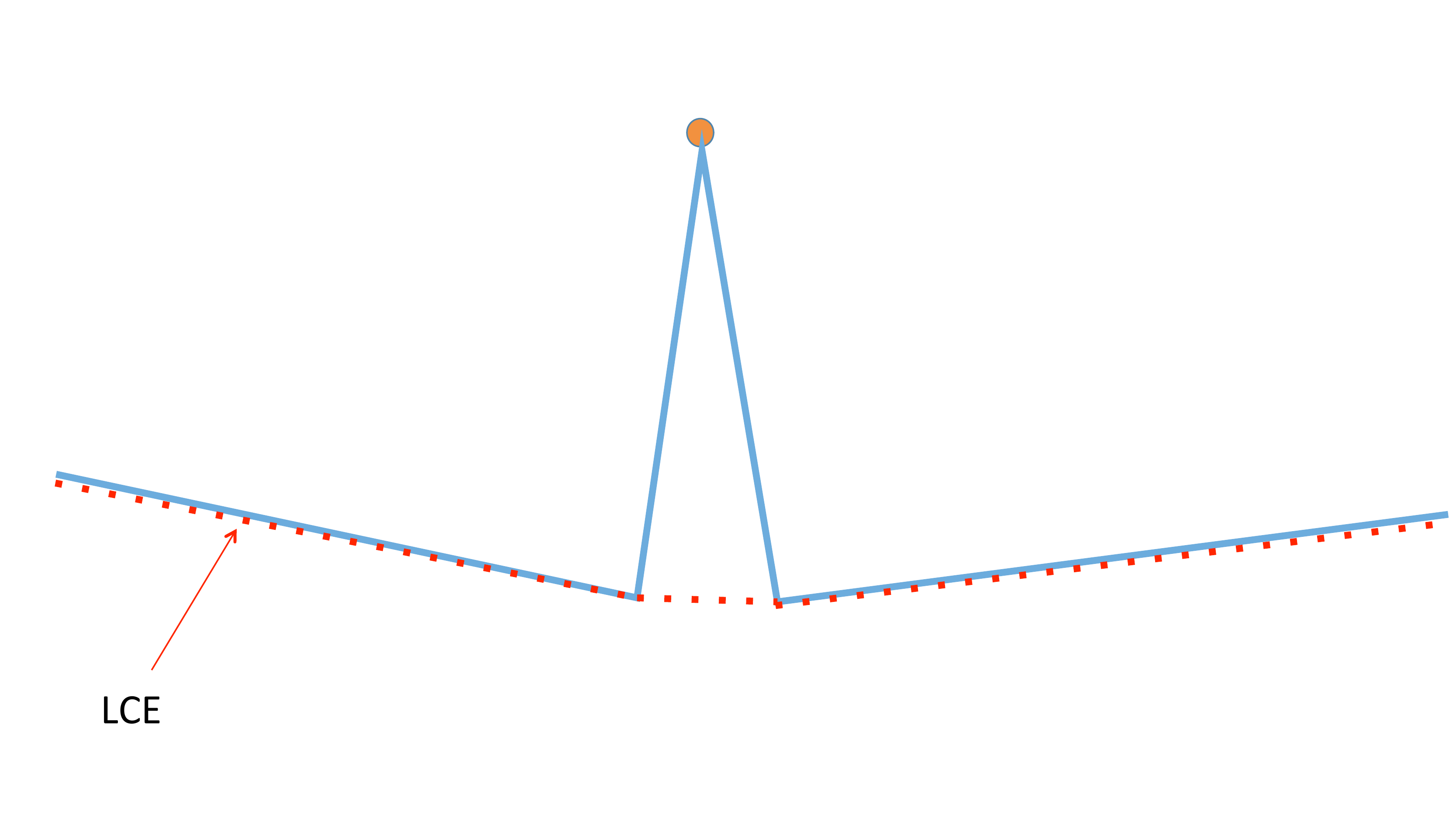}
\end{center}
\end{figure}
\end{center}

We now turn to a precise statement of this lemma and its proof:

\begin{lem}[Discretization]\label{lem:geometry}. 
Let $(X,v,\sigma)$ be a RDF on  $X = \set{Z}^d \cap \convexSet$ such that $v, \sigma$ are non-negative, moreover, for all $x \in X, v(x) - (8d^2 + 1)\sigma(x) \ge 0$. Assume further that there exists a convex function $F: \reals^d \mapsto \reals$ such that for all $x \in X$, $F(x) \in [v(x) - \sigma(x), v(x) + \sigma(x)]$.  Let $\convexSet' =  \boundingBox_{\K}$ be the enclosing bounding box for $\K$ such that $\ball_{2^{4d^2}} (0)\subseteq \frac{4}{d^2}\convexSet' \subseteq \convexSet \subseteq \convexSet' $ \footnote{John's theorem  implies $\frac{1}{d^{3/2}}\convexSet' \subseteq \convexSet \subseteq \convexSet'$ for any convex body $\K$}. Define  $\FLCE = \LCE(X,v,\sigma):  \convexSet' \to \reals$ as in Definition \ref{defn:lce}.

Then there exists a value $\radiusSearch = 2^{3d^2}$ such that for every $y \in \frac{1}{4}\convexSet$ with $\ball_{\radiusSearch}(y ) \subseteq \convexSet$, there exists a point $y' \in \ball_{\radiusSearch}(y )$ with $\FLCE(y') \ge \frac{1}{2} F(y)$.
\end{lem}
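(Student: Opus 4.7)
The plan is to argue by contradiction. Assume $\FLCE(y') < \tfrac{1}{2} F(y)$ for every $y' \in \ball_r(y)$. By Lemma~\ref{lem:LCE}, any such $y'$ admits a representation $y' = \sum_i \lambda_i p_i$ with $p_i \in \convexSet'$, $(\lambda_i)\in\Delta^s$, and $\FLCE(y') = \sum_i \lambda_i \tf_{\min}(p_i)$. A preliminary claim I would establish is that $\tf_{\min} \le F$ throughout $\convexSet'$: any subgradient $g$ of $F$ at a grid point $x_j$ satisfies $\langle g, x_k - x_j\rangle \le F(x_k) - F(x_j) \le (v(x_k)+\sigma(x_k)) - (v(x_j)-\sigma(x_j))$, so $g$ is feasible in the LP defining $\tf^j_{\min}$, giving $\tf^j_{\min}(x) \le \langle g, x-x_j\rangle + v(x_j)-\sigma(x_j) \le F(x)$. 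Convexity of $F$ then yields $\sum_i \lambda_i F(p_i) \ge F(y')$, and the assumption forces some index $i^\star$ with $F(p_{i^\star}) - \tf_{\min}(p_{i^\star}) > \tfrac{1}{2}F(y)$.

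The next step is to localize where such a large gap can occur. The feasible choice $h=0$ in the LP gives $\tf_{\min}(x_j) \ge v(x_j) - \sigma(x_j)$ at every grid point, which combined with $F(x_j) \le v(x_j)+\sigma(x_j)$ and the hypothesis $v(x_j) \ge (8d^2+1)\sigma(x_j)$ yields $\tf_{\min}(x_j) \ge \tfrac{8d^2}{8d^2+2}F(x_j) > \tfrac{1}{2}F(x_j)$. Using the LP-feasibility of the subgradient of $F$, this extends by linear interpolation to any $p$ within $O(\sqrt{d})$ of $\conv(X)$, giving $\tf_{\min}(p) \ge \tfrac{1}{2}F(p)$ on a ``safe region'' that covers essentially all of $\convexSet$. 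Thus a bad $p_{i^\star}$ must lie in $\convexSet' \setminus \convexSet$, near the corners of the bounding box. A quantitative lower bound $\tf_{\min}(p) \ge -M(d, F(y))$ on this bad region is obtained by picking a grid point $x_j$ deep inside $\convexSet$ whose neighbor directions conic-span $\reals^d$---guaranteed by $\convexSet \supseteq \ball_{2^{4d^2}}(0)$---and bounding the corresponding LP dual value in terms of the nearby $v(x_k)+\sigma(x_k) \le 2F(x_k)$.

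The final step exploits the quantitative geometry. Since $\ball_{2^{4d^2}}(0) \subseteq \tfrac{4}{d^2}\convexSet' \subseteq \convexSet$ and $y\in\tfrac{1}{4}\convexSet$, every bad point lies at Euclidean distance $\Omega(2^{4d^2})$ from $y$, whereas every $y'\in\ball_r(y)$ is within $r = 2^{3d^2}$ of $y$. If $y' = \sum_i \lambda_i p_i$ has a bad $p_{i^\star}$ in its support, balancing the convex combination so that $y'$ remains near $y$ forces $\lambda_{i^\star} = O(r/2^{4d^2})$, which is exponentially small. Combining $\tf_{\min}(p_{i^\star}) \ge -M$ on the small bad part, $\tf_{\min}(p_i) \ge \tfrac{1}{2}F(p_i)$ on the rest, and $\sum_{i\ne i^\star}\lambda_i F(p_i) \ge F(y') - \lambda_{i^\star} F(p_{i^\star})$ from convexity, the quantity $\sum_i \lambda_i \tf_{\min}(p_i)$ is bounded below by $\tfrac{1}{2}F(y)$, contradicting the assumption. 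The main obstacle is Step~2: obtaining a bound $M$ in the bad region sharp enough that the product $\lambda_{i^\star} M$ is absorbed by the exponentially small weight, which requires a careful geometric choice of the base grid point $x_j$ in the LP-duality analysis so that the descent rate of $\tf^j_{\min}$ as $p$ leaves $\conv(X)$ is controlled by polynomially-large values on nearby grid points.
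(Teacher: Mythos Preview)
Your argument founders at Step~2, where you claim that the pointwise inequality $\tf_{\min}(p)\ge\tfrac12 F(p)$, which you correctly establish at grid points, ``extends by linear interpolation'' to all of $\convexSet$. This is false, and with it the localization of the ``bad'' points $p_{i^\star}$ to the corners $\convexSet'\setminus\convexSet$ collapses.

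Here is a one-dimensional counterexample with zero variance ($\sigma\equiv 0$, so $v=F$ on the grid). Take $F(x)=e^{cx}$ for large $c>0$. At the non-grid point $p=j+\tfrac12$, compute $\tf^i_{\min}(p)$ for each base integer $i$. The minimizing slope $h$ in the LP defining $\tf^i_{\min}$ is pinned by the constraint coming from the neighbor $i\pm 1$ on the side \emph{away} from $p$, which yields
\[
\tf^j_{\min}\bigl(j+\tfrac12\bigr)=\tfrac32 F(j)-\tfrac12 F(j-1)\approx \tfrac32 F(j),\qquad
\tf^{j+1}_{\min}\bigl(j+\tfrac12\bigr)=\tfrac32 F(j+1)-\tfrac12 F(j+2)<0,
\]
and more generally $\tf^i_{\min}(j+\tfrac12)\le C\,F(j)$ for every base $i$ and a constant $C$ independent of $c$. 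Hence $\tf_{\min}(j+\tfrac12)=O(F(j))$, whereas $F(j+\tfrac12)=e^{c/2}F(j)$, so $\tf_{\min}(p)/F(p)\to 0$ as $c\to\infty$. The gap $F(p)-\tf_{\min}(p)$ can therefore exceed $\tfrac12 F(y)$ at points $p$ sitting squarely inside $\conv(X)$, arbitrarily close to $y$. Your subsequent steps (small weight $\lambda_{i^\star}$ from the far-away geometry, absorption of $\lambda_{i^\star}M$) never get off the ground, because such $p_{i^\star}$ need not be far from $y$ at all. The phrase ``LP-feasibility of the subgradient of $F$'' only furnishes the upper bound $\tf_{\min}\le F$; it says nothing about lower bounds off the grid.

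The paper's proof sidesteps the pointwise comparison entirely. Rather than bounding $\tf_{\min}(p_i)$ against $F(p_i)$, it first arranges (via the convex-cover Lemma~\ref{lem:convex_cover} applied to a cube of vertices inside $\ball_r(y)$) that a \emph{single} simplex $\conv\{p_1,\dots,p_{d+1}\}$ both realizes the low LCE value and contains a ball of radius $\gtrsim r/2^{2d^2}$ around an integer center $y''$ with $F(y'')\ge F(y)$. The approximation Lemma~\ref{lem:approximation} then produces integer points $g_1,\dots,g_{d+1}$ inside a $\tfrac{2}{d^2}$-shrink of this simplex, with each $p_i$ expressible as an affine combination $p_i=\tfrac{1}{\lambda^i_i}(g_i-\sum_{j\ne i}\lambda^i_j g_j)$ and $\lambda^i_i\ge\tfrac{1}{2d^2}$. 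Plugging these particular grid constraints into the LP defining $\tf_{\min}(p_i)$ gives, after a linear-algebra calculation,
\[
\sum_i \lambda_i''\,\tf_{\min}(p_i)\ \ge\ \sum_i \lambda_i'\bigl(v(g_i)-(4d^2-1)\sigma(g_i)\bigr)\ \ge\ \tfrac12\sum_i\lambda_i' F(g_i)\ \ge\ \tfrac12 F(y''),
\]
where the variance hypothesis $v\ge(8d^2+1)\sigma$ enters in the middle step. The point is that the lower bound on $\sum_i\lambda_i\tf_{\min}(p_i)$ comes from the LP constraints along the specific directions $g_j-g_i$ joining nearby integer points, not from any global inequality between $\tf_{\min}$ and $F$.
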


\subsection{Proof intuition in one dimension}

The discretization lemma is the main technical challenge of our result, and as such we first give intuition for the proof for one dimension,  for readability purposes only, and for the special case that the input DRF is actually a deterministic function (i.e. all variances are zero, and $v(x_i) = F(x_i)$ for a convex function $F$. The full proof is deferred to the appendix.

\begin{proof}

Please refer to Figure \ref{fig:1d} for an illustration. 
\begin{center}
\begin{figure}[h!]
\begin{center}
\caption{Discretization lemma in 1-d \label{fig:1d}}
\includegraphics[width=0.6\textwidth]{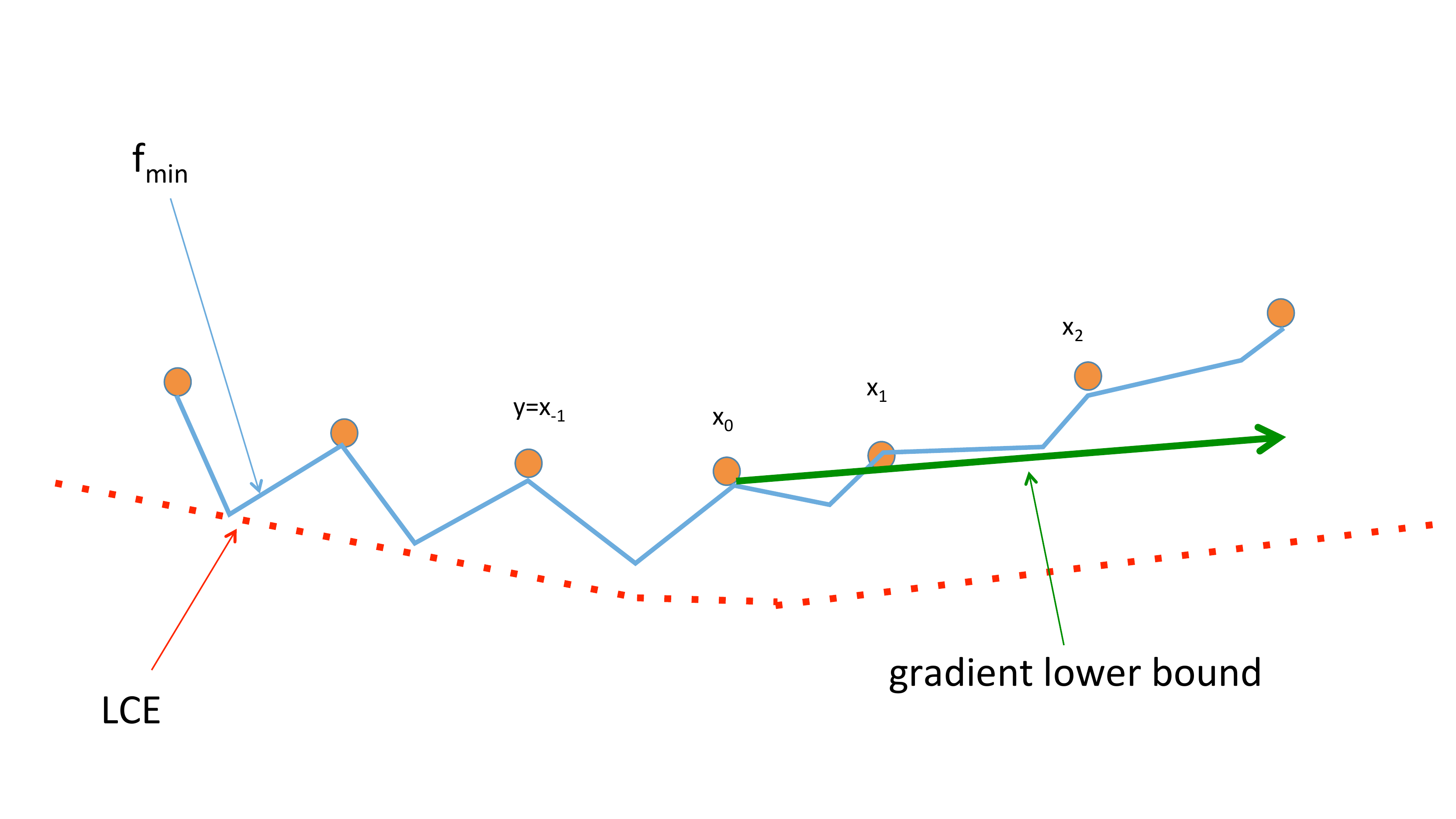}
\end{center}
\end{figure}
\end{center}

Assume w.l.o.g. that $y \in \set{Z}$, otherwise take the nearest point. 
Assume w.l.o.g that $f'(y) > 0 $, and thus all points $ x>y$ have value larger than $F(y)$. Consider the discrete points $\{y = x_{-1},x_0 ,x_1,...., \}$, and the value of $\tf_{\min}$ on these integer points, which by definition has to be equal to $F$, and thus larger than $F(y)$. 

Since $F$ is increasing in the positive direction, we have that $\tf_{\min}(x_0) \leq \tf_{\min}(x_1)$, and by the definition of $\tf_{\min}$, the gradient from $x_0$ to $x_1$, implies that 
$$ \forall z \geq x_1,   \ \tf_{\min}(z) \geq \tf_{\min}(x_0)$$

In the open interval $[x_2,\infty)$, the value of the $\LCE$ is by definition a convex combination of values $\tf_{\min}(x)$ only for points in the range $x \in[x_1,\infty) $. Thus, the function $\FLCE$ obtains a value larger than $\tf_{\min}(x_1) \geq F(y)$ on all points within this range, which is within a distance of two from $y$. 
\end{proof}

The proof of the Discretization Lemma requires the following lemmas:

\begin{lem}[Convex cover]\label{lem:convex_cover} For every $k \in \mathbb{N}^*$, $r \in \reals^*$,
if $k$ convex sets $\set{S}_1, ..., \set{S}_k$ covers a ball in $\reals^d$ of radius $r$, then there exists a set $\set{S}_i$ that contains a ball of radius $\frac{r}{k d^d}$.
\end{lem}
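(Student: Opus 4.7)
The plan is to apply John's theorem to the member of the cover with the largest volume inside the ball. By the hypothesis the $k$ convex sets cover $\ball_r(0)$, so by pigeonhole there is an index $i$ with $\vol(S_i \cap \ball_r(0)) \ge \vol(\ball_r(0))/k$. Set $K := S_i \cap \ball_r(0)$; this is convex (intersection of convex sets), and $K \subseteq \ball_r(0)$. Since any ball contained in $K$ is also contained in $S_i$, it suffices to exhibit a ball of radius $r/(kd^d)$ inside $K$.

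Next, let $E$ be the John ellipsoid of $K$, so that $E \subseteq K \subseteq dE$ (the latter scaling taken about the center of $E$). Order the semi-axes of $E$ as $a_1 \le a_2 \le \cdots \le a_d$. Because $E \subseteq K \subseteq \ball_r(0)$, the diameter $2 a_d$ of $E$ is at most the diameter of $\ball_r(0)$, hence $a_d \le r$. Writing $\omega_d$ for the volume of the unit Euclidean $d$-ball and combining the volume lower bound $\vol(K) \ge \omega_d r^d / k$ with the John upper bound
$$\vol(K) \;\le\; d^d \vol(E) \;=\; d^d \omega_d \prod_{j=1}^{d} a_j \;\le\; d^d \omega_d \, a_1 \, a_d^{d-1} \;\le\; d^d \omega_d \, a_1 \, r^{d-1},$$
I cancel $\omega_d r^{d-1}$ on both sides to obtain $a_1 \ge r/(k d^d)$.

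To finish, observe that any ellipsoid contains a Euclidean ball of radius equal to its shortest semi-axis, namely the ball of radius $a_1$ centered at the center of $E$. This ball lies inside $E \subseteq K \subseteq S_i$, which yields a ball of radius at least $r/(k d^d)$ inside $S_i$, proving the lemma.

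The only nontrivial ingredient is the factor $d^d$ between $\vol(E)$ and $\vol(K)$ supplied by John's theorem; everything else is a straightforward pigeonhole on volume plus the elementary fact that a semi-axis of an ellipsoid contained in $\ball_r(0)$ cannot exceed $r$. I do not expect any real obstacle, since the inequality chain above is tight only up to dimension-dependent constants that are already absorbed into the $d^d$ factor appearing in the lemma's statement.
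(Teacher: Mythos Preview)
Your proof is correct and follows essentially the same approach as the paper: pigeonhole on volume to find $S_i \cap \ball_r(0)$ with volume at least $\vol(\ball_r(0))/k$, then John's theorem on that intersection, then the observation that an ellipsoid contained in $\ball_r(0)$ with volume at least $\omega_d r^d/(kd^d)$ must have shortest semi-axis at least $r/(kd^d)$. The paper's write-up leaves this last semi-axis computation implicit, whereas you spell it out explicitly, but the argument is the same.
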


\begin{proof}[Proof of Lemma \ref{lem:convex_cover}]
Consider the maximum volume contained Ellipsoid $\set{E}_i$ of $\set{S}_i \cap \ball_r(0)$, we know that the volume of $\set{E}_i$ is at least $1/d^d$ the volume of $\set{S}_i \cap \ball_r(0)$. Now, since $\set{S}_1, ..., \set{S}_k$ covers $\ball_r(0)$, there exists a set $\set{S}_i \cap \ball_r(0)$ of volume at least $1/k$ fraction of the volume of $\ball_r(0)$. Which implies that $\set{E}_i$ has volume at least $1/(kd^d)$ of $\ball_r(0)$, note that $\set{E}_i \subseteq \ball_r(0)$, therefore, it contains a ball of radius $\frac{r}{k d^d}$.
\end{proof}

\begin{lem}[Approximation of polytope with integer points]  \label{lem:approximation}Suppose a polytope $\set{P}_o = \conv \{v_1, ..., v_{d + 1}  \} \subseteq \reals^d$ contains $\ball_{4d^8}(0)$, then there exists $d+1$ \emph{integer points} $g_1, ..., g_{d + 1} \in \frac{2}{d^2} \set{P}_o$ such that:
\begin{enumerate}
\item Let $(\lambda_1, ..., \lambda_{d + 1}) \in \Delta^{d + 1}$ be the coefficient such that $\sum_i \lambda_i v_i = 0$, then there exists $(\lambda_1', ..., \lambda_{d + 1}') \in \Delta^{d + 1}$ such that  $\sum_i \lambda_i' g_i = 0$. Moreover, $\frac{1}{2} \lambda_i' \le \lambda_i \le 2 \lambda_i'$
\item For every $i \in [d + 1]$,  there exists $\{ \lambda^i_j \in \Delta^{d+1} \}$ such that $\lambda_i^i  \ge \frac{1}{2d^2}$ and 
$$g_i = \lambda^i_i v_i + \sum_{j \not= i} \lambda_j^i g_j $$
\end{enumerate}


\end{lem}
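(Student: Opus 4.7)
My plan is to set $p_i:=\frac{1}{d^2}v_i$ for each $i$, and to pick each $g_i\in\mathbb{Z}^d$ as a nearest integer point to $p_i$, so that $\|g_i-p_i\|_2\le\sqrt{d}/2$. To check that $g_i\in\frac{2}{d^2}\set{P}_o$, I would argue that $p_i$ is the midpoint of the segment from $0$ to $\frac{2}{d^2}v_i$, both endpoints of which lie in $\frac{2}{d^2}\set{P}_o$; since $\ball_{4d^8}(0)\subseteq \set{P}_o$ implies $\ball_{8d^6}(0)\subseteq \frac{2}{d^2}\set{P}_o$, the standard convexity fact that $\ball_{r/2}(\tfrac12(a+b))\subseteq C$ whenever $\ball_r(a)\cup\{b\}\subseteq C$ gives $\ball_{4d^6}(p_i)\subseteq\frac{2}{d^2}\set{P}_o$, which comfortably absorbs the rounding error $\sqrt{d}/2$.

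\paragraph{Part 1.} Since $\sum_i\lambda_ip_i=\tfrac{1}{d^2}\sum_i\lambda_iv_i=0$, I would first note that $(\lambda_i)$ is also the vector of barycentric coordinates of $0$ with respect to the scaled simplex $\frac{1}{d^2}\set{P}_o$. Writing $\lambda_i=d_i/h_i$, where $d_i$ is the distance from $0$ to the facet opposite $p_i$ and $h_i$ is the corresponding altitude, both quantities are at least $4d^6$ (the inradius of $\frac{1}{d^2}\set{P}_o$), and each changes additively by at most $O(\sqrt{d})$ when the vertices are perturbed by $\sqrt{d}/2$. Consequently the barycentric coordinates $\lambda_i'$ of $0$ with respect to the $g$-simplex satisfy $\lambda_i'/\lambda_i=1\pm O(d^{-11/2})$, which lies safely inside $[1/2,2]$.

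\paragraph{Part 2.} I would first work out the unperturbed case $g_i=p_i$. Using $\sum_k\lambda_kv_k=0$, the line through $v_i$ and $p_i$ passes through $0$ and meets $\conv\{p_j:j\ne i\}$ at
\[
z_i^\star \;=\; -\tfrac{\lambda_i}{(1-\lambda_i)d^2}v_i \;=\; \sum_{j\ne i}\tfrac{\lambda_j}{1-\lambda_i}p_j,
\]
a genuine convex combination since the $\lambda_j$'s are nonnegative and sum to $1-\lambda_i$. Solving for the position of $p_i$ on the segment from $v_i$ to $z_i^\star$ yields
\[
p_i\;=\;t_iv_i+(1-t_i)z_i^\star, \qquad t_i=\tfrac{1}{d^2-(d^2-1)\lambda_i}\;\ge\;\tfrac{1}{d^2}.
\]
To transfer this to the perturbed setting I would let $z_i'$ be the intersection of the line through $v_i,g_i$ with the affine hull of $\{g_j:j\ne i\}$, and re-run the multiplicative perturbation argument of Part 1 inside the facet simplex $(g_j)_{j\ne i}$ (whose inradius in its own affine hull is still $\gtrsim d^6$) to conclude that the barycentric coordinates of $z_i'$ differ from $\mu_j:=\lambda_j/(1-\lambda_i)$ only by a factor $1\pm O(d^{-11/2})$. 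These remain strictly positive, so $z_i'\in\conv\{g_j:j\ne i\}$, while the weight $\lambda_i^i=\|g_i-z_i'\|/\|v_i-z_i'\|$ changes from $t_i$ by at most $O(\sqrt{d}/\|v_i\|)=O(d^{-7})$, keeping $\lambda_i^i\ge 1/(2d^2)$.

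\paragraph{Main obstacle.} The delicate step will be the \emph{nonnegativity} guarantee in Part 2: the new coefficient vector must actually lie in $\Delta^{d+1}$, not merely be close to it. The plan is to track perturbations multiplicatively throughout, which works because the minimum distance from $0$ to any facet of $\frac{1}{d^2}\set{P}_o$ is $\ge 4d^6$ while the vertex perturbations are only $\sqrt{d}/2$; strict positivity of each ideal weight $\mu_j$ in the first place follows from $\ball_{4d^8}(0)$ lying in the \emph{interior} of $\set{P}_o$, which forces $0$ to be a strict interior point of every facet simplex, so a $(1\pm O(d^{-11/2}))$ wiggle cannot drive any coordinate through zero.
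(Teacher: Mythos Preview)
Your construction and Part 1 are essentially the paper's approach: set $u_i=v_i/d^2$, round to nearest lattice point, and control the barycentric coordinates of $0$ by a perturbation argument. The paper phrases Part 1 via an affine map $f(\sum\lambda_k'u_k)=\sum\lambda_k'g_k$ together with a separating-hyperplane argument and then an explicit hyperplane computation for the ratio $\lambda_i/\lambda_i'$, while you use the $d_i/h_i$ barycentric formula directly; both work.

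For Part 2 your route diverges from the paper's, and the justification you give has a real gap. You want the barycentric coordinates of $z_i'$ in the facet simplex $\{g_j:j\ne i\}$ to be $(1\pm O(d^{-11/2}))\mu_j$, and you appeal to ``the inradius of the facet simplex being $\gtrsim d^6$.'' But the Part~1 argument needed the \emph{target point} (there, $0$) to sit at distance $\ge 4d^6$ from every facet; here the target is $z_i^\star$, not the incenter, and some $\mu_j=\lambda_j/(1-\lambda_i)$ can be arbitrarily small (take $v_j$ very far from $0$), so a priori $z_i^\star$ could be right next to a sub-facet. It happens that $\mathrm{dist}(z_i^\star,\mathrm{aff}\{p_k:k\ne i,j\})\ge \mathrm{dist}(z_i^\star,\mathrm{aff}\{p_k:k\ne j\})=d_j^{(0)}/(1-\lambda_i)\ge 4d^6$, which would rescue the multiplicative bound, but you did not supply this step, and the ``facet inradius'' you cite is neither proved nor the right quantity. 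A second loose end is the control of $\|z_i'-z_i^\star\|$: both the line $(v_i,p_i)\to(v_i,g_i)$ and the target hyperplane move, and the line parameter at the intersection is $t^\star=1/(1-t_i)$ with $t_i=1/(d^2-(d^2-1)\lambda_i)$, which can be large when $\lambda_i$ is near~$1$; a naive bound on the displacement then blows up and you have not explained why it does not.

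The paper sidesteps all of this in Part~2 by \emph{not} tracking the facet barycentric coordinates at all. It defines the affine map $f:\conv\{v_i,u_j:j\ne i\}\to\reals^d$ that fixes $v_i$ and sends each $u_j\mapsto g_j$, notes $\|f(x)-x\|\le d$, and shows by a separating-hyperplane argument that $g_i\in f(\ball_{d^2}(g_i))$; this immediately yields a genuine convex combination $g_i=\lambda_i^i v_i+\sum_{j\ne i}\lambda_j^i g_j$ without ever needing the $\lambda_j^i$ individually to be close to anything. Only the single bound $\lambda_i^i\ge 1/(2d^2)$ is then argued, exactly as in Part~1. If you want to keep your barycentric route, you must add the sub-facet distance bound above and a careful control of the intersection-point displacement; otherwise the paper's affine-map trick is the clean way to close Part~2.
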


\begin{center}
\begin{figure}[h!]
\begin{center}
\caption{Approximation Lemma}
\includegraphics[width=0.5\textwidth]{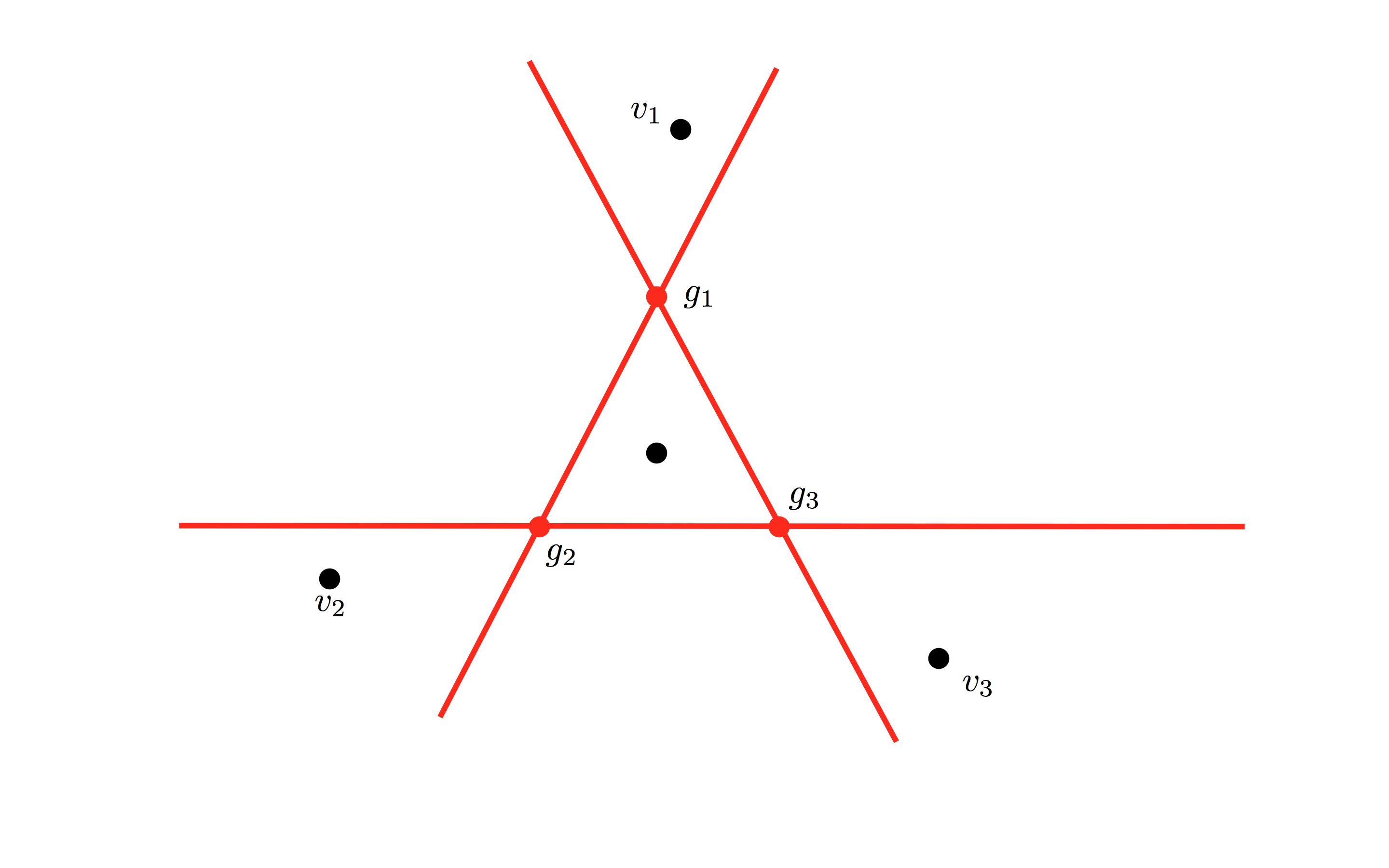}
\end{center}
\end{figure}
\end{center}

\begin{proof}[Proof of Lemma \ref{lem:approximation}]
~
\\
{Property 1:}

Let $u_i = \frac{1}{d^2} v_i$. For every $i \in [n]$, since $\ball_{4d^8}(0) \subseteq  \conv \{v_1, ..., v_{d + 1}  \} $, it holds that 
$$\ball_{d}\left(u_i \right) \subseteq \conv \{v_1, ..., v_{d + 1}  \} $$

Therefore, we can find integer points around $u_i$ in $\conv \{v_1, ..., v_{d + 1}  \} $. Now, let $g_i$ be the closest integer point to $u_i$, which has distance at most $d$ to $u_i$, i.e. $\| g_i -u_i \|_2 \le d$. Observe that $$\ball_{d^6}\left(0\right) \subseteq \conv \{2u_1, ..., 2u_{d + 1}  \} $$

Which implies that for every $i \in [d + 1]$, $\ball_{d^6/2}\left(u_i \right) \subseteq \conv \{2u_1, ..., 2u_{d + 1}  \} $. Therefore, $g_i \in \conv \{2u_1, ..., 2u_{d + 1}  \}  = \frac{2}{d^2} \set{P}_o$

Now we want to show that $0 \in \conv \{ g_1, ..., g_{d + 1}\}$. 

Consider a function $f: \conv\{ u_1, u_2, ..., u_{d + 1}\} \to \reals^d$ defined as: for $x = \sum_i \lambda_i' u_i$ where $(\lambda_1', \lambda_2', ..., \lambda_{d + 1}') \in \Delta^{d + 1}$: 
$$f(x) =  \sum_i \lambda_i' g_i$$

Observe that
$$\|f(x) - x \|_2 = \left\| \sum_{i = 1}^{d + 1} \lambda_i'  \left( u_i - g_i \right)\right\|_2 \le d$$

Notice that for $x_1, x_2 \in \conv\{ u_1, u_2, ..., u_{d + 1}\} $, $$f\left(\frac{x_1 + x_2}{2} \right) = \frac{f(x_1) + f(x_2)}{2}$$

Which implies that $f$ is a linear transformation. Moreover, $\ball_{d^2}(0)  \subseteq  \conv\{ u_1,u_2, ..., u_{d + 1}\} $. Therefore, $f(\ball_{d^2}(0)) = \cup_{x \in \ball_{d^2}(0)} \{ f(x) \}$ is a convex set, since a linear transformation preserves convexity.

Now, we want to show that $0 \in f(\ball_{d^2}(0) )$. Suppose on the contrary $0 \notin f(\ball_{d^2}(0) )$, then we know there is a separating hyperplane going through $0$ that separates  $0$ and $f(\ball_{d^2}(0) )$. Which implies that there is a point $g' \in \partial \ball_{d^2}(0) $ such that $$\textsf{dist}(g', f(\ball_{d^2}(0) )) = \min_{x \in f(\ball_{d^2}(0)) } \| x - g'\| \ge d^2$$

In particular, since $f(g') \in f(\ball_{d^2}(0) )$, the above equality implies $\textsf{dist}(g', f(g') )\ge d^2 $, in contradiction to $\|f(x) - x \|_2  \le d$. Therefore, $0 \in f(\ball_{d^2}(0) ) \subseteq \conv \{ g_1, ..., g_d \}$. 

We proceed to argue about the coefficients. Denote $g_i = u_i + b_i$, and by the above  $\|b_i \|_2 \le d$.   By symmetry  it  suffices  to show that $\frac{1}{2} \lambda_1' \le \lambda_1 \le 2 \lambda_1'$. 

Let $\{\lambda'_i\} \in \Delta^{d + 1}$ be such that   $\sum_i \lambda_i' g_i = \sum_i \lambda_i' (u_i + b_i) = 0$. Then 
$$ \sum_i \lambda_i' u_i =  - \sum_i \lambda_i' b_i$$

Since $\|b_i \|_2 \le d$, by the triangle inequality it holds that $\| \sum_i \lambda_i' b_i \|_2 \le d$, which implies
$$\ \left\|\sum_i  \lambda_i' u_i \right\| \le d$$

Let $H$ be the hyperplane going through $u_2, ..., u_{d + 1}$. Without lost of generality, we can apply a proper rotation (unitary transformation) to put $H = \{ x_1 = -a \}$ for some value $a > 0$, where $x_1$ denotes the first axis. 

Now, (after rotation) Define $b = (b_1, ..., b_d) = \sum_i \lambda_i' u_i $ and denote $u_1 = (a_1, ..., a_d)$. The point $b$ is a convex combination of $u_1$ and $c := \frac{1}{1 - \lambda_1'}\sum_{j \ge 2} \lambda_j' u_j$. In addition we know that $c_1 = -a$. Thus, we can write $\lambda_1'$ as:
$$\lambda_1'  =  \frac{b_1 - c_1}{(u_1)_1 - c_1 } = \frac{b_1 + a}{a_1 + a}$$

On the other hand, by $\sum_i \lambda_i u_i = 0$, we know that 
$$\lambda_1 = \frac{a}{a_1 + a}$$

Note that $\| b\|_2 \le d$, which implies $|b_1| < d$. However, by assumption there is a ball centered at $0$ of radius $4d^6$ in $\conv \{u_1, ..., u_{d + 1}  \} $, which implies $a \ge 4d^6 \ge 4|b_1|$. 

Therefore $\frac{1}{2} \lambda_1' \le \lambda_1 \le 2 \lambda_1'$.

\medskip
{Property 2:}

By symmetry, it suffices to show for $v_1$. 
there exists $\lambda_1^1 \ge \frac{1}{2d^2}$ and $\lambda_j^1 \ge 0 (j = 2, 3, ... , d+ 1)$, $\lambda_1^1 + \sum_{j = 2}^{d + 1} \lambda_j^1 = 1 $ such that
$$g_1 = \lambda_1^1 v_1 + \sum_{j = 2}^{d + 1} \lambda_j^1 g_j $$

Consider a function $f: \conv\{ v_1, u_2, ..., u_{d + 1}\} \to \reals^d$ defined as: for $x = \lambda' v_1 + \sum_{j = 2}^{d + 1} \lambda_j'  u_j$ where $(\lambda', \lambda_2', ..., \lambda_{d + 1}') \in \Delta^{d + 1}$: 
$$f(x) = \lambda' v_i + \sum_{j = 2}^{d + 1}\lambda_j' g_j$$

Observe that
$$\|f(x) - x \|_2 = \left\| \sum_{j = 2}^{d + 1} \lambda_j'  \left( u_j - g_j \right)\right\|_2 \le d$$

Notice that for $x_1, x_2 \in \conv\{ v_1, u_2, ..., u_{d + 1}\} $, $$f\left(\frac{x_1 + x_2}{2} \right) = \frac{f(x_1) + f(x_2)}{2}$$

Which implies that $f$ is a linear transformation. Moreover, $\ball_{d^2}(g_1) \subseteq \ball_{2d^2}\left(u_1\right) \subseteq  \conv\{ v_1,u_2, ..., u_{d + 1}\} $. Therefore, $f(\ball_{d^2}(g_1)) = \cup_{x \in \ball_{d^2}(g_1)} \{ f(x) \}$ is a convex set, since a linear transformation preserves convexity.

Now, we want to show that $g_1 \in f(\ball_{d^2}(g_1) )$. Suppose on the contrary $g_1 \notin f(\ball_{d^2}(g_1) )$, then we know there is a separating hyperplane going through $g_1$ that separates  $g_1$ and $f(\ball_{d^2}(g_1) )$. Which implies that there is a point $g' \in \ball_{d^2}(g_1) $ such that $$\textsf{dist}(g', f(\ball_{d^2}(g_1) )) = \min_{x \in f(\ball_{d^2}(g_1)) } \| x - g'\| = d^2$$

In particular, since $f(g') \in f(\ball_{d^2}(g_1) )$, the above equality implies $\textsf{dist}(g', f(g') )\ge d^2 $, in contradiction to $\|f(x) - x \|_2  \le d$ for all $x \in \conv\{ v_1, u_2, ..., u_{d + 1}\} $.

Therefore, there is a point $g \in \ball_{d^2}(g_1)$ such that $f(g) = g_1$, i.e. $g_1$ can be written as 
$$ g_1  = \lambda' v_1 + \sum_{j = 2}^{d + 1}\lambda_j' g_j \quad (\lambda' , \lambda_2', ..., \lambda_{d + 1}' ) \in \Delta^{d + 1}$$

We proceed to give a bound on the coefficients. Since $g_1 = f(g)$, we know that 
$$ g = \lambda' v_1 + \sum_{j = 2}^{d + 1}\lambda_j' u_j $$

On the other hand, observe that (since $\sum_j \lambda_j u_j = 0$ as defined in Property 1)$$u_1 = \frac{1}{d^2} v_1 + \left( 1 - \frac{1}{d^2} \right) \sum_{j = 1}^{d + 1}\lambda_ju_j $$

By $\| g - u_1 \|_2 \le 2d^2$, using the same method as Property 1 we can obtain: $\lambda' \ge \frac{1}{2 d^2}$

Which completes the proof.

\end{proof}

\noindent Now we can prove the discretization Lemma.  The proof goes by the following steps: 
\begin{enumerate}
\item
First, suppose the Lemma does not hold, then we can find a large hypercube that is contained inside $\K'$ and has entirely small \LCE\ compared to the value of the point $y$. 
\item
We proceed to identify the points whose $\tf_{min}$ value is associated with the LCE of the large hypercube, these $\tf_{min}$  have small values (compare to $F(y)$) and span a large region.

\item
We find a simplex of $d+1$ points that span a large region in which the same holds, i.e. $\tf_{\min}$ value compared to $v(y)$. 
\item
Using the approximation Lemma, we find an inner simplex of  $d+1$ {\it discrete} points inside the  previous simplex. These discrete points all have $\tf_{\min}$ value larger than $f(y)$ by the fact that they are inside the first large region.  
\item
We use the definition of $\tf_{\min}$ to show that one of the vertices of the outer simplex has value of $\tf_{\min}$ larger than $f(y)$, in contradiction to the previous observations. 
\end{enumerate}

\begin{proof}[Proof of Lemma \ref{lem:geometry}]
~
\\
{Step 1:}

Consider a point $y \in P$ with $\ball_{r} (y ) \in \set{K}$. By convexity of $F$, there is a hyperplane $H$ going $y$ such that on one side of the hyperplane, all the points have larger or equal $F$ value than $F(y)$. Therefore, there exists a point $y'$,  a cube $\cube_{r'}(y') \subset \ball_{r} (y ) $ centered at $y'$ with radius $r' = \frac{r}{2 \sqrt{d}}$ such that for all integer points $z \in \cube_{r'}(y') $, $F(z) \ge F(y)$. Let $v_1, ..., v_{2^d}$ be the vertex of this cube.

If there exists $i \in [2^d]$ such that $\FLCE(v_i) \ge \frac{1}{2}F(y)$, then we already conclude the proof. Therefore, we can assume that for all $i \in [2^d]$, $\FLCE(v_i) < \frac{1}{2} F(y)$. 
\medskip
{Step 2:}

By the definition of $\FLCE$, we know that for every $i \in  [2^d]$, there exists $p_{i, 1}, ...., p_{i, m} \in \set{K}'$ such that $v_i = \sum_{j} \lambda_{i, j} p_{i, j}, (\lambda_{i, 1}, ..., \lambda_{i, m} ) \in \Delta^m$ with 
$$\FLCE(v_i) = \sum \lambda_{i, j} \tF_{\min}(p_{i, j}) < \frac{1}{2}F(y)$$

Moreover, by Carathéodory's theorem \footnote{The original Carathedory's theorem only states for convex combination of points, but the same proof can be extended to convex functions by looking at the graph of the function}, we can make $m = d + 1$. 

Now we get a set of $(d + 1) 2^d$ many points $P_o = \{p_{i, j } \}_{i \in [2^d], j \in [d + 1]}$. Consider a size $d + 1$ subset $J =\{ p_{i_1, j_1}, ..., p_{i_{d + 1}, j_{d + 1}} \}$ of $P_o$, define convex set 
$$S_J = \left\{ x \in \cube_{r'}(y')  \mid \exists (\lambda_1, ..., \lambda_{d + 1}) \in \Delta^{d + 1}:  x = \sum_{s} \lambda_s  p_{i_s, j_s},   \sum_s \lambda_s \tF_{\min}(p_{i_s, j_s}) < \frac{1}{2}F(y) \right\}$$

We claim that $$\mathop{\bigcup}_{J \subseteq P_0, |J| = d+1} S_J = \cube_{r'}(y')$$

This is because for every $x \in \cube_{r'}(y') $, there exists $v_{i_1}, ..., v_{i_{d + 1}}$ and $\lambda_1, ..., \lambda_{d + 1} \in \Delta^{d + 1}$ such that 
$$\sum_{s \in [d + 1]} \lambda_s v_{i_s} = x$$

Moreover, for each $v_{i}$, $v_i = \sum_{j} \lambda_{i, j} p_{i, j}$. Therefore:
$$x = \sum_{s \in [d + 1]} \lambda_s \left(\sum_{ j} \lambda_{i_s, j} p_{i_s, j} \right)$$

On the other hand, $\sum_{s \in [d + 1]} \lambda_s \left(\sum_{ j} \lambda_{i_s, j} \tF_{\min}(p_{i_s, j} )\right) < \frac{1}{2} F(y)$. By Carathéodory's theorem, we can make the sum only contains $d + 1$ such $p_{i_s, j}$, which proves the claim. 

\medskip
{Step 3:}

By lemma \ref{lem:convex_cover}, we know that there exists $J^*$ such that $S_{J^*}$ contains a ball $\ball_{r''}(y'')$ inside $Q_{r'} (y')$ of radius $r'' = \frac{r'}{2 \sqrt{d} k d^d}$ where $k = { (d + 1) 2^d \choose d+ 1} \le 2^{2d^2}$ and $y''$ is an integer point.

For simplicity, we denote $J^* = \{ p_{1}, ..., p_{d+1} \}$. By the definition of $S_{J^*}$, there exists $(\lambda_1'', ..., \lambda_{d + 1}'') \in \Delta^{d + 1}$ such that
\begin{enumerate}
\item $$\sum_i \lambda_i'' p_i = y''$$
\item $$ \sum_i \lambda_i'' \tF_{\min} (p_{i}) < \frac{1}{2} F(y)$$
\end{enumerate}

 \medskip
 {Step 4:}Let $P = \conv \{ p_1, ..., p_{d + 1} \}$ with center $y''$. The above argument implies that $\ball_{r''}(y'') \subseteq \conv \{ p_1, ..., p_{d + 1} \}$, when $ r'' = \frac{r'}{2 \sqrt{d} k d^d} \ge \frac{r}{(2d)^{2d^2}} \ge 4 d^8$. By lemma \ref{lem:approximation},  there exists integer points $g_1, ..., g_{d + 1} \in \frac{2}{d^2} P$ (where $\frac{2}{d^2} P$ denotes shrink $P$ of factor $\frac{2}{d^2}$ according to center  $y''$) with
\begin{enumerate} 
\item
$y'' \in \conv\{g_1, ..., g_{d + 1}\}$
\item
For every $i \in [d + 1]$,  there exists $\{ \lambda^i_j \in \Delta^{d+1} \}$ such that $\lambda_i^i  \ge \frac{1}{2d^2}$ and
$$g_i = \lambda^i_i p_i + \sum_{j \not= i} \lambda_j^i g_j $$
\end{enumerate}

\noindent The conditions of the lemma assert that   $\frac{2}{d^2}\convexSet' \subseteq \frac{1}{2}\convexSet$, by $y'' \in \frac{1}{2 } \convexSet, P \subseteq \K'$, we know that $\frac{2}{d^2} P \subseteq \convexSet$.
This  implies that $g_i \in \set{Z}^d \cap \convexSet$, over which the RDF is defined, and we have values $\vLRA(g_i)$ and $\varLRA(g_i)$ to construct $\tF_{\min}$.

\paragraph{Step 5:} By the fact that $g_i = \lambda_i p_i^i + \sum_{j \not= i} \lambda_{j}^i g_j $ and the definition of $\tF_{\min}$, we know that 
$$\tF_{\min}(p_i) \ge \frac{1}{\lambda_i^i } \left( \vLRA(g_i) - \varLRA(g_i) - \sum_{j \not= i} \lambda_{ j}^i [\vLRA(g_j) + \varLRA(g_j) ]\right) $$

Let us write $y''  = \sum_i \lambda_i' g_{i}: (\lambda_1', ..., \lambda_{d+1}' ) \in \Delta^{d + 1}$. By the fact that $p_i = \frac{1}{\lambda_i^i} (g_i - \sum_{j \not= i} \lambda_{j}^ig_j)$ We can calculate:
$$y'' = \sum_i \lambda_i'' p_i, \quad \text{where } \frac{\lambda_i''}{\lambda_i^i}  - \sum_{j \not= i} \frac{\lambda''_j \lambda_{i}^j}{\lambda_j^j} = \lambda'_i $$

From Lemma \ref{lem:approximation} we also obtain that $\lambda_i'' \le 2 \lambda_i'$.

Moreover, for the interpolation:
\begin{eqnarray*}
 && \sum_i \lambda_i'' \tF_{\min} (p_{i}) \\
 &\ge&  \sum_i  \frac{\lambda_i''}{\lambda_i^i } \left( \vLRA(g_i) - \varLRA(g_i) - \sum_{j \not= i} \lambda_{j}^i [\vLRA(g_j) + \varLRA(g_j) ]\right) 
 \\
 &=&  \sum_i  \frac{\lambda_i''}{\lambda_i^i } \left( \vLRA(g_i) + \varLRA(g_i) - \sum_{j \not= i} \lambda_{j}^i [\vLRA(g_j) + \varLRA(g_j) ]\right)  - 2 \sum_{i } \frac{\lambda_i''}{\lambda_i^i } \varLRA(g_i) 
 \\
 &=& \sum_{i } \left(\left[\frac{\lambda_i''}{\lambda_i^i} - \sum_{j \not= i} \frac{\lambda_j'' \lambda_{i}^j}{\lambda_j^j}\right] [ \vLRA(g_i) + \varLRA(g_i)]\right) -  2 \sum_{i } \frac{\lambda_i''}{\lambda_i^i } \varLRA(g_i)
 \\
 &= & \sum_{i }\lambda_i' [\vLRA(g_i)  + \varLRA(g_i)] - 2 \sum_{i } \frac{\lambda_i''}{\lambda_i^i } \varLRA(g_i)
 \\
  &\ge&  \sum_{i }\lambda_i' [\vLRA(g_i)  + \varLRA(g_i)] - 4d^2  \sum_{i } \lambda_i'\varLRA(g_i)  \quad \quad  \mbox{ since $\lambda_i^i \geq \frac{1}{d^2}$,$\lambda_i'' \le 2 \lambda_i'$ }
  \\
  &=& \sum_{i }\lambda_i' [\vLRA(g_i)  - (4d^2  - 1) \varLRA(g_i)] 
\end{eqnarray*}

By assumption, since $g_i$ is a integer point, we get $\vLRA(g_i)  - (8 d^2 + 1) \sigma(g_i)  \ge 0$
\begin{eqnarray*}
\vLRA(g_i)  - (4d^2  - 1) \varLRA(g_i) &\ge& \vLRA(g_i) + \sigma(g_i) - 4d^2 \varLRA(g_i) 
\\
&\geq& F(g_i) - 4 d^2 \sigma(g_i) \quad \quad  \text{   since by definition } v(g_i) + \sigma(g_i) \ge F(g_i)
\\
&\geq&  F(g_i) - \frac{v(g_i) - \sigma(g_i)}{2} \quad \quad \text{ since } (8 d^2 + 1) \sigma(g_i)   \le v(g_i)
\\
&\ge& \frac{1}{2} F(g_i) \quad \quad  \text{   since by definition } v(g_i) - \sigma(g_i) \le F(g_i)
\end{eqnarray*}

Note that by the convexity of $F$, $\sum_{i } \lambda_i' F(g_i) \ge F(y'') \ge F(y)$ (last inequality is due to our choice of $y''$). Thus, 
$$\sum_i \lambda_i'' \tF_{\min} (p_{i}) \ge \sum_i \frac{1}{2} \lambda_i' F(g_i) \geq  \frac{1}{2}  F(y)$$

By contradiction we complete the proof.

\end{proof}

\section{Algorithm and statement of results} \label{sec:algorithm}

\subsection{Algorithm statement and parameter setting}

\begin{enumerate}
\item
$\delta > 0$ - an upper bound on the failure probability of the algorithm
\item
The desired regret bound $\ell = \lvalue$ 
\item
resolution of the grid: $\gridScale  = 2^{3d^2} \log^3 T \ge \extendRatio  \centerShrink^2 \sqrt{d}$.
\item
Scaling factor $\beta = \textendnumEpoch$.
\item
Extension ratio: $\extendRatio = \extendnumEpoch$.
\item
Blow up factor $\eta = 8d^2 + 1$.
\item 
Upper bound on the number of epoch $\epo \le \numEpoch$.
\end{enumerate}

\begin{algorithm}[h!]
\caption{ Bandit Ellipsoid \label{alg:1} } 
\begin{algorithmic}[1]
\STATE Input: A convex set $\set{K} \subseteq \reals^d$,  $\LRA$: a high-probability low regret bandit  algorithm on discrete set of points
\STATE Initialize: Epoch $\epo = 0$, epoch set $\Epo_{\epo} = \emptyset$, $\convexSet_{\epo} = \set{K}$, Grid $\grid_{\epo} =  \grid(\centerShrink \convexSet_{\epo} \cap \convexSet)$
\FOR{$t = 1$ to $T$}
\STATE  Apply the low-regret algorithm on current grid:
$$(p_t, \vLRA_t, \varLRA_t ) \leftarrow \LRA(\grid_{\epo}, \{p_i, x_i, f_i(x_i) \mid i \in \Epo_{\epo} \} )$$
where $p_t,\vLRA_t,\varLRA_t$ are defined as in section \ref{sec:lradefinitions}. 

\STATE Play  a point $x_t \in \grid_{\epo}$ from distribution $p_t (x_t)$,  observe value $f_t(x_t)$. Set $ \Epo_{\epo} = \Epo_{\epo} \cup \{t\}$. 
\STATE (Shift): Shift $v_{t}$ by a constant so that $\min_{x \in \grid_{\epo}} \{v_{t}(x) - \blowUp \sigma_{t} (x) \}= 0$, for simplicity we just keep the same notation for the new $v_{t}$. Moreover, we can shift $F^{\epo} = \sum_{j \in \Epo_{\epo}} f_j$ by the same constant and assume that adversary presents us the (shifted) $f_j$. For simplicity we also keep the same notation for the new $F^{\epo}$.  
\STATE Compute $\FLCE^\tau = \text{FitLCE}(\centerShrink \K_\tau \cap \convexSet,[ \grid_{\epo}, v_\epo,\sigma_\epo])$. 
\IF{ $\forall x \in  \convexSet_{\epo}, \exists j \le \epo,  \FLCE^{j} (x) > \frac{\levelSetValue}{4}$} 
\STATE RESTART (goto Initialize)
\ENDIF

\IF{(\DM) \ $\exists \tilde{x}_{\tau} \in \frac{\convexSet_{\epo}}{\centerShrink}$ such that $\FLCE^\tau (\tilde{x}_{\tau}) \ge \levelSetValue$ }

\STATE $\convexSet_{\epo + 1}  = \text{ShrinkSet}(\convexSet_{\epo} , \tilde{x}_{\tau}, \FLCE^\tau, \grid_{\epo}, \{  \vLRA_t, \varLRA_t  \})$ 
\STATE Set  $ \Epo_{\epo + 1} = \emptyset$, $\grid_{\epo + 1} = \text{grid} (\centerShrink \convexSet_{\epo + 1} \cap \convexSet, \alpha)$, $\epo = \epo + 1$.
\ENDIF
\ENDFOR
\end{algorithmic}
\label{alg:main}
\end{algorithm}

\begin{center}
\begin{figure}
\begin{center}
\caption{Depiction of the algorithm}
\includegraphics[width=0.6\textwidth]{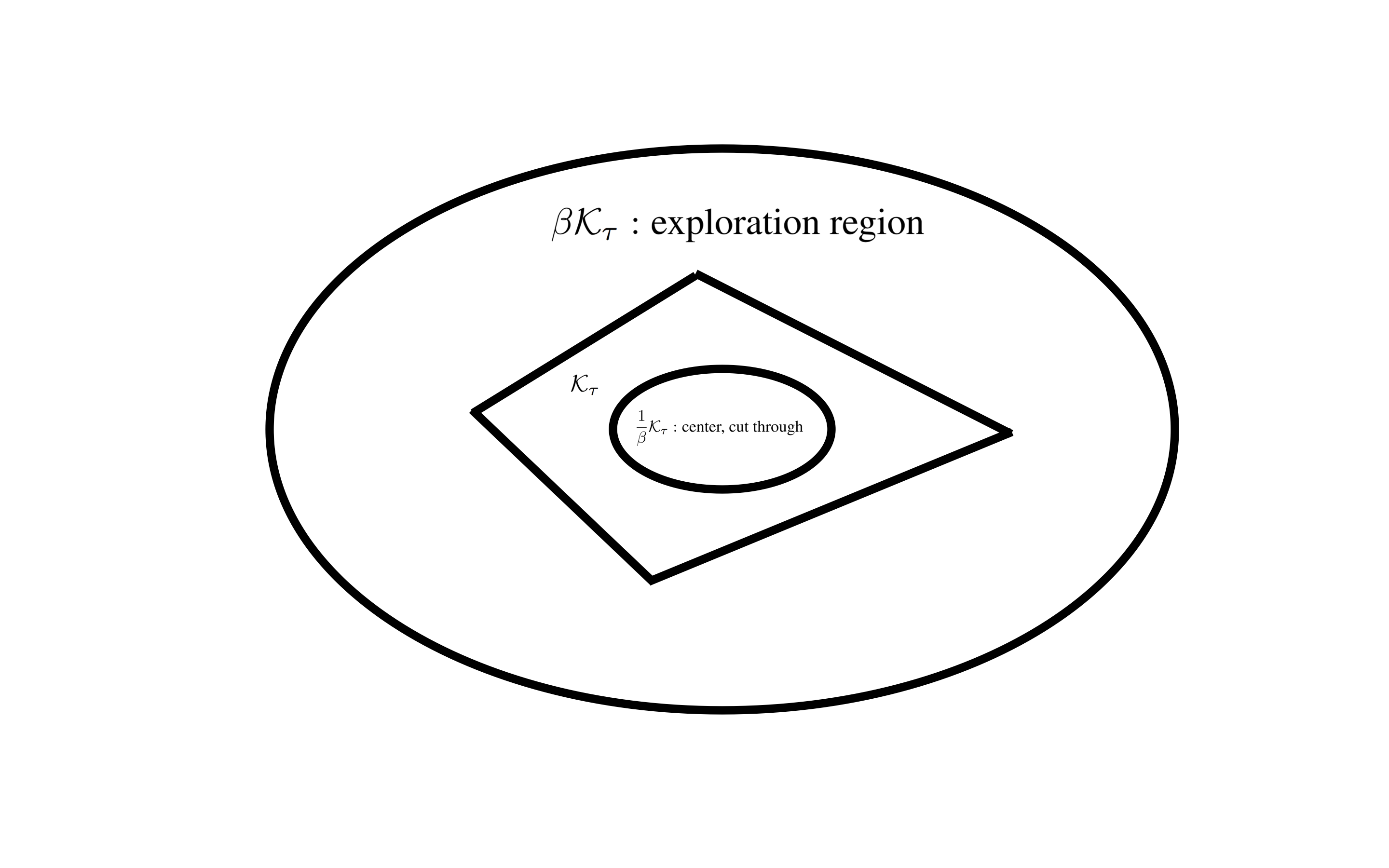}
\end{center}
\end{figure}
\end{center}

This algorithm calls upon two subroutines, FitLCE which was defined in section \ref{sec:geometry}, and ShrinkSet which we now define.

\begin{algorithm}[h!]
\caption{ ShrinkSet } 
\begin{algorithmic}[1]
\STATE Input: Convex set $\convexSet_{\epo} $, convex function $\FLCE^\tau$, point $\tilde{x}_{\tau} \in \K_\tau$, Grid $\grid_{\epo}$, value estimation $\vLRA_t$ and variance estimation $\varLRA_t$. 
\STATE Compute a separation hyperplane $H_{\epo}'$ through $\tilde{x}_{\tau}$ between $\tilde{x}_{\tau}$ and $ \{ y \mid \FLCE^{\tau}(y) < \levelSetValue \}$. Assume $H_{\epo}' = \{ x \mid \langle h_{\epo}, x  \rangle   = w_{\epo}  \}$ and $ \{ y \mid \FLCE^{\tau}(y) < \levelSetValue \} \subseteq \{ y \mid \langle h_{\epo}, x  \rangle   \le w_{\epo}  \}$
\STATE Let $x_{\epo}$ be the center of the \MVEE \ $\ellipsoid_{\epo}$ of $\convexSet_{\epo} $. 
\STATE (Amplify Distance). Let $H_{\epo} = \{ x \mid \langle h_{\epo}, x  \rangle   = z_{\epo} \} $ for some $ z_{\epo} \ge 0$ such that the following holds: 
\begin{enumerate}
\item $ \{ y \mid \FLCE^{\tau}(y) < \levelSetValue \} \subseteq \{ y \mid \langle h_{\epo}, y  \rangle   \le z_{\epo}  \}$
\item $\distance(x_{\epo}, H_{\epo}) = 2 \distance(x_{\epo}, H_{\epo}')$. 
\item $\langle h_{\epo}, x_{\epo}  \rangle   \le z_{\epo}$. 
\end{enumerate}
\STATE Return $\convexSet_{\epo + 1} =  \left( \convexSet_{\epo} \cap \{y  \mid \langle h_{\epo}, y  \rangle   \le z_{\epo} \} \right)$
\end{algorithmic}
\label{alg:shrink_set}
\end{algorithm}

\begin{center}
\begin{figure}[h!]
\begin{center}
\caption{Depiction of the ShrinkSet procedure}
\includegraphics[width=0.6\textwidth]{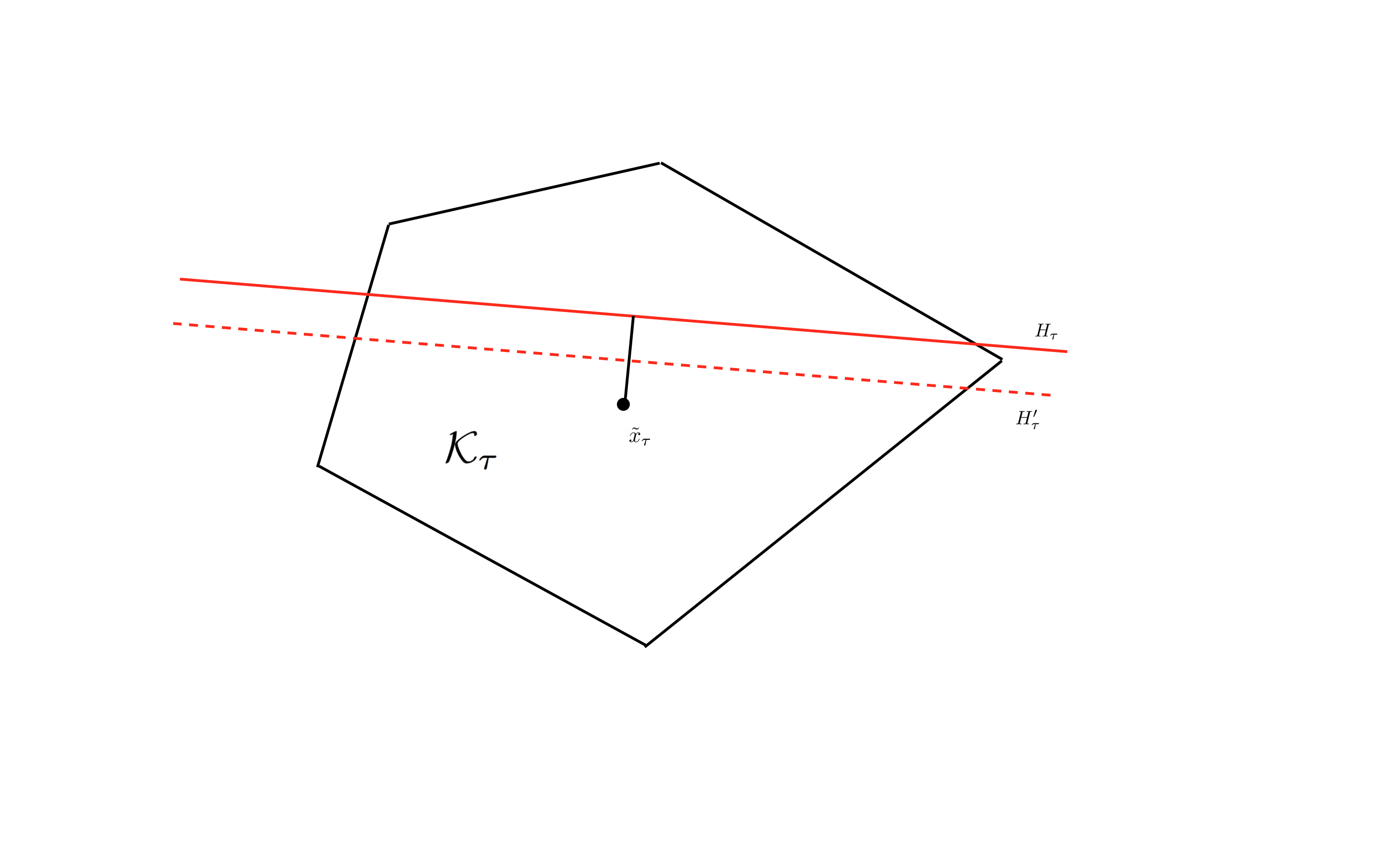}
\end{center}
\end{figure}
\end{center}

\subsection{Statement of main theorem}

\begin{thm}[Main, full algorithm]\label{thm:main} Suppose for all time $t$ in all epoch $\epo$, $\LRA$ outputs $v_t$ and $\sigma_t$ such that for all $x \in \grid_{\epo}$, $\left(\sum_{j \in \Epo_{\epo}, j \le t} f_j(x)\right) \in [v_t(x) - \sigma_t(x), v_t(x) + \sigma_t(x) ]$. Moreover, $\LRA$ achieves a value 
$$v_{\tau} (\set{A}) = \sum_{j \in \Epo_{\epo}, j \le t} f_j(x_j) \le \min_{x \in \grid_{\epo}} \left\{ v_t(x) - \blowUp \sigma_t(x) \right\}  + \frac{\levelSetValue}{\valObtain}$$

 then Algorithm \ref{alg:1}  satisfies 
$$ \sum_t f_t(x_t) - \min_{x^*} \sum_t f_t(x^*) \leq  \levelSetValue$$  
\end{thm}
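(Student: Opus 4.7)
The plan is to decompose the total regret both across the (possibly many) restart blocks produced by Algorithm~\ref{alg:1} and, within each block, across the at most $\epo\le \numEpoch$ epochs. Let $x^\star \in \argmin_{x\in\K}\sum_t f_t(x)$. The argument rests on three invariants: (i) throughout any restart block, the global optimum $x^\star$ lies in $\convexSet_{\epo}$ for every epoch $\epo$ of the block; (ii) each invocation of \NE\ shrinks the John ellipsoid of $\convexSet_{\epo}$ by a constant factor, so the number of epochs per block is $O(d^{2}\log T)$; (iii) inside each epoch the per-epoch regret of $\LRA$ on $\grid_{\epo}$, plus the discretisation error between the grid and $F^{\epo}$, is $\tilde O(\sqrt{T})$. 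Combining (ii) and (iii) bounds the regret in a block in which no restart occurs; handling the restarts then completes the proof.

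For invariant (i) I would invoke the Discretisation Lemma (Lemma~\ref{lem:geometry}) epoch by epoch. After the shift in step~6 of Algorithm~\ref{alg:1}, the hypothesis of Theorem~\ref{thm:main} guarantees $v_{\epo}(x)-\blowUp\,\varLRA_{\epo}(x)\ge 0$ on the grid, which is exactly the $(8d^{2}+1)$-blow-up condition required by Lemma~\ref{lem:geometry}; the same hypothesis plus the confidence interval assumption supplies the convex lower/upper bounds on $F^{\epo}$ that the lemma needs. When $\DM$ fires with $\FLCE^{\epo}(\tilde x_{\epo})\ge \levelSetValue$, Lemma~\ref{lem:geometry} applied to $F^{\epo}$ produces a point $y'$ within the prescribed radius of $\tilde x_{\epo}$ with $F^{\epo}(y')\ge \levelSetValue/2$. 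By convexity of $F^{\epo}$ the sub-level set $\{F^{\epo}<\levelSetValue/2\}$ lies on one side of any hyperplane through $\tilde x_{\epo}$ separating $\tilde x_{\epo}$ from $\{\FLCE^{\epo}<\levelSetValue\}$; the factor-of-two ``Amplify Distance'' step in \NE\ absorbs the gap between $H_{\epo}'$ and $H_{\epo}$, ensuring $x^\star$ (if still competitive) is on the retained side. Invariant (ii) follows from the standard ellipsoid analysis applied to the amplified cut, which passes at distance $\ge 2\,\distance(x_{\epo},H_{\epo}')$ from the centre; $O(d^{2}\log T)$ cuts then suffice before the ellipsoid becomes smaller than the $\gridScale$-resolution of the grid, matching the bound $\epo\le \numEpoch$.

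For invariant (iii), the grid size is $|\grid_{\epo}|\le (2d\gridScale)^{d}$, so the EXP3.P bound from Theorem~2.1 gives, with probability $\ge 1-\delta/\poly(T)$, per-epoch regret at most $8\sqrt{|\Gamma_{\epo}|\,|\grid_{\epo}|\log(T|\grid_{\epo}|/\delta)}=\tO(2^{d^{3}/2}\sqrt{T}\log(1/\delta))$, which fits inside the budget $\lvalue$ once multiplied by the $O(d^{2}\log T)$ epochs of a block. The extra hypothesis $\valu(\set A)\le \min_{x\in\grid_{\epo}}\{v_t(x)-\blowUp\varLRA_t(x)\}+\levelSetValue/\valObtain$ contributes only $O(\levelSetValue/(d^{2}\log T))$ per epoch and is subsumed. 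The remaining item is to convert ``regret against the best grid point'' into ``regret against the best point of $\convexSet_{\epo}$'': here Lemma~\ref{property:grid} (instantiated with the parameters $\centerShrink=\textendnumEpoch$, $\extendRatio=\extendnumEpoch$ chosen in Section~\ref{sec:algorithm}) shows that for every $x\in\convexSet_{\epo}$ there is a grid point $x_g$ whose $\FLCE$-value dominates that of $x$, so LRA's regret on $\grid_{\epo}$ controls the regret against $\convexSet_{\epo}$ up to the discretisation error already handled above.

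Finally I handle the restart trigger. At the moment of restart, every $x\in\convexSet_{\epo}$ has $\FLCE^{j}(x)>\levelSetValue/4$ for some $j\le\epo$; by the same application of Lemma~\ref{lem:geometry} used for invariant (i), this forces $F^{j}(x)\ge \levelSetValue/8$ at that epoch, in particular for $x^\star$ if $x^\star$ ever belonged to some $\convexSet_{\epo}$ of this block. Meanwhile Algorithm~\ref{alg:1}'s cumulative loss in the block is, by the per-epoch value guarantee, at most that of the best grid point plus $O(\levelSetValue/(d^{2}\log T))$ per epoch, i.e.\ already $\le \levelSetValue/8 + O(\levelSetValue/d^{2}\log T)\cdot \numEpoch$; thus the algorithm has accumulated \emph{non-positive} regret against $x^\star$ by the time it restarts, and the subsequent blocks can be analysed afresh from scratch with the same budget. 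The main obstacle I anticipate is exactly this last bookkeeping step: simultaneously showing (a) the amplified cut in \NE\ never excises $x^\star$ while $x^\star\in \convexSet_{\epo}$, and (b) when the cut \emph{does} misplace $x^\star$ (so that the restart is eventually triggered), the loss accumulated up to that instant is small enough to be absorbed into the final $\levelSetValue$ budget. Both pieces hinge on the quantitative constants in the Discretisation Lemma and in Lemma~\ref{property:grid} conspiring correctly with the choices $\centerShrink,\extendRatio,\blowUp,\gridScale$ made at the start of Section~\ref{sec:algorithm}.
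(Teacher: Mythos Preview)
Your central invariant (i)---that $x^\star$ remains in $\convexSet_{\epo}$ throughout a restart block---cannot be maintained in the adversarial setting and is not what the paper proves. Because the $f_t$ are arbitrary, even though $\sum_{\tau} F^{\tau}(x^\star)$ is globally minimal, at any fixed epoch $\tau$ the value $F^{\tau}(x^\star)$ may well exceed $\levelSetValue$, so the cut produced by \NE\ can legitimately excise $x^\star$. This does \emph{not} trigger RESTART, which fires only when \emph{every} $x\in\convexSet_{\epo}$ has $\FLCE^{j}(x)>\levelSetValue/4$ for some $j$; many epochs can pass with $x^\star\notin\convexSet_{\epo}$ and no restart. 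Your proposed dichotomy ``either $x^\star$ stays in $\convexSet_{\epo}$, or the cut misplaces $x^\star$ and a restart is eventually triggered'' is therefore false, and once $x^\star$ has left, your per-epoch regret-on-the-grid bound in (iii) gives no control against it.

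The paper abandons tracking $x^\star$ altogether. After the shift, the algorithm's cumulative loss is at most $(\epo+1)\cdot\levelSetValue/(\valObtain)\le\levelSetValue/2$ directly from the hypothesis. All the work goes into lower-bounding $\sum_{i\le\tau}F^i(x)$ uniformly for \emph{every} $x\in\K$. Lemma~\ref{lem:inside} (``During an epoch'') uses the grid property together with the fact that \DM\ has not yet fired to show $F^{\tau}(x)\ge -2\levelSetValue/\extendRatio$ for $x\in\convexSet_{\epo}$ and $F^{\tau}(x)\ge -2\distanceRatio(x,\convexSet_{\epo})\levelSetValue/\extendRatio$ outside. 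Lemma~\ref{lem:begin} (``Beginning of each epoch'') then aggregates across epochs and, crucially, proves that any $x\notin\convexSet_{\epo}$ satisfies $\sum_{i<\tau}F^i(x)\ge \distanceRatio(x,\convexSet_{\epo})\levelSetValue/(64d)>0$: the single epoch $j$ at which $x$ was cut contributes at least $\distanceRatio(x,\convexSet_{\epo})\levelSetValue/(4d)$ (via $\FLCE^{j}\le F^{j}$ and convexity of $\FLCE^{j}$ along the segment from the surviving low-$\FLCE$ point $x_0$ through the separating hyperplane), which swamps the $O(\epo\cdot\distanceRatio(x,\convexSet_{\epo})\levelSetValue/\extendRatio)$ negative contribution of all other epochs by the choice $\extendRatio=\extendnumEpoch$. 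That is the missing idea: points outside the current set have \emph{positive} cumulative shifted loss, so the algorithm automatically has non-positive regret against them, whether or not they happen to be $x^\star$. Lemma~\ref{lem:restart} is the analogous statement at RESTART time.

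A secondary point: your reading of Lemma~\ref{lem:geometry} is inverted. It asserts that large $F$ forces large $\FLCE$ nearby, not the converse; the direction $\FLCE\le F$ is immediate (Corollary~\ref{lem:geo_co}(1)). The paper invokes Lemma~\ref{lem:geometry} only to \emph{upper}-bound $F^{\tau}$ near the centre (Corollary~\ref{lem:geo_co}(2)): were $F^{\tau}$ large in $\tfrac{1}{2\centerShrink}\convexSet_{\epo}$, the lemma would make $\FLCE^{\tau}$ large somewhere in $\tfrac{1}{\centerShrink}\convexSet_{\epo}$ and \DM\ would already have fired. This upper bound at the centre is what drives the convexity argument in Lemma~\ref{lem:inside}.
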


\begin{cor}[Exp3.P.]\label{cor:main} Algorithm \ref{alg:main} with $\set{A}$ being Exp3.P satisfies the condition in Theorem \ref{thm:main} with probability $1 - \delta$ for $$\levelSetValue = \lvalue$$
\end{cor}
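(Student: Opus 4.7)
The plan is to verify, for EXP3.P run on the discrete grid $\grid_{\epo}$ within each epoch $\epo$, both hypotheses of Theorem \ref{thm:main}, and then union-bound over epochs (and restarts). The starting point is the standard high-probability analysis of EXP3.P from \cite{Auer2003}, which simultaneously delivers two guarantees with probability at least $1-\delta'$: (i) the importance-weighted estimators with bias correction satisfy $|\tilde L_t(x) - \sum_{j\le t} f_j(x)| \le C\sqrt{TN\log(TN/\delta')}$ uniformly in $t$ and $x$, where $N = |\grid_{\epo}|$ (this is essentially the Bernstein-type martingale step inside the EXP3.P proof); and (ii) the realized regret is bounded by $8\sqrt{TN\log(TN/\delta')}$. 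Set $v_t(x) \equaldef \tilde L_t(x)$ and $\sigma_t(x) \equiv \sigma_{\max} \equaldef C\sqrt{TN\log(TN/\delta')}$, so that (i) is exactly hypothesis~1 of Theorem \ref{thm:main}.

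For hypothesis~2, let $x_\star = \argmin_{x \in \grid_{\epo}} [v_t(x) - \blowUp \sigma_t(x)]$. Applying hypothesis~1 at $x_\star$ gives $\sum_j f_j(x_\star) \ge v_t(x_\star) - \sigma_t(x_\star)$, so
\[
\min_{x \in \grid_{\epo}}[v_t(x) - \blowUp\sigma_t(x)] \ge \sum_j f_j(x_\star) - (\blowUp + 1)\sigma_{\max} \ge \min_{x \in \grid_{\epo}} \sum_j f_j(x) - (\blowUp + 1)\sigma_{\max}.
\]
Combining this with the EXP3.P regret bound yields $\sum_j f_j(x_j) \le \min_x[v_t(x) - \blowUp\sigma_t(x)] + (\blowUp + 1)\sigma_{\max} + 8\sqrt{TN\log(TN/\delta')}$, so it suffices to bound this slack by $\levelSetValue/\valObtain$. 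By the grid claim in Section~\ref{sec:preliminaries}, $N \le (2d\gridScale)^d = (2d\cdot 2^{3d^2}(\log T)^3)^d = 2^{O(d^3)}(\log T)^{O(d)}$; combined with $\blowUp = 8d^2+1$ and $\valObtain = 1024 d^3 \log T$, the slack is $2^{O(d^3)}(\log T)^{O(d)}\sqrt{T\log(1/\delta')}$, comfortably absorbed by $\levelSetValue/\valObtain$ whenever $\levelSetValue \ge \lvalue$ (the $2^{d^4}$ and $(\log T)^{2d}$ factors leave ample room against the $2^{O(d^3)}$ and $(\log T)^{O(d)}$ terms coming from $\sqrt N$).

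To close the argument, take a union bound over the epochs of the algorithm: within a single run between restarts, Algorithm~\ref{alg:1} caps epochs at $\tau \le \numEpoch$; the restart rule guarantees each restart happens only when so much estimated progress has been made that a polynomial-in-$(d,\log T)$ bound on restarts also holds (the same argument invoked inside Theorem~\ref{thm:main}). Setting $\delta' = \delta / \poly(d, \log T)$ gives overall failure probability at most $\delta$, and only adds an additive $O(\log(d\log T))$ to $\log(1/\delta')$, still within the $\log\frac{1}{\delta}$ factor of $\levelSetValue$. The main---and essentially only---obstacle is the bookkeeping of the $d$-exponent: one must verify that the blowup factor $\blowUp$, the $\valObtain$ denominator, the grid-size $N^{1/2} = 2^{O(d^3)}(\log T)^{O(d)}$, and the union bound over $\poly(d,\log T)$ epochs all fit inside the $2^{d^4}(\log T)^{2d}\log\frac{1}{\delta}\sqrt T$ budget; since the dominant term is $\sqrt N$ at exponent $d^3$, and every other factor is at most polynomial in $d\log T$, this holds with substantial slack and contains no new analytic content beyond EXP3.P's high-probability guarantee.
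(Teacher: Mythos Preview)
There is a genuine gap in your argument: the uniform confidence bound $\sigma_{\max}=C\sqrt{TN\log(TN/\delta')}$ that you postulate for the importance-weighted estimators is \emph{not} delivered by EXP3.P. For an arm $x$ that is rarely sampled (so $p_i(x)\approx \gamma/N$ for all $i$), the conditional variance of $\sum_{i\le t}\hat f_i(x)$ is of order $\sum_{i\le t}1/p_i(x)\approx tN/\gamma = t\sqrt{tN/\log N}$, and Freedman/Bernstein then gives a confidence radius of order $t^{3/4}N^{1/4}$, not $\sqrt{tN}$. Hence your ``hypothesis~1'' with constant $\sigma_t(x)\equiv\sigma_{\max}$ fails, and the downstream chain for hypothesis~2, which multiplies this $\sigma_{\max}$ by $(\eta+1)$, collapses with it.

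The paper's (implicit) route is different and is the reason the appendix presents a \emph{modified} EXP3.P: it outputs the \emph{per-arm} widths $\sigma_t(x)=\sum_{i\le t}\alpha/(p_i(x)\sqrt{TN})$, for which condition~1 is exactly the standard one-sided martingale lemma of \cite{Auer2003}. Condition~2 is then not obtained by the detour through a uniform $\sigma_{\max}$, but directly from the weight update: note that the paper's EXP3.P uses $w_{t+1}(j)=w_t(j)\exp\bigl(\tfrac{\gamma}{K}(\hat g_t(j)+\eta\alpha/(p_t(j)\sqrt{TK}))\bigr)$ and $w_1(j)=\exp(\eta\alpha\gamma\sqrt{T/K})$, i.e.\ the blow-up factor $\eta$ is baked into the exponential weights. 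Running the EXP3.P regret analysis with these $\eta$-inflated optimistic estimates shows that the algorithm's cumulative loss is within $O(\eta\sqrt{TN\log(TN/\delta)})$ of $\min_x\{v_t(x)-\eta\sigma_t(x)\}$, which is precisely hypothesis~2 once one checks $\eta\sqrt{TN\log(TN/\delta')}\le \ell/(\valObtain)$ using $N\le(2d\alpha)^d$. Your exponent bookkeeping in the last paragraph is fine; what is missing is replacing the false uniform-$\sigma$ step by this per-arm argument (or, equivalently, by citing the modified EXP3.P analysis rather than the vanilla one).
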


\subsection{Running time}

Our algorithm runs in time $O\left( (\log T)^{\poly(d)} \right)$, which follows from Theorem \ref{thm:LCE} and the running time of Exp3.P on $K \le (2d \gridScale)^{d} $ Experts

\subsection{Analysis sketch}

Before going to the details, we briefly discuss the steps of the proof.

{\bf Step 1:} In the algorithm, we shift the input function so that the player can achieve a value $\le \sqrt{T}$. Therefore, to get the regret bound, we can just focus on the minimal value of $\sum_t f_t$.

{\bf Step 2:} We follow the standard Ellipsoid argument, maintaining a shrinking set, which at epoch $\tau $ is denoted $\K_\tau$. We show the volume of this set decreases by a factor of at least $1 - \frac{1}{d}$, and hence the number of epochs between iterative RESTART operations can be bounded by $O( d^2 \log T)$ (when the diameter of $\K_\tau$ along one direction decreases below $\frac{1}{\sqrt{T}}$, we do not need to further discretizate along that direction).  

{\bf Step 3:} We will show that inside each epoch $\epo$, for every $x \in \K_{\epo}$, $\sum_{t: t \text{ in epoch }\tau} f_t(x)$ is lower bounded by $- \frac{2\ell}{\gamma} $ for $\ell \approx \sqrt{T}, \gamma \ge 1$. For point $x$ outside the $\K_{\tau}$, $\sum_{t: t \text{ in epoch }\tau} f_t (x)$ is lower bounded by $- \frac{2\ell}{\gamma} \gamma(x,  \K_{\epo})$.

{\bf Step 4:} We will show that when one epoch $\epo$ ends, for every point $x$ cut off by the separation hyperplane, $\sum_{t: t \text{ in epoch }\tau} f_t (x)$ is lower bounded by $ \frac{ \ell}{2} \gamma(x,  \K_{\epo})$

{\bf Step 5:} Putting the result of 3, 4 together, we know that for a point outside the current set $\K_{\epo}$, it must be cut off by a separation hyperplane at some epoch $j \le \epo$. Moreover, we can find such $j$ with $ \gamma(x,  \K_{j}) \ge \frac{ \gamma(x,  \K_{\tau}) }{d}$. Which implies that 
$$\sum_t f_t(x) = \sum_{t: t \text{ in epoch }1, 2, ..., j - 1,  j + 1, ..., \tau} f_t(x) +  \sum_{t: t \text{ in epoch } j} f_t(x) \ge  - \frac{2 \tau  \ell}{\gamma} \gamma(x,  \K_{\epo}) + \frac{\ell \gamma(x,  \K_{\epo}) }{2d} \approx \sqrt{T}$$

By our choice of $\gamma \ge 8d \tau$. Therefore, when the adversary wants to move the optimal outside the current set $\K_{\epo}$, the player has zero regret. Moreover, by the result of 3, inside current set $\K_{\epo}$, the regret is bounded by $\tau \frac{2\ell}{\gamma} \approx \sqrt{T}$. 
\\
The crucial steps in our proof are {\bf Step 3} and {\bf Step 4}. Here we briefly discuss about the intuition to prove the two steps.

{\bf Intuition of Step 3:} For $x \in \K_{\tau}$, we use the grid property (Property of grid, \ref{property:grid}) to find a grid $x_g$ point such that $x_c = x_g + \gamma (x_g - x)$ is close to the center of $\K_{\tau}$. Since $x_g$ is a grid point, by shifting we know that $$\sum_{t: t \text{ in epoch }\tau} f_t(x_g) \ge 0$$

Therefore, if $\sum_{t: t \text{ in epoch }\tau} f_t(x)< - \frac{2\ell}{\gamma}$, by convexity of $f_t$, we know that $\sum_{t: t \text{ in epoch }\tau} f_t(x_c) \ge 2 \ell$. Now, apply discretization Lemma \ref{lem:geometry}, we know that there is a point $x'_c$ near $x_c$ such that $\FLCE^{\tau} (x_c') \ge \ell$, by our \DM \ condition, the epoch $\tau$ should end. Same argument can be applied to $x \notin \K_{\tau}$.

{\bf Intuition of Step 4:} We use the fact that the algorithm does not RESTART, therefore, according to our condition, there is a point $x_0 \in \K_{\tau}$ with $\FLCE^{\tau}(x_0) \le \frac{\ell}{4}$. Observe that the separation hyperplane of our algorithm separates $x_0$ with points whose $\FLCE^{\tau} \ge \ell$. Using the convexity of $\FLCE$, we can show that $\FLCE^{\tau}(x) \ge \frac{ \ell}{2} \gamma(x,  \K_{\epo})$. Apply the fact that $\FLCE^{\tau}$ is a lower bound of $\sum_{t: t \text{ in epoch }\tau} f_t$ we can conclude $\sum_{t: t \text{ in epoch }\tau} f_t(x) \ge \frac{ \ell}{2} \gamma(x,  \K_{\epo})$.

Notice that here we use the convexity of $\FLCE$, and also the fact that it is a lower bound on $F$ (standard convex regression is not a lower bound on $F$, see section \ref{sec:regression} for further  discussion on this issue).

Now we can present the proof for general dimension $d$

To prove the main theorem we need the following lemma, starting from the following corollary of Lemma \ref{lem:geometry}:

\begin{cor}\label{lem:geo_co}
~
\\
(1). For every epoch $\epo$, $\forall x \in \centerShrink \convexSet_{\epo} \cap \convexSet$, $$\FLCE^{\epo} (x)\le F^{\epo}(x) = \sum_{i \in \Epo_{\epo}} f_i(x)$$.
\\
(2). For every epoch $\epo$, let $\centerPoint_{\epo}$ be the center of the \MVEE \ of $\convexSet_{\epo}$, then $F^{\epo}(x) = \sum_{i \in \Epo} f_i(x) \le 2\levelSetValue$.
\end{cor}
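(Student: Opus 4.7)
For part (1), the plan is to show that $\tf_{\min}$ is already a pointwise lower bound on $F^\tau$, and then invoke that the SLCE of any function is pointwise below that function (take the trivial convex combination $x = 1 \cdot x$ in the definition of $\SLCE$). Fix any grid point $x_i \in \grid_\tau$ and let $h_i$ be a subgradient of the convex function $F^\tau := \sum_{i \in \Gamma_\tau} f_i$ at $x_i$. Convexity gives $\langle h_i, x_j - x_i\rangle \le F^\tau(x_j) - F^\tau(x_i)$, and the confidence-interval hypothesis of Theorem \ref{thm:main} upgrades this to $\langle h_i, x_j - x_i\rangle \le [v_t(x_j) + \sigma_t(x_j)] - [v_t(x_i) - \sigma_t(x_i)]$, which is exactly the feasibility condition for $h$ in the LP defining $\tf^i_{\min}$ (Definition \ref{defn:min_extension}). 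Hence $h_i$ is feasible and $\tf^i_{\min}(x) \le \langle h_i, x - x_i\rangle + [v_t(x_i) - \sigma_t(x_i)] \le \langle h_i, x - x_i\rangle + F^\tau(x_i) \le F^\tau(x)$. Taking the max over $i$ gives $\tf_{\min} \le F^\tau$ pointwise on $\beta \K_\tau \cap \K$, and $\FLCE^\tau = \SLCE(\tf_{\min}) \le \tf_{\min} \le F^\tau$.

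For part (2), the plan is to apply the Discretization Lemma (Lemma \ref{lem:geometry}) to the convex function $F := F^\tau$ at the point $y := x_\tau$. The hypotheses must be verified: $F^\tau$ is convex as a sum of convex functions; the non-negativity $v_t, \sigma_t \ge 0$ together with $v_t - (8d^2+1)\sigma_t \ge 0$ on the grid is enforced by the Shift step combined with the blow-up factor $\eta = 8d^2 + 1$; the confidence sandwich $F^\tau(x) \in [v_t(x)-\sigma_t(x), v_t(x)+\sigma_t(x)]$ at grid points is assumed in Theorem \ref{thm:main}; and the geometric enclosure on the bounding box is furnished by John's theorem with the parameters of Section \ref{sec:algorithm}. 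The lemma then produces a point $y' \in \ball_r(x_\tau)$, with $r = 2^{3d^2}$, such that $\FLCE^\tau(y') \ge \tfrac{1}{2} F^\tau(x_\tau)$.

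To close, I would check that $y'$ lands in the inner set $\K_\tau/\beta$: because $x_\tau$ is the MVEE center, a Euclidean $r$-ball around $x_\tau$ sits inside $\K_\tau/\beta$ so long as the grid resolution $\alpha$ and scaling $\beta$ satisfy $\alpha \gg \beta \cdot r$ in the appropriate norms, which is exactly the parameter regime chosen ($\alpha = 2^{3d^2}\log^3 T$, $\beta = \textendnumEpoch$). The corollary is being invoked in a round of epoch $\tau$ in which the DecideMove condition has not yet forced a set shrink, so by definition $\FLCE^\tau(\tilde{x}) < \ell$ for every $\tilde{x} \in \K_\tau/\beta$, and in particular $\FLCE^\tau(y') < \ell$. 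Chaining the two inequalities gives $\tfrac{1}{2} F^\tau(x_\tau) \le \FLCE^\tau(y') < \ell$, i.e.\ $F^\tau(x_\tau) \le 2\ell$ as claimed (any single extra term $f_t(x_\tau)$ added at the very round that triggers DecideMove is $O(1)$ and absorbed into the constant).

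The main obstacle is the geometric bookkeeping in part (2): one must transport the conclusion of Lemma \ref{lem:geometry}, stated in a canonical coordinate system where the grid is $\mathbb{Z}^d$, back through the affine map of Algorithm \ref{alg:grid} into the actual working set $\beta \K_\tau \cap \K$, and verify that the $r$-ball produced by the lemma is small compared to the Minkowski inner radius $1/\beta$ defining $\K_\tau/\beta$. Both reduce to plugging in the parameters $\alpha, \beta, \gamma, \eta$ from Section \ref{sec:algorithm} and introduce no new conceptual ideas beyond what is already established in Lemmas \ref{property:grid} and \ref{lem:geometry}.
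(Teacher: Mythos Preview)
Your approach to both parts is correct and matches the paper's. For part (1) the paper literally writes ``(1) is just due to the definition of LCE''; your argument via the feasibility of a subgradient of $F^\tau$ in the LP defining $\tf^i_{\min}$ is exactly the content hidden in that one-liner. For part (2) the paper also invokes the Discretization Lemma to produce a nearby point $x'$ with $\FLCE^\tau(x') \ge \tfrac{1}{2} F^\tau(x)$ and then appeals to the failure of the \textsf{DecideMove} condition on $\K_\tau/\beta$, just as you do.

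One scope issue worth flagging: the statement as written is ambiguous (it introduces the center $x_\tau$ but then bounds $F^\tau(x)$ for an unspecified $x$), and you chose to prove the bound only at $x = x_\tau$. The paper's own proof in fact establishes the inequality for \emph{every} $x \in \frac{1}{2\beta}\K_\tau$, and this is what is actually used downstream (see the proof of Lemma~\ref{lem:inside}, which applies part~(2) at an arbitrary point $x_c \in \frac{1}{2\beta}\K_\tau$, not at the center). Your argument extends to this generality with no change: the Discretization Lemma applies at any $y$ in (a suitable shrinking of) the working set, and the resulting $y'$ still lands in $\K_\tau/\beta$ because a ball of radius $r$ around any point of $\frac{1}{2\beta}\K_\tau$ is contained in $\frac{1}{\beta}\K_\tau$ under the chosen parameters. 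Just state and prove part~(2) for all $x \in \frac{1}{2\beta}\K_\tau$ rather than for the center alone.
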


\begin{proof}

(1) is just due to the definition of LCE. (2) is due to the Geometry Lemma on $F^{\epo}$: for every $x \in  \frac{\convexSet}{2\centerShrink}$, there exists $x' \in \left(x + \frac{\convexSet_{\epo}}{2\centerShrink} \right) \subseteq \frac{\convexSet_{\epo}}{\centerShrink}$ such that $\FLCE^{\epo} (x' ) \ge \frac{1}{2}F^{\epo}(x)$
\end{proof}

\begin{lem}[During an epoch]\label{lem:inside}  
During every epoch $\epo$ the following holds:
$$ F^\tau(x)  \ge \mycases {-   \frac{2 \levelSetValue}{\twoExtendRatio}}{ x \in \set{K} \cap \convexSet_{\epo} }
{- \frac{2 \distanceRatio(x, \convexSet_{\epo}) \levelSetValue}{\twoExtendRatio} }{ x \in \set{K} \cap \convexSet_{\epo}^c} $$


\end{lem}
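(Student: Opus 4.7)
The plan is to argue by contradiction, treating the two cases $x \in \K \cap \K_\tau$ and $x \in \K \cap \K_\tau^c$ separately. In both cases the strategy is the same: use the grid property (Lemma \ref{property:grid}) to \emph{amplify} a hypothetical violation of the lower bound at $x$ into a large positive value of $F^\tau$ at a point $x_c$ deep inside $\K_\tau$, then apply the discretization lemma (Lemma \ref{lem:geometry}) to convert this into a nearby point $x_c'$ with $\FLCE^\tau(x_c') \geq \ell$, contradicting the fact that the DecideMove branch has not yet fired at this iteration of epoch $\tau$.

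For the first case, suppose for contradiction that $F^\tau(x) < -2\ell/\gamma$ for some $x \in \K_\tau$. Part 1 of Lemma \ref{property:grid} furnishes a grid point $x_g \in \grid_\tau$ such that $x_c := x_g + \gamma(x_g - x) \in \tfrac{1}{2\beta}\K_\tau$. Rewriting as $x_g = \tfrac{1}{1+\gamma}x_c + \tfrac{\gamma}{1+\gamma}x$, convexity of $F^\tau$ (as a sum of convex losses) gives
\[
F^\tau(x_c) \;\geq\; (1+\gamma)\,F^\tau(x_g) - \gamma\, F^\tau(x).
\]
The Shift step of Algorithm \ref{alg:1} enforces $v_\tau(y) - \eta\sigma_\tau(y) \geq 0$ for every $y \in \grid_\tau$, and the confidence-interval guarantee of $\LRA$ then yields $F^\tau(x_g) \geq v_\tau(x_g) - \sigma_\tau(x_g) \geq 0$ since $\eta \geq 1$. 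Substituting and using $F^\tau(x) < -2\ell/\gamma$ delivers $F^\tau(x_c) > 2\ell$.

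Now invoke Lemma \ref{lem:geometry} on the RDF $(\grid_\tau, v_\tau, \sigma_\tau)$ and the convex function $F^\tau$: the blow-up factor $\eta = 8d^2+1$ matches the hypothesis $v - \eta\sigma \geq 0$ of the discretization lemma, and $x_c \in \tfrac{1}{2\beta}\K_\tau$ is deep enough in the working convex set for the lemma to apply. The lemma then produces a point $x_c'$ in a ball of radius $r = 2^{3d^2}$ around $x_c$ with $\FLCE^\tau(x_c') \geq \tfrac{1}{2}F^\tau(x_c) > \ell$; and because $r$ is much smaller than the inner radius of $\tfrac{1}{\beta}\K_\tau$ around $x_c$ under the choice $\beta = \textendnumEpoch$, the point $x_c'$ still lies in $\tfrac{1}{\beta}\K_\tau$, so the DecideMove branch would have fired — contradiction. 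The second case, $x \in \K \setminus \K_\tau$, is structurally identical: invoke part 2 of Lemma \ref{property:grid} with $\mu := \gamma/\distanceRatio(x,\K_\tau)$ in place of $\gamma$, conclude $F^\tau(x_c) > 2\ell$ from the hypothesis $F^\tau(x) < -2\distanceRatio(x,\K_\tau)\ell/\gamma$ via the same convex-combination identity, and close with the discretization lemma exactly as above.

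The main obstacle is the parameter bookkeeping rather than any conceptual subtlety: one must verify that (i) $x_c$ lies deeply enough in $\K_\tau$ that the ball $\ball_r(x_c)$ returned by the discretization lemma is contained in $\tfrac{1}{\beta}\K_\tau$, so that the DecideMove quantifier over $\tfrac{\K_\tau}{\beta}$ applies; (ii) the grid $\grid_\tau$, built from $\beta\K_\tau \cap \K$ at resolution $\alpha$, satisfies both the resolution hypothesis $\alpha \geq 2(\gamma+1)\beta^2\sqrt{d}$ of Lemma \ref{property:grid} and the discreteness hypothesis of Lemma \ref{lem:geometry}; and (iii) the constants $\beta, \gamma, \eta, r, \alpha$ from Section \ref{sec:algorithm} are mutually consistent. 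These conditions are met by the explicit parameter choices listed at the start of Section \ref{sec:algorithm}.
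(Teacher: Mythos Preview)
Your proposal is correct and matches the paper's argument essentially line for line: both use Lemma~\ref{property:grid} to locate a grid point $x_g$ and the amplified point $x_c \in \tfrac{1}{2\beta}\K_\tau$, use convexity together with the shift $F^\tau(x_g)\ge 0$, and then invoke the discretization lemma against the fact that \DM\ has not fired. The only cosmetic difference is that the paper packages the discretization-lemma step as Corollary~\ref{lem:geo_co}(2) (``$F^\tau(x'')\le 2\ell$ for every $x''\in\tfrac{1}{2\beta}\K_\tau$'') and argues directly, whereas you unwrap that corollary and phrase the same implication as a proof by contradiction.
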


\begin{lem}[Number of epoch]  \label{lem:number}There are at most $\numEpoch$ many epochs before \emph{RESTART}.
\end{lem}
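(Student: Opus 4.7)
My plan is to follow the classical ellipsoid-method volume argument: show that at each epoch the MVEE of $\K_\tau$ loses a constant factor of its volume every $O(d)$ epochs, and combine this with a lower bound on how small this MVEE can become before RESTART must fire. The bound $\numEpoch$ then falls out of the standard $\log(V_0/V_{\min})/(1/(2(d{+}1))) = O(d^2\log T)$ estimate.

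First, I would work in the linear frame where the MVEE $\ellipsoid_\tau$ of $\K_\tau$ is the unit ball centered at the origin $x_\tau$. In this frame, the condition $\tilde{x}_\tau \in \K_\tau/\centerShrink$ combined with the definition of $\distanceRatio$ and the normalization $\|\cdot\|_\K = d\|\cdot\|_{\ellipsoid_\K}$ gives $\|\tilde{x}_\tau - x_\tau\|_2 \le 1/(d\centerShrink)$. Since $H_\tau'$ passes through $\tilde{x}_\tau$, its Euclidean distance to $x_\tau$ is at most $1/(d\centerShrink)$; the amplification step in ShrinkSet doubles this to $\le 2/(d\centerShrink)$, which is much smaller than $1/(d{+}2)$ because the parameter setting uses $\centerShrink = \extendnumEpoch \gg d$. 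The third bullet of ShrinkSet ensures that $x_\tau$ lies on the retained side, so in the ellipsoidal geometry this is a shallow cut very close to central.

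Second, I would invoke the classical MVEE update bound: when a unit ball in $\reals^d$ is cut by a hyperplane at distance at most $1/(d{+}2)$ from its center and we retain the side containing the center, the minimum-volume enclosing ellipsoid of the retained body has volume at most $e^{-1/(2(d+1))}$. Since $\K_{\tau+1}$ is contained in the cut of $\ellipsoid_\tau$, its MVEE satisfies
\[
\vol(\ellipsoid_{\tau+1}) \le e^{-1/(2(d+1))}\,\vol(\ellipsoid_\tau),
\]
and iterating gives $\vol(\ellipsoid_k) \le \vol(\ellipsoid_0)\,e^{-k/(2(d+1))}$. Taking $k = \numEpoch$ yields $\vol(\ellipsoid_k)/\vol(\ellipsoid_0) \le T^{-\Omega(d)}$.

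Finally, I would couple the volume bound to RESTART. The initial $\vol(\ellipsoid_0)$ is polynomial in the diameter of $\K$, which the preliminaries bound by $2^{O(d^2)}$. Once $\vol(\ellipsoid_\tau)$ has dropped by a factor of $T^{-\Omega(d)}$, at least one principal axis of $\ellipsoid_\tau$ is shorter than $1/\sqrt{T}$; the analysis sketch notes that below this threshold we stop discretizing along that direction, effectively reducing the working dimension by one. Propagating this observation, after $\numEpoch$ epochs either every direction has collapsed or every remaining $x \in \K_\tau$ is certified by a past $\FLCE^j > \ell/4$, so the RESTART predicate must fire. The main obstacle will be this last step: the volume reduction per cut is essentially mechanical once the distance-from-center estimate is in place, but turning ``MVEE is tiny'' into ``every $x \in \K_\tau$ has been certified by a past $\FLCE^j > \ell/4$'' requires tracking which slabs earlier ShrinkSet cuts have eliminated and invoking the discretization lemma to transfer LCE lower bounds between the shrinking grids as their resolutions rescale.
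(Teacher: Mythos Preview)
Your volume-shrinkage argument is essentially the paper's: both pass to the frame where $\ellipsoid_\tau$ is the unit ball, use $\tilde{x}_\tau \in \K_\tau/\centerShrink$ to bound the distance from the center to the cutting hyperplane by $O(1/\centerShrink) \ll 1/d$, and conclude that the MVEE volume drops by a constant factor per epoch. The paper writes down an explicit enclosing ellipsoid to get the factor $(1-\tfrac{1}{8d})$; your $e^{-1/(2(d+1))}$ from the standard ellipsoid lemma is the same up to constants.

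Where you diverge is the termination step, and there you have introduced an obstacle that the paper simply does not face. You try to argue that once the MVEE is tiny, every $x \in \K_\tau$ must be certified by some past $\FLCE^j > \ell/4$, so the RESTART predicate fires---and you flag this as requiring the discretization lemma and careful tracking of how LCE lower bounds transfer across shrinking grids. The paper does nothing of the sort. Its termination argument is exactly the per-axis statement you already lifted from the analysis sketch: the algorithm does not cut along an eigenvector of the MVEE once its length is below $1/\sqrt{T}$, and it stops once all $d$ axes are that short. Hence each axis supports at most $\log_{1/(1-1/(8d))}\sqrt{T} = O(d\log T)$ cuts, and the total is $d \cdot O(d\log T) = \numEpoch$. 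The phrase ``before RESTART'' in the lemma only delimits the stretch being counted; the bound is obtained without ever touching the RESTART condition, and the discretization lemma plays no role here. Drop the last layer of your plan and end at the per-axis count.
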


\begin{proof}[Proof of \ref{lem:number}]

Let $\set{E}_{\epo}$ be the minimal volume enclosing Ellipsoid of $\convexSet_{\epo}$, we will show that $$\vol(\set{E}_{\epo + 1}) \le \left(1 - \frac{1}{8d}\right)\vol(\set{E}_{\epo})$$

First note that $\convexSet_{\epo + 1} = \convexSet_{\epo} \cap \set{H}$ for some half space $\set{H}$ corresponding to the separating  hyperplane  going through $\frac{1}{\centerShrink}\set{E}_{\epo} $, therefore, $\convexSet_{\epo + 1}  \subset \set{E}_{\epo} \cap \set{H}$. 

Let $\set{E}_{\epo + 1}'$ be the minimal volume enclosing Ellipsoid of $\set{E}_{\epo} \cap \set{H}$, we know that 
$$\vol(\set{E}_{\epo}) \le \vol(\set{E}_{\epo + 1}')$$

Without lose of generality, we can assume that $\set{E}_{\epo} $ is centered at 0. Let $A$ be a linear operator on $\Real^d$ such that $A(\set{E}_{\epo})$ is the unit ball $\ball_1(0)$, observe that 
$$\frac{\vol(\set{E}_{\epo + 1}') }{\vol(\set{E}_{\epo}) } =\frac{\vol(A\set{E}_{\epo + 1}') }{\vol(A\set{E}_{\epo}) }  $$

Since $A\set{E}_{\epo + 1}' $ is the MVEE of $A\set{E}_{\epo} \cap A\set{H}$, where $A \set{H}$ is the halfspace corresponding to the separating hyperplane going through $\ball_{\frac{1}{\centerShrink}}(0)$. Without lose of generality, we can assume that $\set{H} = \{ x \in \reals^d \mid x_1 \ge a \}$ for some $a$ such that $|a| \le \frac{1}{\centerShrink} \le \frac{1}{d^2}$. 

Observe that $$A\set{E}_{\epo} \cap A\set{H} \subseteq \left\{ x \in \reals^d\left| \right. \frac{(x_1 - \frac{1}{4d})^2}{\left(1  - \frac{1}{4d}\right)^2} + \frac{x_2^2}{1 +\frac{1}{12d^2}} + ... + \frac{x_d^2}{1 + \frac{1}{12d^2}} \le 1 \right\} = \set{E}$$

Therefore, 
$$\vol(A\set{E}_{\epo + 1}' ) \le \vol(\set{E}) \le  1 - \frac{1}{8d}.$$

Now, observe that the algorithm will not cut through one eigenvector of the \MVEE \ of $\set{K}_{\epo}$ if its length is smaller than $\frac{1}{\sqrt{T}}$, and the algorithm will stop when all its eigenvectors have length smaller than $\frac{1}{\sqrt{T}}$. Therefore, the algorithm will make at most
$$d \log_{1 - \frac{1}{8d}} \left(\frac{1}{\sqrt{T}}\right) = \numEpoch$$
many epochs.
\end{proof}

\begin{lem}[Beginning of each epoch] \label{lem:begin}For every $\epo \ge 0$:

$$ \sum_{i=0}^{\tau-1} F^i(x)  \ge \mycases {-  \epo \frac{2 \levelSetValue}{\twoExtendRatio} }   { x \in \set{K} \cap \convexSet_{\epo} }
{ \frac{\distanceRatio(x, \convexSet_{\epo}) \levelSetValue}{64d}}{ x \in \set{K} \cap \convexSet_{\epo}^c} $$

\ignore{
For every $x \in \set{K} \cap \convexSet_{\epo}$, 
$$\sum_{i \in  \Epo_{0} \cup ... \cup \Epo_{\epo - 1}} f_i (x)  \ge -  \epo \frac{ \levelSetValue}{\twoExtendRatio} $$

 For every $x \in \set{K} \cap \convexSet_{\epo}^c$ 
$$\sum_{i \in  \Epo_{0} \cup ... \cup \Epo_{\epo - 1}} f_i (x) \ge \frac{\distanceRatio(x, \convexSet_{\epo}) \levelSetValue}{32d}$$
}

\end{lem}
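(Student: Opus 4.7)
The plan is to proceed by induction on $\epo$. The base case $\epo = 0$ is immediate: the sum is empty, the first bound reduces to $0 \ge 0$, and the second bound is vacuous since $\K_0 = \K$ means $\K \cap \K_0^c = \emptyset$. For the inductive step, I would split on whether $x$ lies in $\K_{\epo+1}$ or not. When $x \in \K \cap \K_{\epo+1}$, the nesting $\K_{\epo+1} \subseteq \K_\epo$ puts us in the inside case at $\epo$ as well, so combining the inductive hypothesis $\sum_{i=0}^{\epo-1} F^i(x) \ge -\epo \cdot \frac{2\levelSetValue}{\twoExtendRatio}$ with Lemma \ref{lem:inside}'s inside bound $F^\epo(x) \ge -\frac{2\levelSetValue}{\twoExtendRatio}$ immediately yields the target $-(\epo+1) \cdot \frac{2\levelSetValue}{\twoExtendRatio}$.

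The interesting work is the outside case $x \in \K \cap \K_{\epo+1}^c$, which splits further. If $x \notin \K_\epo$ already, the inductive hypothesis gives $\sum_{i=0}^{\epo-1} F^i(x) \ge \frac{\distanceRatio(x, \K_\epo) \levelSetValue}{64d}$, and Lemma \ref{lem:inside} gives the in-epoch bound $F^\epo(x) \ge -\frac{2 \distanceRatio(x, \K_\epo) \levelSetValue}{\twoExtendRatio}$; since $\twoExtendRatio = \extendnumEpoch$ is vastly larger than $64d$, the positive term dominates. I would then use the fact that $\K_{\epo+1}$ and $\K_\epo$ differ by a single \NE\ cut (so their MVEE centers and axes shift only by a controlled amount) together with John's theorem to conclude that $\distanceRatio(x, \K_\epo)$ is comparable to $\distanceRatio(x, \K_{\epo+1})$ up to an $O(1)$ factor, giving the stated bound after absorbing constants. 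The other subcase, $x \in \K_\epo \setminus \K_{\epo+1}$, is the one freshly cut off in epoch $\epo$; here the inductive hypothesis contributes only the small negative quantity $-\epo \cdot \frac{2\levelSetValue}{\twoExtendRatio}$, so everything hinges on producing a large positive lower bound on $F^\epo(x)$.

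To obtain that bound I would use two facts about the state at the end of epoch $\epo$. First, \DM\ fired, so $\FLCE^\epo(\tilde{x}_\epo) \ge \levelSetValue$. Second, because the algorithm did not \textsf{Restart}, there exists some $x_0 \in \K_\epo$ with $\FLCE^\epo(x_0) \le \levelSetValue/4$. Combining convexity of $\FLCE^\epo$ along the ray from $x_0$ through $\tilde{x}_\epo$ to $x$ with the $2\times$ distance amplification built into \NE\ (which forces $x$ to lie strictly past $\tilde{x}_\epo$ along that direction) produces a linear growth estimate $\FLCE^\epo(x) \ge \frac{\levelSetValue \cdot \distanceRatio(x, \K_{\epo+1})}{O(d)}$, after translating Euclidean to Minkowski distance via John's theorem. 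Since $\centerShrink$ is chosen large enough that $x \in \centerShrink \K_\epo \cap \K$, Corollary \ref{lem:geo_co}(1) upgrades this into $F^\epo(x) \ge \FLCE^\epo(x)$, and the parameter $\twoExtendRatio \ge 8d \cdot \numEpoch$ is set precisely so that the positive term from this cut dwarfs the accumulated $\epo \cdot \frac{2\levelSetValue}{\twoExtendRatio}$ from the inductive hypothesis, closing on $\frac{\distanceRatio(x, \K_{\epo+1}) \levelSetValue}{64d}$.

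The hardest step will be this last convexity-plus-amplification computation: one must carefully track how the $2\times$ amplification in \NE, the factor-$4$ gap between $\levelSetValue/4$ and $\levelSetValue$, and the conversion between Euclidean and Minkowski distances jointly produce a growth constant in $\Omega(\levelSetValue/d)$ on $\FLCE^\epo$ as $x$ moves past the cut hyperplane. The choices of $\twoExtendRatio$, $\centerShrink$, and the $2\times$ amplification factor are tuned exactly so that the telescoping across epochs closes with the $\frac{1}{64d}$ constant in the statement.
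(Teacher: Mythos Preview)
Your Part 1 (the inside case) is fine and matches the paper. The gap is in Part 2, specifically in the inductive subcase ``$x \notin \K_\epo$ already''. There the inductive hypothesis hands you
\[
\sum_{i=0}^{\epo-1} F^i(x) \;\ge\; \frac{\distanceRatio(x,\K_\epo)\,\levelSetValue}{64d},
\]
and Lemma~\ref{lem:inside} subtracts $\frac{2\distanceRatio(x,\K_\epo)\levelSetValue}{\twoExtendRatio}$. But the target at level $\epo+1$ is $\frac{\distanceRatio(x,\K_{\epo+1})\levelSetValue}{64d}$, and $\distanceRatio(x,\K_{\epo+1})$ is measured against a \emph{strictly smaller} body. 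Even granting your claim that one cut changes the Minkowski distance only by a constant factor $C>1$, the induction would require
\[
\frac{1}{64d}-\frac{2}{\twoExtendRatio}\;\ge\;\frac{C}{64d},
\]
which fails for every $C\ge 1$. Over $\Theta(d^2\log T)$ epochs the distortion would compound, so no choice of constants rescues the scheme. The issue is structural: you are trying to carry forward a bound whose scale (the coefficient $\distanceRatio(x,\K_\epo)$) keeps growing, while each epoch only subtracts from it.

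The paper avoids induction entirely for Part 2. For a fixed $x\in\K\cap\K_\epo^c$, it identifies the \emph{single} epoch $j<\epo$ whose hyperplane created the facet of $\partial\K_\epo$ that the segment $[x_0,x]$ crosses (here $x_0\in\K_\epo$ is the witness from the non-\textsf{Restart} condition, satisfying $\FLCE^{j}(x_0)\le\levelSetValue/4$ for all $j$). A convexity argument along that segment gives $\FLCE^{j}(x)\ge \frac{\distanceRatio(x,\K_\epo)\levelSetValue}{4d}$. Every \emph{other} epoch $i$ is then controlled uniformly by Lemma~\ref{lem:inside} together with the one-sided comparison $\distanceRatio(x,\K_i)\le 2d\,\distanceRatio(x,\K_\epo)$ (which holds because $\K_\epo\subseteq\K_i$), so all the negative contributions are already in the right scale $\distanceRatio(x,\K_\epo)$ and the choice $\twoExtendRatio=\extendnumEpoch$ absorbs the sum of $\epo$ of them against the single $\frac{1}{4d}$ gain. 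A further complication you did not anticipate: to pass from $\FLCE^{j}(x)$ to $F^{j}(x)$ one needs $x\in\centerShrink\K_j\cap\K$, which may fail (the cut at epoch $j$ could be many epochs in the past, so $x$ can lie far outside $\centerShrink\K_j$). The paper handles that case with a separate, fairly intricate geometric construction (its ``case (b)''); your inductive route, because it only ever invokes $\FLCE^\epo$ on the freshly-cut epoch where $x\in\K_\epo\subseteq\centerShrink\K_\epo$, would have sidestepped this --- but since the induction does not close, that saving is moot.
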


\begin{lem}[Restart] \label{lem:restart} (After shifting) If $\LRA$ obtains a value $v_j (\set{A})= \sum_{t \in \Epo_j} f_t(x_t) \le \frac{\levelSetValue}{\valObtain}$ for each epoch $j$, then when the algorithm \emph{RESTART}, $\regret = 0$. 
\end{lem}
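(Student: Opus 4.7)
The plan is to bound the algorithm's cumulative shifted loss from above using the per-epoch \LRA\ guarantee together with Lemma \ref{lem:number}, and to lower bound the comparator's shifted loss using the \emph{RESTART} condition together with Lemmas \ref{lem:inside} and \ref{lem:begin}. Since each per-epoch shift adds the same constant to both player and comparator, it cancels out in the regret, so arguing about shifted functions is without loss of generality. By Lemma \ref{lem:number} at most $\numEpoch$ epochs elapse before \emph{RESTART}, and the hypothesis gives $v_j(\set{A}) \le \levelSetValue/\valObtain$ per epoch; summing yields a total shifted algorithm loss of at most $\numEpoch \cdot \levelSetValue/\valObtain = \levelSetValue/(128\,d)$.

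For the comparator $x^\star \in \set{K}$, I will split on whether $x^\star \in \convexSet_\tau$ at the restart time. If $x^\star \in \convexSet_\tau$, the nesting $\convexSet_\tau \subseteq \convexSet_j$ for every $j \le \tau$ (produced by \textsf{ShrinkSet}) together with the \emph{RESTART} condition supplies some $j^\star \le \tau$ with $\FLCE^{j^\star}(x^\star) > \levelSetValue/4$. Because $x^\star \in \convexSet_{j^\star} \subseteq \centerShrink\,\convexSet_{j^\star} \cap \set{K}$, Corollary \ref{lem:geo_co}(1) promotes this to $F^{j^\star}(x^\star) > \levelSetValue/4$, while every other epoch contributes at least $-2\levelSetValue/\extendRatio$ by Lemma \ref{lem:inside}. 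Summing,
\[
\sum_\tau F^\tau(x^\star) \;\ge\; \frac{\levelSetValue}{4} - \tau\cdot\frac{2\levelSetValue}{\extendRatio} \;\ge\; \frac{\levelSetValue}{4} - \frac{\levelSetValue}{128\,d^2},
\]
using $\tau \le \numEpoch$ and $\extendRatio = \extendnumEpoch$; this dwarfs the algorithm's loss.

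If instead $x^\star \notin \convexSet_\tau$, then $\distanceRatio(x^\star, \convexSet_\tau) \ge 1$. Lemma \ref{lem:begin} applied at the current epoch gives $\sum_{i<\tau} F^i(x^\star) \ge \distanceRatio(x^\star,\convexSet_\tau)\,\levelSetValue/(64\,d)$, and Lemma \ref{lem:inside} bounds the (possibly partial) current epoch as $F^\tau(x^\star) \ge -2\,\distanceRatio(x^\star,\convexSet_\tau)\,\levelSetValue/\extendRatio$. Since $\extendRatio = \extendnumEpoch \gg 64\,d$,
\[
\sum_\tau F^\tau(x^\star) \;\ge\; \distanceRatio(x^\star,\convexSet_\tau)\cdot \frac{\levelSetValue}{128\,d} \;\ge\; \frac{\levelSetValue}{128\,d},
\]
matching the upper bound on the algorithm's loss and yielding $\Regret \le 0$.

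The main obstacle I anticipate is careful bookkeeping rather than any new geometry: confirming that the nesting $\convexSet_{\tau+1} \subseteq \convexSet_\tau$ is indeed preserved by \textsf{ShrinkSet}, that the \emph{RESTART} condition ranges over all $j \le \tau$ including the possibly incomplete current epoch, and that the three constants $\numEpoch$, $\valObtain$, $\extendnumEpoch$ simultaneously provide slack in both cases (so that the comparator's lower bound always weakly exceeds $\levelSetValue/(128\,d)$). All the geometric content is already absorbed in Lemmas \ref{lem:inside}, \ref{lem:begin}, and \ref{lem:number} together with Corollary \ref{lem:geo_co}.
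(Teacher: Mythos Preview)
Your proposal is correct and follows essentially the same approach as the paper: upper-bound the algorithm's shifted loss by $(\numEpoch)\cdot\levelSetValue/\valObtain=\levelSetValue/(128d)$, then split the comparator into the cases $x^\star\in\convexSet_\tau$ (using the \emph{RESTART} condition, Corollary~\ref{lem:geo_co}(1), and Lemma~\ref{lem:inside}) and $x^\star\notin\convexSet_\tau$ (using Lemmas~\ref{lem:begin} and~\ref{lem:inside}). The constants you quote match the paper's, and the bookkeeping concerns you flag (nesting of the $\convexSet_j$, range of $j$ in the \emph{RESTART} condition, slack among the parameters) are exactly the minor verifications the paper's proof implicitly uses.
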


\subsection{Proof of main theorem}
Now we can prove the regret bound assuming all the lemmas above, whose proof we defer to the next section.
\begin{proof}[Proof of Theorem \ref{thm:main}]
Using Lemma \ref{lem:restart}, we can only consider epochs between two {RESTART}. Now, for epoch $\epo$, we know that for $x \in \set{K} \cap \convexSet_{\epo}^c$, 
$$\sum_{i \in  \Epo_{0} \cup ... \cup \Epo_{\epo - 1}} f_i (x) \ge \frac{\distanceRatio(x, \convexSet_{\epo}) \levelSetValue}{64 d}$$
$$\sum_{i \in  \Epo_{\epo}} f_i (x) \ge  - \frac{2 \distanceRatio(x, \convexSet_{\epo}) \levelSetValue}{\twoExtendRatio}$$

Therefore, for $x \in \set{K} \cap \convexSet_{\epo}^c$
$$\sum_{i \in  \Epo_{0} \cup ... \cup \Epo_{\epo }} f_i (x) \ge {\distanceRatio(x, \convexSet_{\epo}) \levelSetValue} \left( \frac{1}{64d} - \frac{2}{\twoExtendRatio} \right) \ge 0$$

By our choice of $\extendRatio = \extendnumEpoch$. 

In the same manner, we know that for $x \in \set{K} \cap \convexSet_{\epo}$, 
$$\sum_{i \in  \Epo_{0} \cup ... \cup \Epo_{\epo }} f_i (x) \ge - \frac{2(\epo + 1)\levelSetValue }{\twoExtendRatio} \ge  -  \frac{\levelSetValue }{2}$$

By $\epo \le \numEpoch$.

Which implies that for every $x \in \set{K}$, $\sum_{i \in  \Epo_{0} \cup ... \cup \Epo_{\epo }} f_i (x) \ge - \frac{\levelSetValue }{2}$. 

Denote by $v_j(\LRA) = \sum_{i \in \Epo_{j}, i \le t} f_j(x_j) $ the overall loss incurred by the algorithm in epoch $j$ before time $t$. The low-regret algorithm $\LRA$ guarantees that in each epoch:
\begin{eqnarray*}
v_j(\LRA) &= & \sum_{i \in \Epo_{\epo}, i \le t} f_i(x_i)  \\
& \leq& \min_{x \in \grid_{\epo}} \left\{ v_t(x) - \blowUp \sigma_t(x) \right\}  + \frac{\ell}{\valObtain } \\
& \leq& \frac{\ell}{\valObtain} \quad \quad  \mbox{ by shifting } \min_{x \in \grid_{\epo}} \left\{ v_t(x) - \blowUp \sigma_t(x) \right\} =0
\end{eqnarray*}
Thus $\LRA$ obtains over all epochs a total value of at most 
$$\sum_{ 0 \le j \le \epo}  \frac{\ell}{\valObtain } =  (\tau + 1) \times \frac{\ell}{\valObtain } \leq \frac{\levelSetValue}{2}.$$ 

Therefore, 
$$\regret = \sum_{ 0 \le j \le \epo}  v_j(\set{A}) - \sum_{i \in  \Epo_{0} \cup ... \cup \Epo_{\epo }} f_i (x^*)  \le \levelSetValue$$
\end{proof}

\section{Analysis and proof of main lemmas}\label{sec:analysis}

\subsection{Proof of Lemma \ref{lem:inside}}
\begin{proof}[Proof of \ref{lem:inside}] 
\medskip
{\bf Part 1:}

Consider any  $x \in  \convexSet_{\epo}$. 
By Lemma $\ref{property:grid}$ part 1, we know that there exists $x_g \in \grid_{\epo}$ such that $x_{c} = x_g + \extendRatio (x - x_g) \in \frac{\convexSet_{\epo}}{2 \centerShrink}$. 
Any convex function $f$ satisfies for any two points $y,z$ that  $f(\gamma x + (1-\gamma)y) \leq \gamma f(x) + (1-\gamma) f(y)$. Applying this to the convex  function $F^{\epo}$ over the line on which the points $x,x_c,x_g$ reside and observe $\gamma = \frac{\|x_c - x_g\|_2}{\|x_g - x\|_2} $, we have 
$${F^{\epo}(x_c) - F^{\epo}(x_g)} \ge \frac{||x_c - x_g||_2}{||x_g - x||_2} ({F^{\epo}(x_g) - F^{\epo}(x)}) = \extendRatio ({F^{\epo}(x_g) - F^{\epo}(x)}) $$ 

Since $x_g \in \grid$ and we shifted all losses on the grid to be nonnegative, $F^{\epo}(x_g) \ge 0$. Thus, we can simplify the above to:
$$ F^{\epo}(x_c)  \ge -  \extendRatio  F^{\epo}(x)  $$ 

Since the epoch is ongoing, the conditions of $\DM$ are not yet satisfied, and hence   $\forall x' \in \frac{1}{\beta}\K_\tau , \ \FLCE^{\epo}(x') \le  \ell$. 
By  (2) of Lemma \ref{lem:geo_co} for all points $x''$ in $\frac{1}{2 \beta} \K_\tau$ it holds that $F^\tau(x'') \leq  2\ell$, in particular $F^\tau(x_c) \leq 2\ell$. The above simplifies to
$$   F^{\epo}(x)  \geq - \frac{1}{\gamma} F^\tau(x_c) \geq - \frac{2 \levelSetValue}{\gamma} $$


\medskip
{\bf Part 2:} 

 For $x \in  \convexSet_{\epo}^c \cap \set{K}$
By Lemma $\ref{property:grid}$ part 2, we know that there exists $x_g \in \grid_{\epo}$ such that $x_{c} = x_g + \frac{ \extendRatio}{\distanceRatio(x,  \convexSet_{\epo})} (x - x_g) \in \frac{\convexSet_{\epo}}{\centerShrink^2}$. 
Now, by the convexity of $F^{\epo}$, we know that $${F^{\epo}(x_c) - F^{\epo}(x_g)}{} \ge \frac{||x_c - x_g||_2}{||x_g - x||_2}  \left( F^{\epo}(x_g) - F^{\epo}(x) \right)= \frac{\extendRatio}{\distanceRatio(x,  \convexSet_{\epo})}  \left( F^{\epo}(x_g) - F^{\epo}(x) \right)$$

Since $x_g \in \grid$ and we shifted all losses on the grid to be nonnegative, $F^{\epo}(x_g) \ge 0$. Thus, we can simplify the above to:
$${F^{\epo}(x_c)} \ge - \frac{\extendRatio}{\distanceRatio(x,  \convexSet_{\epo})}  F^{\epo}(x)$$

Since the epoch is ongoing, the conditions of $\DM$ are not yet satisfied, and hence   $\forall x' \in \frac{1}{\beta}\K_\tau , \ \FLCE^{\epo}(x') \le  \ell$. 
By  (2) of Lemma \ref{lem:geo_co} for all points $x''$ in $\frac{1}{2 \beta} \K_\tau$ it holds that $F^\tau(x'') \leq  2\ell$, in particular $F^\tau(x_c) \leq 2\ell$. The above simplifies to
$$   F^{\epo}(x)  \geq - \frac{\distanceRatio(x,  \convexSet_{\epo})}{\extendRatio}  F^{\epo}(x_c) \geq - \frac{ 2\distanceRatio(x,  \convexSet_{\epo}) \levelSetValue}{\gamma} $$

\end{proof}

\subsection{Proof of Lemma \ref{lem:begin}}
\begin{proof}[Proof of Lemma \ref{lem:begin}]
\medskip
{\bf Part 1:}
For every $x \in \set{K } \cap \convexSet_{\epo}$, since $K_\tau \subseteq \K_{\tau-1} \subseteq ... \subseteq \K_0 = \K$, we have  $x \in  \convexSet_{j}$ for every $0 \le j \le \epo$. Therefore, by Lemma \ref{lem:inside} we get $   F^j(x)    \ge -   \frac{ 2\levelSetValue}{\twoExtendRatio} $. Summing over the epochs, 
$$ \sum_{i=0}^{\tau - 1} F^i(x)  \ge -  \epo \frac{ 2\levelSetValue}{\twoExtendRatio} $$

\medskip
{\bf Part 2:}
Figure \ref{fig:shalom} illustrates the proof. 
\begin{figure}
\begin{center}
\includegraphics[width=0.8\textwidth]{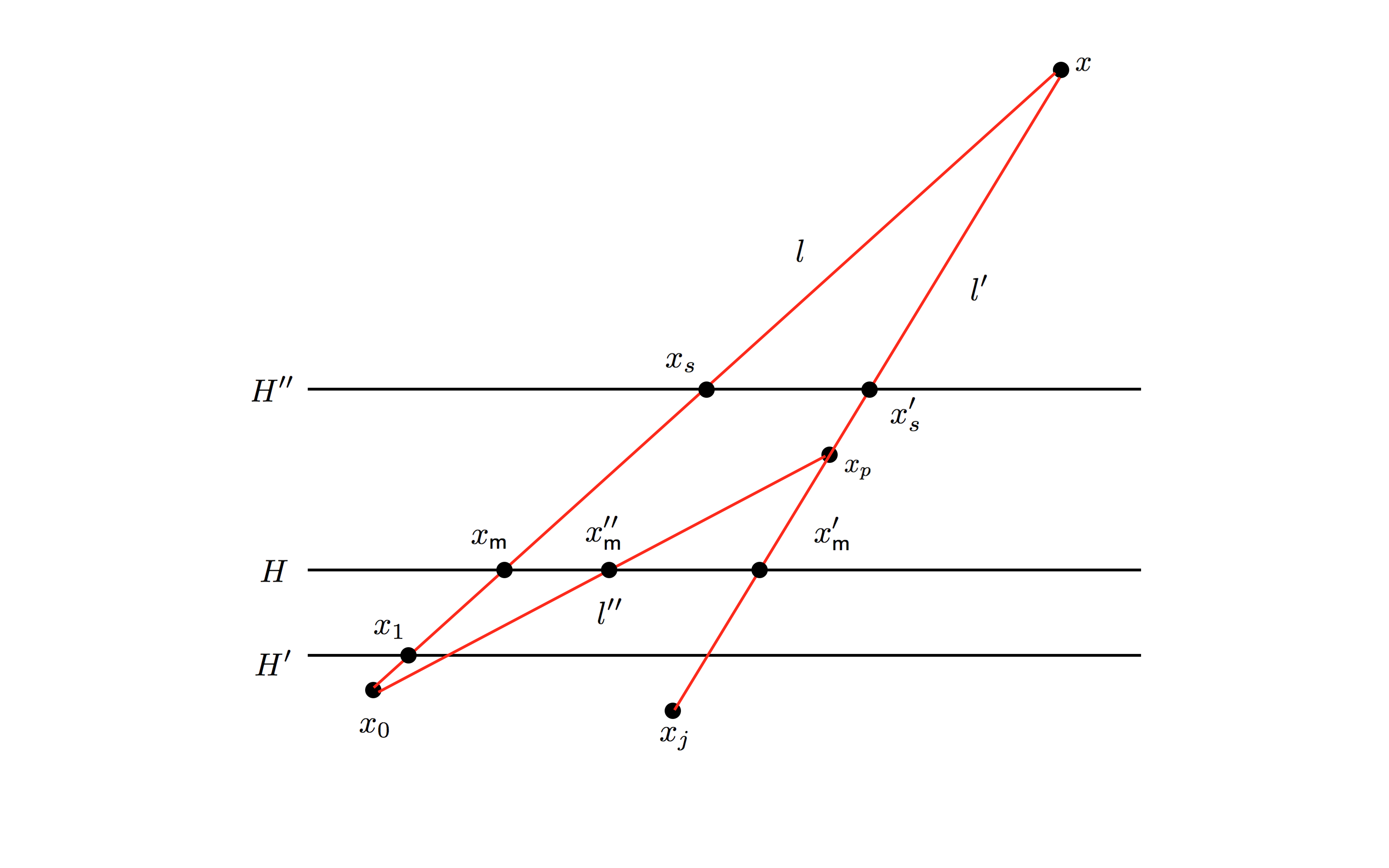} \caption{geometric intution for the proof \label{fig:shalom}}
\end{center}
\end{figure}

For every $x  \in \set{K} \cap \convexSet_{\epo}^c$, since the Algorithm does not RESTART, therefore, there must be a point $x_0 \in \convexSet_{\epo}$ such that 
\begin{equation} \label{eqn:shalom}
\forall \epo' \le \epo, \FLCE^{\epo'} (x_0) \le \frac{\levelSetValue}{4}
\end{equation}

Let $l$ be the line segment between $x$ and $x_{0}$. Since $x \notin  \convexSet_{\epo}$, the line $l$ intersects $\K_\tau$, and denote $\xint$ be the intersection point between $l$ and $\convexSet_{\epo}$: $\{ \xint\} = l \cap \convexSet_{\epo}$. 
The corresponding boundary of $\K_\tau$ was constructed in an epoch $j \le \epo$, and a hyperplane which separates the $\ell$-level set of $\K_j$, namely $H = \{ \xint \mid \langle h_j, \xint \rangle = z_j \})$ (See $\text{ShrinkSet}$ for definition of $h_j, z_j$) such that $H \cap l = \{\xint \}$.  

Now, by the definition of Minkowski Distance, we know that (Since Minkowski Distance is the distance ratio to $\frac{1}{d}\ellipsoid_{\epo}$ where $\ellipsoid_{\epo}$ is the \MVEE  \ of $\convexSet_{\epo}$,  $\frac{1}{d}\ellipsoid_{\epo}$ can be $1/d$ smaller than $\convexSet_{\epo}$, and $\xint$ is the intersection point to $\convexSet_{\epo}$)
 $$\frac{|| x - \xint||_2} {||\xint - x_{0}||_2}\ge   \frac{\distanceRatio(x, \convexSet_{\epo})  - 1}{2d}$$

We know that (by the convexity of $\FLCE^j$) 
$$\frac{\FLCE^j(x) - \FLCE^j(\xint) }{\FLCE^j(\xint)  - \FLCE^j(x_0) } \ge \frac{|| x - \xint||_2} {||\xint - x_{0}||_2}\ge   \frac{\distanceRatio(x, \convexSet_{\epo})  - 1}{2d}$$
where the denominator is non-negative, by equation \eqref{eqn:shalom},  $\FLCE^j(x_0) \le \frac{\levelSetValue}{4}$, and by the definition of $H$ (separation hyperplane of the $\levelSetValue$-level-set of $\FLCE^j$), $\FLCE^j(\xint) \ge \levelSetValue$. This implies 
$$\FLCE^j(x) \ge \frac{(\distanceRatio(x, \convexSet_{\epo}) -1) \cdot \frac{3}{4} \levelSetValue}{2d} + \ell  \ge \frac{\distanceRatio(x, \convexSet_{\epo})  \levelSetValue}{4d}$$

We consider the following two cases: (a). $x \in \centerShrink \convexSet_{j}$, (b). $x \notin \centerShrink \convexSet_{j}$. 

{\bf case (a): $x \in \centerShrink \convexSet_{j}$}

The LCE is a lower bound of the original function only for $x$ in the LCE fitting domain, here LCE = $\FLCE^j$, original function $F^j: \centerShrink \convexSet_j  \cap \convexSet \to \reals$, so it is only true for $x \in \centerShrink \convexSet_j \cap \convexSet$. 
Now, by (1) in Lemma \ref{lem:geo_co}, we know that $F^j(x)  \ge \FLCE^j(x) \ge \frac{\distanceRatio(x, \convexSet_{\epo}) \levelSetValue}{4d}$.

For other epoch $i < \tau$, we can apply Lemma \ref{lem:inside} and get $F^{i}(x) \ge - \frac{2\distanceRatio(x, \convexSet_{i}) \levelSetValue}{\twoExtendRatio}$. Since the set $ \K_{\tau}\subseteq \K_{\tau - 1}  \subseteq ... \subseteq \K_{0} $, By John's theorem, we can conclude that $\distanceRatio(x, \convexSet_{i}) \le 2d \distanceRatio(x, \convexSet_{\tau})$

which implies 
\begin{eqnarray*}
\sum_{i=0}^{\epo - 1} F^i (x)  &\ge&\sum_{i \not= j } F^i (x)  + F^j(x)
\\
&\ge& \frac{\distanceRatio(x, \convexSet_{\epo}) \levelSetValue}{4d}  - \epo  \times  \frac{4d \distanceRatio(x, \convexSet_{\epo}) \levelSetValue}{\twoExtendRatio}\ge \frac{\distanceRatio(x, \convexSet_{\epo}) \levelSetValue}{32d}
\end{eqnarray*}
by our choice of parameters $\epo \le  \numEpoch$ and $ \extendRatio = \extendnumEpoch $.

{\bf case (b): $x \notin \beta \K_j$, $x \in \K$ \footnote{In the follow proof, if not mentioned specifically, every points are in $\K$}}

This part of the proof consists of three steps. First, We find a point $x_j$ in center of $\K_j$ that has low $F^j$ value. Then we find a point $x_p$ inside $\beta \K_j$, on the line between $x_j$ and $x$,  with large $\FLCE^j$ value, which implies by lemma  \ref{lem:geo_co} it has large $F^j$ value. Finally, we use both $x_0,x_p$ to deduce the large value of $F^j(x)$.

{\bf Step1:} Let $x_j$ be the center of \MVEE \ $\ellipsoid_j$ of $\convexSet_j$. By (2) in Lemma \ref{lem:geo_co}, we know that $F^j (x_j)  \le 2 \levelSetValue$. 

{\bf Step 2:} Define $H' = \{ y \mid \langle y, h_j \rangle = w_j \}$ to be the hyperplane parallel to $H $ such that $\distance(x_j, H') = \frac{1}{2}\distance(x_j, H)$, and $H'' =  \{ y \mid \langle y, h_j \rangle = u_j \}$ to be the hyperplane parallel to $H$ such that $\distance(x_0, H'') = 9\distance(x_0, H)$.

We can assume $ \langle x_0, h_j \rangle < w_j $ ($x_0, H$ are in different side of $H'$), since we know that $\FLCE^j(x_0) \leq \frac{\ell}{4}$ by definition, and the hyperplane $H'$ separates such that all points with $\langle x_0, h_j \rangle \geq   w_j $ (See $\text{ShrinkSet}$ for definition of $H$, $H'$) have value $\FLCE^j(x) \geq \ell$. 

Note $ \langle x_0, h_j \rangle < w_j $ implies  $\distance(x_0, H) \ge \frac{1}{2} \distance(x_j, H) = \distance(H, H')$ \footnote{$H, H' , H''$ are parallel to each other, so we can define distance between them}, which implies that 
$$\distance(x_j, H'') \ge  \distance(H, H'') = 8 \distance(x_0, H) \ge 4 \distance(x_j, H).$$

Now, let $x_s = l \cap H''$ be the intersection point between $H''$ and $l$, we can get: $x_s  = \xint + 8(\xint - x_0)$. Since $x_0 , \xint \in \K_j$ , we can obtain $x_s \in \frac{\centerShrink }{2}\convexSet_j$ by our choice of $\centerShrink \ge 64 d^2$. Let $x_s' = l' \cap H''$ be the intersection point of  $H''$ and the line segment $l'$ of $x$ and $x_j$. Let $x_1$ be the intersection point of $H'$ and $l$: $\{x_1\} = H' \cap l$.

Consider the plane defined by $x_0,x_j,x$. Define $x_p$ to be the intersecting point of the ray shooting from $x_s$ towards the interval $[x,x_j]$, that is parallel to the line from $x_1$ to $x_j$. 

Note that $\|x_s - x_p\| \leq \|x_1 - x_j\|$, we have: $$x_p = x_s + (x_p - x_s) = x_s + (x_j - x_1) \frac{||x_s - x_p||}{||x_1 - x_j||} $$

We know that $x_1, x_j \in \convexSet_j$, $x_s \in \frac{\centerShrink }{2}\convexSet_j$, therefore, $ x_s + (x_j - x_1) \frac{||x_s - x_p||}{||x_1 - x_j||}  \in  \centerShrink \convexSet_j$, which means $x_p \in \centerShrink \convexSet_j$. Moreover, we know that $||x_s' - x_p ||_2 \le || x_p - \xint'||_2$ due to the fact that $||x_s' - x_p ||_2 \le  || \xint' - x_j||_2 \le \frac{1}{2} \| \xint' - x_s' \|_2$ (last inequality by $\distance(x_j, H'') \ge 4 \distance(x_j, H)$).

We also note that $||x_s' - x_p ||_2 \le || x_p - \xint'||_2$ implies 
$$\textsf{dist}(x_p, H) \ge \frac{1}{2} \textsf{dist}(x_s', H).$$

Now, let $l''$ be the line segment between $x_p$ and $x_0$, let $\xint''$ be the intersection point of  $H$ and $l''$: $H \cap l'' = \{\xint''\}$.

Consider the value of $F^j(x_p)$, by (1) in Lemma \ref{lem:geo_co} and $x_p \in \centerShrink \set{K}_j$, we know that $F^j(x_p) \ge \FLCE^j(x_p)$. By the convexity of $\FLCE^j$, we obtain:
\begin{eqnarray*}
\frac{\FLCE^j(x_p) - \FLCE^j(\xint'')}{\FLCE^j(\xint'') - \FLCE^j(x_0)} &\ge& \frac{||x_p - \xint''||_2}{||\xint'' - x_0||_2} 
\\
&=& \frac{\textsf{dist}(x_p, H)}{\textsf{dist}(x_0, H)} 
\\
&\ge& \frac{\frac{1}{2}\textsf{dist}(x_s', H)}{\textsf{dist}(x_0, H)}
\\
&=& \frac{\frac{1}{2}\textsf{dist}(H'', H)}{\textsf{dist}(x_0, H)} = 4
\end{eqnarray*}

Note that $\FLCE^j(\xint'') \ge \levelSetValue$, $\FLCE^j(x_0) \le \frac{\levelSetValue}{4}$, therefore, 
$ \FLCE^j(x_p) \ge 3 \levelSetValue$. Which implies $F^j(x_p) \ge\FLCE^j(x_p) \ge 3 \levelSetValue$.

{\bf Step 3:} 

Due to $x \notin \beta \K_j$ and $x_m \in \K_j$, by our choice of $x_s$ and $\beta$, we know that $||x - \xint||_2 \ge 8 ||x_s - \xint ||_2$.

We ready to bound the value of $F^j (x)$: By the convexity of $F^j $, we have:
\begin{eqnarray*}
\frac{F^j(x) - F^j(x_p)}{F^j(x_p) - F^j(x_j)} &\ge& \frac{||x - x_p||_2}{||x_p - x_j||_2}  = \frac{||x - x_s||_2}{||x_s - x_1||_2}  \quad \quad \text{triangle similarity}
\\
&=& \frac{||x - \xint||_2 - ||x_s - \xint ||_2}{||\xint - x_1||_2 +||x_s - \xint ||_2}
\\
&\ge& \frac{||x - \xint||_2}{ 2 ||x_s - \xint ||_2} \quad \quad  \mbox{ by $||x_s - \xint ||_2 \ge 8 || \xint - x_1 ||$}
\\
&\ge&\frac{||x - \xint||_2}{||\xint - x_0||_2 } \times \frac{||\xint - x_0||_2 }{2||x_s - \xint ||_2}
\\
&\ge& \frac{\distanceRatio(x, \convexSet_{\epo}) - 1 }{32d} 
\end{eqnarray*}

The last inequality is due to $ \frac{||\xint - x_0||_2 }{||x_s - \xint ||_2} = \frac{1}{8}$ and $\frac{|| x - \xint||_2} {||\xint - x_{0}||_2}\ge   \frac{\distanceRatio(x, \convexSet_{\epo})  - 1}{2d}$

Putting together, we obtain (by $F^j(x_j)  \le 2 \levelSetValue$):
$$F^j(x) \ge  \frac{\distanceRatio(x, \convexSet_{\epo})  - 1 }{32d}  \left( F^j(x_p) - F^j(x_j) \right) \ge \frac{\distanceRatio(x, \convexSet_{\epo})  - 1}{32d} $$

Same as {\bf case (a) }, we can sum over rest epoch to obtain: 
$$\sum_{i=0}^{\epo - 1} F^i (x)  \ge \frac{(\distanceRatio(x, \convexSet_{\epo})  - 1) \levelSetValue}{32d}  - \epo  \times  \frac{4d \distanceRatio(x, \convexSet_{\epo}) \levelSetValue}{\twoExtendRatio}\ge \frac{\distanceRatio(x, \convexSet_{\epo}) \levelSetValue}{64d}$$

by our choice of parameters $\epo \le  \numEpoch$ and $ \extendRatio = \extendnumEpoch $.

\end{proof}

\subsection{Proof of Lemma \ref{lem:restart}}
\begin{proof}[Proof of Lemma \ref{lem:restart}]

Suppose algorithm {RESTART} at epoch $\epo$, then $\sum_{j \le \epo} v_j (\set{A}) \le \frac{\levelSetValue}{128 d}$. Therefore, we just need to show that for every $x \in \set{K}$, $$\sum_{i \in \Epo_0 \cup ... \cup \Epo_{\epo}} f_i(x) \ge \frac{\levelSetValue}{128d}$$. 

(a). Since the algorithm RESTART, by the RESTART condition, for every $x \in \convexSet_{\epo}$, we know that $\exists j \le \epo$ such that $F^j(x) = \sum_{i \in \Epo_j} f_i(x) \ge \FLCE^j (x)> \frac{\levelSetValue}{4}$. Using Lemma $\ref{lem:inside}$, we know that for every $j' \le \epo, j' \not= j$: $F^{j'} (x)= \sum_{i \in \Epo_{j'}} f_i (x) \ge - \frac{2 \levelSetValue}{\twoExtendRatio}$.

Which implies that $$\sum_{i \in \Epo_0 \cup ... \cup \Epo_{\epo}} f_i(x) \ge \frac{\levelSetValue}{4} - \epo \frac{2\levelSetValue}{\twoExtendRatio} \ge  \frac{\levelSetValue}{8} $$

(b). For every $x \notin \convexSet_{\epo}$, by Lemma \ref{lem:begin}, we know that 
$$\sum_{i \in  \Epo_{0} \cup ... \cup \Epo_{\epo - 1}} f_i (x) \ge \frac{\distanceRatio(x, \convexSet_{\epo}) \levelSetValue}{64d}$$

Moreover, by Lemma $\ref{lem:inside}$, we know that 
$$\sum_{i \in  \Epo_{\epo}} f_i (x) \ge  - \frac{2\distanceRatio(x, \convexSet_{\epo}) \levelSetValue}{\twoExtendRatio}$$

Putting together we have:
$$\sum_{i \in \Epo_0 \cup ... \cup \Epo_{\epo}} f_i(x) \ge \distanceRatio(x, \convexSet_{\epo})  \left( \frac{\levelSetValue}{64d} -  \frac{2\levelSetValue}{\twoExtendRatio} \right) \ge  \frac{\levelSetValue}{128d} $$

\end{proof}

\section{The EXP3 algorithm}  \label{sec:exp3}

For completeness, we give in this section the definition of the EXP3.P algorithm of \cite{Auer2003}, in slight modification which allows for unknown time horizon and output of the variances. 

\begin{algorithm}[h!]
\caption{Exp3.P}
\begin{algorithmic}[1]
\STATE Initial: $T = 1$.
\STATE Input: $K$ experts, unknown rounds. In round $t$ the cost function is given by $f_t$.
\STATE Let $\gamma = \sqrt{\frac{K \ln K}{T}}$, $\alpha = \sqrt{\ln\left( \frac{KT}{\delta} \right)}$,
\FOR{$j = 1, ..., K$}
\STATE set $$w_1(j) = \exp \left( \blowUp  \alpha \gamma\sqrt{\frac{T}{K}} \right)$$
\ENDFOR
\FOR{$t = T, ..., 2T - 1$}
 \FOR{$j = 1, ..., K$}
 \STATE $$p_t(j) = (1 - \gamma) \frac{w_t(j)}{\sum_{j'} w_{t}(j')} + \frac{\gamma}{K}$$
 \ENDFOR
 \STATE pick $j_t$ at random according to $p_t(j)$, play expert $j_t$ and receive $f_t(j_t)$
 \FOR{$j = 1, ..., K$}
  \STATE Let $$\hat{f}_t(j) = \left\{ \begin{array}{ll}
         \frac{f_t(j)}{p_t(j)} & \mbox{if $j = j_t$};\\
       0& \mbox{otherwise}.\end{array} \right.$$
          \STATE And $$\hat{g}_t(j) = \left\{ \begin{array}{ll}
         \frac{1 - f_t(j)}{p_t(j)} & \mbox{if $j = j_t$};\\
       0& \mbox{otherwise}.\end{array} \right.$$
 \ENDFOR

 \STATE Update $$w_{t + 1}(j) = w_t(j) \exp \left(  \frac{\gamma}{K} \left(\hat{g}_t(j) + \frac{ \blowUp \alpha}{p_t(j)\sqrt{TK}} \right)\right)$$
 \RETURN    $$v_t(j) = \sum_{i = 1}^t \hat{f}_i(j)$$ and $$\sigma_t(j) =\sum_{i = 1}^t \frac{  \alpha}{p_i(j)\sqrt{TK}} $$
\ENDFOR
\STATE Set $T = 2T$ and Goto 3.
\end{algorithmic}
\label{lag:black_box_1}
\end{algorithm}

\section{Acknowledgements}

We would like to thank Aleksander Madry for very helpful conversations during the early stages of this work.

\newpage

\bibliography{math}
\bibliographystyle{plain}

\end{document}